\newtheorem{theorem}             {Theorem}
\newtheorem{lemma}      [theorem]{Lemma}
\newtheorem{corollary}  [theorem]{Corollary}
\newtheorem{observation}[theorem]{Observation}
\journal{Theoretical Computer Science}
\begin{document}

\begin{frontmatter}



\title{Runtime Performance of Evolutionary Algorithms for the Chance-constrained Makespan Scheduling Problem}

\author[label1,label2]{Feng Shi}
\author[label1]{Daoyu Huang}
\affiliation[label1]{organization={School of Computer Science and Engineering},
            addressline={Central South University},
            city={Changsha},
            postcode={410083},
            state={Hunan},
            country={P.R. China}}

\affiliation[label2]{organization={Xiangjiang Laboratory},
            city={Changsha},
            postcode={410205},
            state={Hunan},
            country={P.R. China}}

\author{Xiankun Yan\corref{cor1}\fnref{label3}}
\cortext[cor1]{Corresponding Author.}
\ead{xiankun.yan@adelaide.edu.au}

\author[label3]{Frank Neumann}

\affiliation[label3]{organization={Optimisation and Logistics},
            addressline={The University of Adelaide},
            city={Adelaide},
            postcode={5000},
            state={South Australia},
            country={Australia}}


\begin{abstract}
The makespan scheduling problem is an extensively studied NP-hard problem, and its simplest version is to find an allocation approach for a set of jobs with deterministic processing time to two identical machines such that the makespan is minimized. 
However, in real-life scenarios, the actual processing time of each job may be stochastic under the influence of external factors.
Thus within this paper, we first propose a chance-constrained version of the makespan scheduling problem.
Then we study the theoretical performance of RLS and (1+1) EA for three variants of the chance-constrained makespan scheduling problem.
Within those variants, our theoretical analysis implies that distinct uncertainties influence the behaviors of  the two algorithms. 
Specifically, we separately analyze the expected runtime of the two algorithms to obtain an optimal solution or almost optimal solution to the instances of the three variants. 
In addition, we further investigate the experimental performance of the two algorithms for the three variants.  

\end{abstract}

\begin{graphicalabstract}
\end{graphicalabstract}

\begin{highlights}
\item We consider a chance-constrained version of the makespan scheduling problem.
\item We study the computational complexity of some variants of the problem.
\item We study the theoretical and empirical performance of (1+1) EA and RLS for them. 

\end{highlights}

\begin{keyword}
Chance-constraint \sep Makespan Scheduling problem \sep RLS \sep (1+1) EA \sep Runtime Analysis.



\end{keyword}

\end{frontmatter}


\section{Introduction}

\subsection{Background}
To discover the reasons behind the successful applications of evolutionary algorithms in various areas including engineering~\cite{dasgupta2013evolutionary}, logistics~\cite{lin2009integrated}, and economics~\cite{tapia2007applications},
many researchers made efforts to study the theoretical performance of evolutionary algorithms for classical combinatorial optimization problems.
Much of the prior research on the theoretical performance of evolutionary algorithms in combinatorial optimization problems focused on deterministic cases, including the minimum spanning tree~\citep{NeumannW06,neumann2007randomized,kratsch2010fixed,corus2013generalized,witt2014revised,roostapour2020runtime}, vertex cover~\citep{oliveto2008analysis,oliveto2009analysis,friedrich2009analyses,friedrich2010approximating,yu2012approximation,jansen2013approximating,kratsch2013fixed,Pourhassan2015vc,pourhassan2016parameterized,pourhassan2017use}, and knapsack problems~\citep{kumar2005running,he2014theoretical,he2015analysis,wu2016impact,neumann2018runtime,roostapour2018performance,friedrich2020analysis}.
However, such problems in real-world scenarios involve stochastic elements or occur under different dynamic circumstances.
Hence in recent years,
people preferred to explore the theoretical performance of evolutionary algorithms in dynamic and stochastic combinatorial optimization problems~\citep{he2014theoretical,lissovoi2017runtime,ShiSFKN19,neumann2020analysis,ShiNW21,roostapour2022pareto}. The works obtained a series of theoretical results that enhance and expand the comprehension of those algorithms.  

\emph{Chance-constrained optimization problems}~\citep{charnes1959chance,miller1965chance,iwamura1996genetic,poojari2008genetic,li2008chance}, an important class of stochastic optimization problems,
consider that the constraints of the given optimization problem may be influenced by the stochastic components.
For such problems, the goal is to optimize the objective function under the constraints that can be violated only with a small probability. 
A basic technique for solving chance-constrained optimization problems is to convert the stochastic constraints to their respective deterministic equivalents according to the predetermined confidence level.
Recently, some researchers in the area of evolutionary computing began to focus on \emph{chance-constrained optimization problems} and analyze the theoretical performance of evolutionary algorithms for them.
 
The makespan scheduling problem (abbr. MSP)~\citep{BLAZEWICZ198311} is one of the most popular combinatorial optimization problems and has been well-studied particularly in the area of theoretical computer science~\cite{lenstra1990approximation,shmoys1993approximation,witt2005worst}.
The problem considers two identical machines and a set of jobs with deterministic processing time,
and the goal is to allocate the jobs to the two machines such that the makespan (i.e., completion time) to finish all these jobs is minimized (specifically, the makespan is the larger one of the loads on the two machines, where the load on a machine is the sum of the processing time of the jobs allocated to it).\footnote{There are lots of versions of the makespan scheduling problem, but we only consider the simplest version in this paper. Please refer to~\citep{graham1966bounds,sahni1976algorithms,hochbaum1987using,numata1988approximate} for its approximation algorithms, and ~\citep{sahni1976algorithms,numata1988approximate} for its exact algorithms.}
In real-life scenarios, the actual processing time of each job may be stochastic under the influence of external factors. 
Therefore, a chance-constrained version of MSP, named \emph{chance-constrained makespan scheduling problem} (abbr. CCMSP), is proposed in the paper.
The goal of CCMSP is to minimize a deterministic makespan value and subject to the probability that the actual makespan exceeds the deterministic makespan is no more than an acceptable threshold.

\subsection{Related work} 

In previous research, a few theoretical results have been obtained about the performance of evolutionary algorithms for MSP and chance-constrained problems.
One of the first runtime analysis of evolutionary algorithms for MSP with two identical machines was carried out by Witt~\cite{witt2005worst}, who studied the approximation and average-case behavior of randomized local search (abbr. RLS) and the (1+1) EA for the problem.
Later Gunia~\cite{gunia2005analysis} extended the results to MSP with a constant number of identical machines.  Sutton and Neumann \cite{sutton2012parameterized} gave the parameterized runtime analysis of RLS and the (1+1) EA for MSP with two identical machines, with respect to three different parameters.
Neumann and Witt \cite{neumann2015runtime} proposed the dynamic version of MSP with two identical machines and analyzed the performance of RLS and the (1+1) EA for it. 

Xie et al.~\cite{xie2019evolutionary} studied single- and multi-objective evolutionary algorithms for the chance-constrained knapsack problem, where the weights of items are stochastic but the profits are deterministic, and they used Chebyshev's inequality and Chernoff bound separately to estimate the constraint violation probability of a given solution.
Later on, Neumann and Sutton \cite{neumann2019runtime} followed the work of Xie et al.~\cite{xie2019evolutionary} and analyzed several special cases of this problem, getting insight into how the structure of the chance constraint influences the runtime behavior of the (1+1) EA. 
Note that the chance-constrained knapsack problem studied in the above two works does not consider the correlation among the weights of items. 
Thus recently Xie et al.~\cite{xie2021runtime} extended their work and analyzed the expected optimization time of RLS and the (1+1) EA for the problem with correlated uniform weights. 
Additionally, Neumann~\cite{neumann2020optimising} presented the first runtime analysis of multi-objective evolutionary algorithms for chance-constrained submodular functions and showed that the algorithm GSEMO obtains the same worst-case performance guarantees as greedy algorithms~\citep{doerr2020optimization}.

Besides these theoretical works, Xie et al.~\cite{xie2020specific} introduced a problem-specific crossover operator and the heavy-tail mutation operator~\citep{doerr2017fast} to evolutionary algorithms to improve their performance for the chance-constrained knapsack problem, and compared their experimental performance.
Moreover, Assimi et al.~\cite{assimi2020evolutionary} studied the experimental performance of a single-objective EA for the dynamic version of the chance-constrained knapsack problem, where each item has an uncertain weight while its profit is deterministic, and the knapsack capacity changes over time every a fixed period of iterations. 
Recently, Neumann et al. ~\cite{NeumannXN22} considered another chance-constrained version of the knapsack problem, where the weights of items are deterministic but the profits are stochastic. Within their investigation, they used the Chebyshev's inequality and Hoeffding bound separately to estimate the constraint violation probability of a given solution.
Besides, they examined three evolutionary algorithms of different types and compared their experimental performance on the problem.

\subsection{Our contribution} 
Within this paper, CCMSP considers two identical machines and several groups of jobs, where the jobs in the same group have the same expected processing time and variance, and a covariance that quantifies the shared variation in their actual processing times if they are allocated to the same machine (i.e., the actual processing time of two jobs in the same group are independent if they are allocated to different machines, and that of the jobs in different groups are independent).
We investigate the expected runtime of RLS and the (1+1) EA for CCMSP. 
More specifically, we consider two special variants of CCMSP:
CCMSP-1, all jobs have the same expected processing time and variance, and all groups have the same covariance and the same \emph{even} size;
CCMSP-2, all jobs have the same expected processing time and variance, the groups have the same covariance but different sizes. 
For CCMSP-1, we prove that it is polynomial-time solvable by showing that RLS and the (1+1) EA can obtain an optimal solution to any instance $I_1$ of CCMSP-1 in expected runtime $O(n^2/m)$ and $O((k+m)n^2)$, respectively, where $n$ and $k$ are the numbers of jobs and groups considered in $I_1$, and $m = n/k$.
For CCMSP-2, the size difference among groups makes the discussion complicated, thus two simplified variants of CCMSP-2 named CCMSP-2$^+$ and CCMSP-2$^-$ are proposed: CCMSP-2$^+$ assumes that the sum of the variances and covariances of the jobs allocated to the same machine cannot be greater than the expected processing time of a job,
no matter how many jobs are allocated to the machine;
CCMSP-2$^-$ assumes that the sum of the variances and expected processing time of the jobs cannot be greater than their covariances.
We prove that CCMSP-2$^+$ is NP-hard and that RLS can obtain an optimal solution to any instance $I_2^+$ of CCMSP-2$^+$ in expected polynomial time if the total number of jobs considered in $I_2^+$ is odd;
otherwise (i.e., the total number of jobs considered is even), a solution close to the optimal one to $I_2^+$. 
We show that CCMSP-2$^-$ is NP-hard as well and analyze the approximation ratio of the local optimal solution obtained by  RLS.

In addition, we study the experimental performance of the two algorithms.
For each variant, we specify a parameter-triple $(k,n,c)$, in which $c$ has 3 settings, and each of $k$ and $n$ has 6 settings (i.e., there are 108 instances of each variant that are constructed). Then we study the number of iterations that the algorithms RLS and (1+1) EA require to get an optimal or locally optimal solution for each instance and find that the experimental results coincide with the theoretical ones obtained for them. 
Specifically, for each constructed instance $I$ of CCMSP-2$^+$ with the even number of jobs, a corresponding instance $I'$ of CCMSP-2$^+$ with the odd number of jobs is constructed by introducing an extra job to an arbitrary group of $I$, for ease of comparison. The experimental result shows that the tiny difference between the two instances causes a huge gap between the experimental performance of RLS on them, which coincides with the computational hardness of CCMSP-2$^+$ with the odd number of jobs mentioned above.

This paper extends and refines the conference version~\citep{DBLP:conf/ppsn/ShiYN22}.
Firstly, the study of CCMSP-2$^-$ and the experimental analysis of the algorithms RLS and (1+1) EA are newly added (Subsection 5.3 and Section 6). 
The supplementary research provides compelling evidence of the critical need to address the influence of stochastic elements.
The results indicate varied levels of variance lead to discrepancies in algorithm performance.
Secondly, the theoretical performance of RLS for the instances of CCMSP-2$^+$ with the even number of jobs is rewritten (Subsection 5.2).
Finally, more details and illustrations for the theoretical performance of the two algorithms are included (e.g., Section 4 and Section 5). 

The paper is structured as follows. 
Section 2 introduces the related definitions and the formulation of CCMSP. 
Section 3 presents the two considered algorithms, namely, RLS and the (1+1) EA, and Section 4 analyzes their theoretical performance to obtain an optimal solution to the instances of CCMSP-1. 
Section 5 considers the theoretical performance of RLS for solving the instances of CCMSP-2$^+$ and CCMSP-2$^-$. 
The experimental performance of RLS and the (1+1) EA for CCMSP-1, CCMSP-2$^+$ and CCMSP-2$^-$ is studied in Section 6.
Section 7 is used to conclude this work.

\section{Preliminaries}

The notation $[x, y]$ ($x$ and $y$ are two integers with $x \le y$) denotes the set containing all integers ranging from $x$ to $y$. 
Let $[x, y) = [x, y] \setminus \{y\}$ and $(x, y] = [x, y] \setminus \{x\}$.
Consider two identical machines $M_0$ and $M_1$, and $k$ groups of jobs, where each group $G_i$ ($i \in [1,k]$) has $m_i$ jobs. 
That is, there are $n = \sum_{i=1}^k m_i$ jobs in total.
W.l.o.g., assume that the sizes of the $k$ groups are in non-decreasing order (i.e., $m_1 \le m_2 \le \ldots \le m_k$).
The $j$-th job in group $G_i$ ($i \in [1,k]$ and $j \in [1, m_i]$), denoted by $b_{ij}$, has actual processing time $p_{ij}$ with expected value $E[p_{ij}] = a_{ij} > 0$ and variance $\sigma^2_{ij} \ge 0$. 
Additionally, for any two jobs of the same group $G_i$, if they are allocated to the same machine, then their actual processing time are correlated with each other by a covariance $c_i \ge 0$; otherwise, independent.
For CCMSP, the objective is to compute an allocation of the $n$ jobs to the two machines that minimizes the makespan $M$ such that the probabilities of the \emph{loads} on $M_0$ and $M_1$ exceeding $M$ are no more than a given acceptable threshold $0 \le \gamma < 1$, where the load on $M_t$ ($t \in [0,1]$) is the sum of the actual processing time of the jobs allocated to $M_t$.

An allocation (simply called solution in the remaining text) $x$ to an instance of CCMSP, is encoded as a bit-string with length $n$ in this paper, $x = (x_{11}...x_{1m_1}...x_{i1}...x_{im_i}...x_{k1}...x_{km_k}) \in \{0,1\} ^n$, 
where the job $b_{ij}$ is allocated to machine $M_0$ if $x_{ij} = 0$;
otherwise, allocated to machine $M_1$.
For ease of notation, denote by $M_t(x)$ the set of jobs allocated to $M_t$ ($t \in [0,1]$), with respect to $x$, and by $l_t(x) = \sum_{b_{ij} \in M_t(x)} p_{ij}$ the \emph{load} on $M_t$. 
Let $\alpha_i(x) = |M_0(x) \cap G_i|$ and $\beta_i(x) = |M_1(x) \cap G_i|$ for all $i \in [1,k]$.
The CCMSP can be formulated as:
\begin{align*}
    &\textbf{Minimize}  \qquad  M \\
    &\textbf{Subject to} \quad  \Pr(l_t(x) > M) \leq \gamma \ \ \textrm{for all} \ \ t \in [0,1]. 
\end{align*}


Since $E[p_{ij}] = a_{ij}$, the expected value of $l_t(x)$ is
$E[l_t(x)] = \sum_{b_{ij} \in M_t(x)} a_{ij}$.
Considering the variance $\sigma^2_{ij}$ of each job $b_{ij}$ and the covariance among the jobs of the same group if they are allocated to the same machine, the variance of $l_t(x)$ is 
$$var[l_t(x)] = \sum_{b_{ij} \in M_t(x)} \sigma^2_{ij} + cov[l_t(x)],$$
where $ cov[l_t(x)] = \sum_{i=1}^{k} 2c_i \times \binom{|M_t(x) \cap G_i|}{2}.$
Note that $\binom{|M_t(x) \cap G_i|}{2} = 0$ if $0 \le |M_t(x) \cap G_i| \le 1$. 


For the probability $Pr(l_t(x) > M)$ with $t \in [0,1]$, following the work of~\cite{xie2019evolutionary,xie2021runtime}, we use the one-sided Chebyshev's Inequality (cf. Theorem~\ref{thm:One-sided Chebyshev's Inequality}) to construct a usable surrogate of the chance-constraint.

\begin{theorem}
\label{thm:One-sided Chebyshev's Inequality}
(one-sided Chebyshev's Inequality). \textsl{Let $X$ be a random variable with expected value $E[X]$ and variance $var[X]$. 
Then for any $\Delta \in \mathbb{R}^+$,} $$\Pr(X > E[X] + \Delta) \leq \frac{var[X]}{var[X] + \Delta^2}.$$
\end{theorem}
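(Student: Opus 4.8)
The plan is to reduce this one-sided tail bound to an application of Markov's inequality after a centering shift. First I would set $Y = X - E[X]$, so that $E[Y] = 0$ and $var[Y] = var[X]$; writing $\sigma^2 = var[X]$ for brevity, the claim becomes $\Pr(Y > \Delta) \le \sigma^2/(\sigma^2 + \Delta^2)$. Since $\{Y > \Delta\} \subseteq \{Y \ge \Delta\}$, it suffices to bound $\Pr(Y \ge \Delta)$.

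The key step is to introduce a free parameter $u > 0$ and exploit the elementary pointwise inequality $\mathbf{1}[y \ge \Delta] \le (y + u)^2/(\Delta + u)^2$, which holds for every real $y$: when $y < \Delta$ the left side is $0$ and the right side is nonnegative, while when $y \ge \Delta$ we have $y + u \ge \Delta + u > 0$, so $(y+u)^2 \ge (\Delta+u)^2$ and the right side is at least $1$. Taking expectations and using $E[Y] = 0$, $E[Y^2] = \sigma^2$ yields
$$\Pr(Y \ge \Delta) \le \frac{E[(Y + u)^2]}{(\Delta + u)^2} = \frac{\sigma^2 + u^2}{(\Delta + u)^2}.$$
Equivalently, one may apply Markov's inequality directly to the nonnegative variable $(Y+u)^2$, using that $Y \ge \Delta$ forces $(Y+u)^2 \ge (\Delta+u)^2$.

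The final step is to optimize the bound over $u > 0$. Minimizing $g(u) = (\sigma^2 + u^2)/(\Delta + u)^2$, the stationarity condition simplifies (after clearing denominators) to $u\Delta = \sigma^2$, giving the admissible minimizer $u = \sigma^2/\Delta > 0$; substituting it back and cancelling the common factor $\sigma^2 + \Delta^2$ collapses the bound exactly to $\sigma^2/(\sigma^2 + \Delta^2)$, which is the claimed inequality. For this classical statement there is no substantive obstacle; the only points needing a little care are verifying $\Delta + u > 0$ so that squaring is monotone, confirming that $u = \sigma^2/\Delta$ is a genuine minimum rather than merely a critical point, and disposing of the degenerate case $\sigma^2 = 0$ separately (where $\Pr(Y \ge \Delta) = 0$ and the bound holds trivially).
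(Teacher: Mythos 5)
Your proof is correct and complete: the centering $Y = X - E[X]$, the pointwise bound $\mathbf{1}[y \ge \Delta] \le (y+u)^2/(\Delta+u)^2$ (equivalently, Markov applied to $(Y+u)^2$), and the optimization at $u = \sigma^2/\Delta$ are exactly the standard derivation of Cantelli's inequality, and your handling of the edge cases ($\Delta + u > 0$, the sign analysis of $g'$, and $\sigma^2 = 0$) is sound. The paper itself states this theorem as a known classical result and offers no proof, so there is nothing to compare against; your argument is the canonical one and would serve as a valid self-contained justification.
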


First, we define $l'_t(x):= \sqrt{\frac{(1-\gamma)}{\gamma} var[l_t(x)]} + E[l_t(x)] $. 
Then, by the one-sided Chebyshev's Inequality, upper bounding the probability of the actual makespan exceeding $M$ by $\gamma$ indicates that for all $t \in [0,1]$,
$$
   \Pr(l_t(x) >  M) \leq\frac{var[l_t(x)]}{var[l_t(x)]+(M-E[l_t(x)])^2} \leq \gamma \iff l'_t(x) \leq M.
$$
Note that as the value of $\Delta$ considered in the one-sided Chebyshev's Inequality is positive, thus $M - E[l_t(x)]$ is greater than 0 by default, and the left right double arrow (i.e., the if and only if relationship) given in the above expression holds.  
Summarizing the above analysis gets that $\max \{\Pr(l_0(x) >  M), \Pr(l_1(x) >  M)\} \leq \gamma$ holds if and only if 
$$L(x) =  \max \{l_0'(x),l_1'(x)\} \leq M.$$
In other words, $L(x)$ is the tight lower bound for the value of $M$, if using the surrogate of the chance constraint by the One-sided Chebyshev's Inequality. 
Consequently, $l'_t(x)$ can be treated as a \emph{new} measure for the load on the machine $M_t$, and the goal of CCMSP is simplified to minimize the value of $L(x)$. 
Let $t(x) = \arg_{t \in [0,1]} \max \{l_0'(x),l_1'(x)\}$.



It can be derived that CCMSP is NP-hard, as CCMSP is the same as MSP if $\sigma^2_{ij} = c_i = 0$ for all $i \in [1,k]$ and $j \in [1,m_i]$, and $\gamma = 0$. 
Witt~\cite{witt2005worst} showed that the classical RLS and the (1+1) EA for MSP may get trapped into a bad local optima caused by the different processing time of the jobs considered, such that they require $O(n^{\Omega(n)})$ runtime to escape it. 
Observe that the same issue also applies to the two algorithms for CCMSP. 
Thus within the paper, we only study the two specific variants of CCMSP given below, in which the jobs considered have the same expected processing time.

\smallskip

{\bf CCMSP-1}.
All the $n$ jobs considered have the same expected processing time $a_{ij}=a > 0$ and variance $\sigma^2_{ij}=d \ge 0$, and the $k$ groups have the same covariance $c \ge 0$ and size $m > 0$ (i.e., $c_i = c$ and $m_i = m$ for all $i \in [1,k]$).
Moreover, $m$ is even.

\smallskip

{\bf CCMSP-2}. 
All the $n$ jobs considered have the same expected processing time $a_{ij}=a > 0$ and variance $\sigma^2_{ij}=d > 0$, and the $k$ groups have the same covariances $c \ge 0$ (i.e., $c_i = c$ for all $i \in [1,k]$).
However, the $k$ groups may have different sizes (may be even or odd).

\smallskip

Given an instance $I$ of CCMSP-1 or CCMSP-2 and an arbitrary solution $x$ to $I$, if $||M_0(x)| - |M_1(x)|| \le 1$ (i.e., $|M_0(x)| = |M_1(x)|$ if $n$ is even), then $x$ is an \emph{equal solution}; if $||M_0(x)| - |M_1(x)|| \le 1$, and $|\alpha_i(x) - \beta_i(x)| \le 1$ for all $i \in [1,k]$ (i.e., $\alpha_i(x) = \beta_i(x)$ if $m_i$ is even), then $x$ is a \emph{balanced solution}. 

\section{Algorithms}

Within the paper, we study the performance of the classical randomized local search (abbr. RLS, given as Algorithm~\ref{alg:RLS}) and (1+1) EA (given as Algorithm~\ref{alg:1+1}) for the two variants of CCMSP. 
Both algorithms are primarily used for the theoretical analysis of variable combinatorial problems due to their robust yet simple mechanisms~\cite{neumann2006minimum,witt2014revised,oliveto2009analysis,witt2005worst}.
The two algorithms run in a similar way, randomly generating a new offspring based on the current maintained solution and replacing it if the offspring is not worse than it in terms of its fitness. 
The difference between the two algorithms is the way to generate offspring.
RLS chooses one bit of the current solution uniformly at random and flips it with a probability of 1/2; otherwise, it chooses two bits of the current solution uniformly at random and flips them.
In terms of the (1+1) EA,
it flips each bit of the current maintained solution with probability $1/n$.

The fitness function considered in the two algorithms is the natural one, 
$$f(x) = L(x) = \max \{l_0'(x),l_1'(x)\}.$$

\begin{algorithm2e}[t]
\caption{RLS}
\label{alg:RLS}
\SetAlgoSkip{tinyskip}
    choose $x \in \{0,1\}^n$ uniformly at random\;
    \While{stopping criterion not met}{
        choose $b \in \{0,1\}$ uniformly at random\;
        \If {$b=0$}{
        $y$ $\gets$ flip one bit of $x$ chosen uniformly at random\;}
        \Else{choose $(i,j) \in  \{(k,l)| 1 \le k < l \le n\}$ uniformly at random\;
        $y$ $\gets$ flip the $i$-th and $j$-th bits of $x$\;}
        \If {$f(y) \leq f(x)$}{$x \gets y$\;}
    }
\end{algorithm2e}


\begin{algorithm2e}[t]
\caption{(1+1) EA}
\label{alg:1+1}
\SetAlgoSkip{tinyskip}
    choose $x \in \{0,1\}^n$ uniformly at random\;
    \While{stopping criterion not met}{
        $y$ $\gets$ flip each bit of $x$ independently with probability $1/n$\;
        \If {$f(y) \leq f(x)$}{$x \gets y$\;}
    }
\end{algorithm2e}

\section{Theoretical Performance for CCMSP-1}

The section starts with a lemma that will be used throughout this section.

\begin{lemma}
\label{lem:1}
$\binom{\lfloor \frac{x+y}{2} \rfloor}{2} + \binom{\lceil \frac{x+y}{2} \rceil}{2}  \le \binom{x}{2} + \binom{y}{2} \le \binom{x + y}{2}$ holds for any two natural numbers $x$ and $y$.
\end{lemma}
\begin{proof}
    The binomial coefficient $\binom{x}{2}$ equals $x(x-1)/2$. 
    Then we have
    \begin{align*}
        &\binom{x}{2} + \binom{y}{2} - \left(\binom{\lfloor \frac{x+y}{2} \rfloor}{2} + \binom{\lceil \frac{x+y}{2} \rceil}{2} \right) \\
        & = \frac{x(x-1)}{2} + \frac{y(y-1)}{2} - \left( \frac{\left\lfloor\frac{x+y}{2}\right\rfloor (\left\lfloor\frac{x+y}{2}\right\rfloor-1)}{2} + \frac{\left \lceil \frac{x+y}{2} \right\rceil (\left\lceil \frac{x+y}{2} \right\rceil-1))}{2}   \right) \\ 
        & = \frac{x^2+y^2 - \left({\left\lfloor\frac{x+y}{2}\right\rfloor}^2 + {\left\lceil \frac{x+y}{2} \right\rceil}^2\right)}{2}.
    \end{align*}
    If $x+y$ is even, then the above expression equals $(x-y)^2/2 \ge 0$; otherwise, it equals $\frac{(x-y)^2-1}{2} \ge 0$.
    By the similar reasoning, we can derive that $\binom{x + y}{2}-\left(\binom{x}{2} + \binom{y}{2}\right)\geq 0$. The proof is done.  
\end{proof}


Consider an instance $I_1 = (a,c,d,\gamma,k,m)$ of CCMSP-1 and a solution $x$ to $I_1$. 
As the groups considered in $I_1$ have the same size $m$, there is a variable $-\frac{m}{2} \le \delta_i(x) \le \frac{m}{2}$ such that $\alpha_i(x) = \frac{m}{2} + \delta_i(x)$ and $\beta_i(x) = \frac{m}{2} - \delta_i(x)$ for any $i \in [1,k]$. Then with the help of Lemma~\ref{lem:1}, we have the following lemma to investigate the relationship between the loads on the machines $M_0$ and $M_1$.

\begin{lemma}
\label{lem:inequal number}
For any solution $x$ to the instance $I_1 = (a,c,d,\gamma,k,m)$ of CCMSP-1, if $|M_0(x)| > |M_1(x)|$ (resp., $|M_1(x)| > |M_0(x)|$) then $l'_0(x) > l'_1(x)$ (resp., $l'_1(x) > l'_0(x)$); if $x$ is an equal solution then $L(x) = l'_0(x) = l'_1(x)$.
\end{lemma}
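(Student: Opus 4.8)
The plan is to reduce everything to two quantities: the \emph{expected} loads $E[l_0(x)], E[l_1(x)]$ and the \emph{variances} $var[l_0(x)], var[l_1(x)]$, since $l'_t(x) = \sqrt{\frac{1-\gamma}{\gamma}\,var[l_t(x)]} + E[l_t(x)]$ is monotone non-decreasing in each of them (I will take $\gamma>0$ so the surrogate is well defined; the degenerate case is not relevant here). Because every job has the same expectation $a$ and the same variance $d$, and every group the same covariance $c$ and even size $m$, I would write $\alpha_i(x) = \tfrac{m}{2}+\delta_i(x)$ and $\beta_i(x) = \tfrac{m}{2}-\delta_i(x)$ as the excerpt sets up, and put $D(x) := \sum_{i=1}^{k}\delta_i(x)$, so that $|M_0(x)| - |M_1(x)| = 2D(x)$ and in particular $|M_0(x)|>|M_1(x)| \iff D(x)>0$.

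Next I would compute the two differences explicitly. For the expectations, $E[l_0(x)] - E[l_1(x)] = a\bigl(|M_0(x)|-|M_1(x)|\bigr) = 2a\,D(x)$. For the variances, using $var[l_t(x)] = d\,|M_t(x)| + 2c\sum_{i=1}^{k}\binom{|M_t(x)\cap G_i|}{2}$, the only non-routine step is the identity
\[
\binom{\tfrac{m}{2}+\delta_i(x)}{2} - \binom{\tfrac{m}{2}-\delta_i(x)}{2} = (m-1)\,\delta_i(x),
\]
which follows from $\binom{u}{2}=\tfrac{u(u-1)}{2}$ by direct expansion. Summing over $i$ and combining with the $d\,|M_t(x)|$ terms gives $var[l_0(x)] - var[l_1(x)] = 2D(x)\bigl(d + c(m-1)\bigr)$.

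Now the conclusion is immediate. If $|M_0(x)|>|M_1(x)|$, then $D(x)>0$, so $E[l_0(x)] - E[l_1(x)] = 2a\,D(x) > 0$ (strict, since $a>0$), and $var[l_0(x)] - var[l_1(x)] = 2D(x)\bigl(d+c(m-1)\bigr) \ge 0$ because $d\ge 0$, $c\ge 0$ and $m\ge 1$. By monotonicity of $l'_t$ in expectation and variance, $l'_0(x) > l'_1(x)$; the case $|M_1(x)|>|M_0(x)|$ is symmetric. If $x$ is an equal-solution then $D(x)=0$, hence $E[l_0(x)]=E[l_1(x)]$ and $var[l_0(x)]=var[l_1(x)]$, so $l'_0(x)=l'_1(x)$ and $L(x)=\max\{l'_0(x),l'_1(x)\}=l'_0(x)=l'_1(x)$. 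The only real ``obstacle'' is carrying out the binomial identity cleanly and being careful that the variance inequality is only weak (it can be an equality when $d=c=0$), so that the strictness of the claim must be carried entirely by the expectation term; I would emphasize that point in the write-up.
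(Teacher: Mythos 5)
Your proposal is correct and follows essentially the same route as the paper's proof: both decompose $l'_t$ into expectation and variance, use the identity $\binom{m/2+\delta_i}{2}-\binom{m/2-\delta_i}{2}=(m-1)\delta_i$, and sum over groups to relate the variance gap to $|M_0(x)|-|M_1(x)|$. Your write-up is in fact slightly more careful than the paper's on one point: since $d\ge 0$ and $c\ge 0$ may both vanish, the variance comparison is only weak, and the strictness of $l'_0(x)>l'_1(x)$ must indeed be carried by the expectation term $2aD(x)>0$, as you note (the paper asserts strict inequalities for the $d$-sum and covariance sum, which fail in the degenerate case $d=c=0$ even though the conclusion still holds).
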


\begin{proof}
First of all, we show the claim that if $|M_0(x)| > |M_1(x)|$ then $l'_0(x) > l'_1(x)$. 
Recall that all jobs have the same expected processing time $a$ and variance $d$, thus $E[l_0(x)] > E[l_1(x)]$ and  
\begin{equation}
\label{equ:inequal number 1}
 |M_0(x)| d >  |M_1(x)| d.
\end{equation}
Now we show that $cov[l_0(x)] > cov[l_1(x)]$, i.e., 
\begin{equation}
\label{equ:inequal number 2} 
\binom{\alpha_1}{2} + \ldots + \binom{\alpha_k}{2} > \binom{\beta_1}{2} + \ldots + \binom{\beta_k}{2}.
\end{equation}
For any specific $i \in [1,k]$, $\binom{\alpha_i}{2} - \binom{\beta_i}{2} = \binom{m/2 + \delta_i}{2} - \binom{m/2 - \delta_i}{2}
= (m-1)\delta_i$.
Then
\begin{eqnarray}
\label{equ:inequal number 4}
\begin{aligned}
\sum_{i=1}^k \left(\binom{\alpha_i}{2} - \binom{\beta_i}{2}\right) &=  \sum_{i=1}^k (m-1)\delta_i = (m-1) \sum_{i=1}^k\delta_i \\
&= \frac{m-1}{2} \left(\sum_{i=1}^{k} \alpha_i(x) - \sum_{i=1}^{k}\beta_i(x) \right) \\
&= \frac{m-1}{2} \left( |M_0(x)| -  |M_1(x)| \right) > 0.
\end{aligned}
\end{eqnarray}
Thus Inequality~(\ref{equ:inequal number 2}) holds.
Combining Inequalities~(\ref{equ:inequal number 1}) and~(\ref{equ:inequal number 2}) gives that $var[l_0(x)] > var[l_1(x)]$. As the considered threshold $\gamma$ is a fixed constant, summarizing the above discussion shows the claim. 
Because of symmetry, we can also derive that if $|M_0(x)| < |M_1(x)|$ then $l'_0(x) < l'_1(x)$. 

Now we show that if $|M_0(x)| = |M_1(x)|$ then $l'_0(x) = l'_1(x)$.
Observe that $E[l_0(x)]= E[l_1(x)]$, $\sum_{b_{ij} \in M_0(x)} d = \sum_{b_{ij} \in M_1(x)} d$ and $\sum_{i=1}^k \delta_i = 0$.
Combining $\sum_{i=1}^k \delta_i = 0$ and Expression~(\ref{equ:inequal number 4}) gives that 
\begin{equation*}
\label{equ:equal number 1} 
\binom{\alpha_1}{2} + \ldots + \binom{\alpha_k}{2} = \binom{\beta_1}{2} + \ldots + \binom{\beta_k}{2}.
\end{equation*}
Therefore, $var[l_0(x)] = var[l_1(x)]$ and $l'_0(x) = l'_1(x) = L(x)$.
\end{proof}

Based on Lemma~\ref{lem:inequal number}, we have the property of the optimal solution to the instance $I_1$ of CCMSP-1, which is introduced in the following Lemma~\ref{lem:equal number}.

\begin{lemma}
\label{lem:equal number}
For any solution $x$ to the instance $I_1 = (a,c,d,\gamma,k,m)$ of CCMSP-1, if $x$ is a balanced solution then $L(x) = l'_0(x) = l'_1(x)$ has the minimum value; more specifically, $x$ is an optimal solution to $I_1$ if and only if $x$ is a balanced solution to $I_1$.
\end{lemma}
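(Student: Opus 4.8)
The plan is to turn the minimisation of $L$ into two nested combinatorial minimisations, using Lemma~\ref{lem:inequal number} together with Observation~\ref{obs:1}. Let $x$ be any solution and, by symmetry, assume $|M_0(x)|\ge|M_1(x)|$; then Lemma~\ref{lem:inequal number} gives $L(x)=l'_0(x)$. Since every job has expected processing time $a$ and variance $d$ and $c_i=c$ for all $i$, we can write
\[
l'_0(x)=a\,|M_0(x)|+\sqrt{\frac{1-\gamma}{\gamma}\Bigl(d\,|M_0(x)|+2c\sum_{i=1}^{k}\binom{\alpha_i(x)}{2}\Bigr)},
\]
which is non-decreasing in $|M_0(x)|$ and in $\sum_{i=1}^{k}\binom{\alpha_i(x)}{2}$, and strictly increasing in $|M_0(x)|$ because $a>0$. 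As $n=km$ with $m$ even, $n$ is even, so $|M_0(x)|\ge n/2$, with equality precisely when $x$ is an equal-solution; hence any optimal solution must be an equal-solution.

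Next I would minimise $\sum_{i=1}^{k}\binom{\alpha_i(x)}{2}$ over all equal-solutions, i.e.\ over all integer vectors $(\alpha_1,\dots,\alpha_k)$ with $0\le\alpha_i\le m$ and $\sum_i\alpha_i=n/2=km/2$. The claim is that the minimum is attained exactly at $\alpha_1=\dots=\alpha_k=m/2$, which is feasible since $m$ is even. Given any other feasible vector, some pair of indices satisfies $\alpha_p<m/2<\alpha_q$; replacing $(\alpha_p,\alpha_q)$ by $\bigl(\lfloor\frac{\alpha_p+\alpha_q}{2}\rfloor,\lceil\frac{\alpha_p+\alpha_q}{2}\rceil\bigr)$ does not increase the sum, by the left inequality of Observation~\ref{obs:1}, and iterating this balancing step drives the vector to the all-$m/2$ configuration, so $\sum_i\binom{\alpha_i(x)}{2}\ge k\binom{m/2}{2}$ for every equal-solution, with the common value $k\binom{m/2}{2}$ for every balanced-solution. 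For the ``only if'' direction I would use that $\binom{\cdot}{2}$ is strictly convex on the integers (its second difference equals $1$): any single deviation $\alpha_p=m/2-1<m/2+1=\alpha_q$ can be strictly improved, so the all-$m/2$ vector is the unique minimiser. Translating back, an equal-solution has all $\alpha_i=m/2$ iff it is a balanced-solution ($m$ being even), and then $l'_0(x)=l'_1(x)$ by Lemma~\ref{lem:inequal number}.

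Combining the two steps yields the lemma: a balanced-solution is an equal-solution that simultaneously minimises $|M_0(\cdot)|$ (at $n/2$) and $\sum_i\binom{\alpha_i(\cdot)}{2}$ (at $k\binom{m/2}{2}$), hence attains the minimum value of $L$, and all balanced-solutions attain the same value; conversely an optimal solution is forced to be an equal-solution by the first step and then forced to have every $\alpha_i=m/2$ by the second, so it is balanced. Since a balanced-solution manifestly exists (assign $m/2$ jobs of each group to each machine), the optimum is realised exactly by the balanced-solutions. I expect the only delicate point to be the equality analysis in the second step: showing that balanced-solutions are the \emph{only} optimal equal-solutions needs the strict convexity of $\binom{\cdot}{2}$ rather than just the plain inequality recorded in Observation~\ref{obs:1}; everything else follows directly from Lemma~\ref{lem:inequal number} and the monotonicity of the expression for $l'_0(x)$ above.
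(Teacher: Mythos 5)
Your proof is correct and follows essentially the same route as the paper's: reduce to the fuller machine via Lemma~\ref{lem:inequal number}, observe that $L$ is increasing in both $|M_0(x)|$ and $\sum_{i}\binom{\alpha_i(x)}{2}$, and show that a balanced-solution minimises both quantities simultaneously (the paper performs your second minimisation by writing $\alpha_i=m/2+\delta_i$ and reading off the term $\sum_i\delta_i^2\ge 0$, which is exactly your strict-convexity/exchange argument). The one point to make explicit is that the lower bound $\sum_{i}\binom{\alpha_i(x)}{2}\ge k\binom{m/2}{2}$ must also be checked for non-equal solutions with $|M_0(x)|>n/2$ --- strict monotonicity in $|M_0(x)|$ alone does not rule them out, since a smaller covariance term could in principle compensate for a larger machine --- but this follows from the same convexity (the paper handles it via $\sum_i\delta_i\ge 1$ and $\sum_i\delta_i^2\ge 1$), so it is a one-line patch rather than a real gap.
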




\begin{proof}
To show that $L(x) = l'_0(x) = l'_1(x)$ has the minimum value if $\alpha_i = \beta_i = m/2$ for all $i \in [1,k]$, we first prove the claim that the minimum value of $L(x)$ if $|M_0(x)| \neq |M_1(x)|$ is larger than that of $L(x)$ if $|M_0(x)| = |M_1(x)|$.

For the minimum value of $L(x)$ if $|M_0(x)| \neq |M_1(x)|$, w.l.o.g., we assume that $|M_0(x)| > |M_1(x)|$, i.e., $\sum_{i=1}^k \delta_i \ge 1$. 
Besides we have $\sum_{i=1}^k \delta_i^2 \ge 1$.
By Lemma~\ref{lem:inequal number}, $L(x) = l'_0(x) > l'_1(x)$, and
\begin{eqnarray*}
\label{equ:equal number 2}
cov[l_0(x)] / c &=& \sum_{i=1}^k \alpha_i (\alpha_i -1) = \sum_{i=1}^k (m/2 + \delta_i)^2 - \sum_{i=1}^k (m/2 + \delta_i) \\
&=& km^2/4 + (m-1) \sum_{i=1}^k \delta_i +  \sum_{i=1}^k \delta_i^2 - n/2 \\
&\ge& km^2/4 + (m-1) +  1 - n/2 \\
&\ge& km^2/4 + m - n/2.
\end{eqnarray*}
Thus if $|M_0(x)| > |M_1(x)|$ then $var[l_0(x)] \ge d(n/2 + 1) + c(km^2/4 + m - n/2)$ and
\begin{align*}
    L(x) &= l'_0(x) \\
    &\ge \sqrt{\frac{(1-\gamma)}{\gamma} \left( d(n/2 + 1) + c(km^2/4 + m - n/2) \right)} + (n/2 + 1)a = A.
\end{align*}
$$$$
That implies the minimum value of $L(x)$ if $|M_0(x)| > |M_1(x)|$ is at least $A$.

For the minimum value of $L(x)$ if $|M_0(x)| = |M_1(x)|$, we have that $\sum_{i=1}^k \delta_i = 0$, $L(x) = l'_0(x) = l'_1(x)$ due to Lemma~\ref{lem:inequal number}, and
\begin{eqnarray}
\label{equ:equal number 3}
\begin{aligned}
cov[l_0(x)] / c &= km^2/4 + (m-1) \sum_{i=1}^k \delta_i +  \sum_{i=1}^k \delta_i^2 - n/2 \\
&\ge km^2/4 + 0 +  0 - n/2 = km^2/4 - n/2.
\end{aligned}
\end{eqnarray}
Thus if $|M_0(x)| = |M_1(x)|$ then $var[l_0(x)] \ge dn/2 + c(km^2/4 - n/2)$ and
$$L(x) = l'_0(x) \ge \sqrt{\frac{(1-\gamma)}{\gamma}\left(dn/2 + c(km^2/4 - n/2)\right)} + na/2 = B.$$
By Inequality~(\ref{equ:equal number 3}), under the case that $|M_0(x)| = |M_1(x)|$ and $\delta_i = 0$ for all $i \in [1,k]$, we have $\sum_{i=1}^k \delta_i^2 = 0$. Meanwhile, $L(x)$ exactly gets its minimum value $B$.
Note that $A > B$, thus the first claim is correct.
Besides, since $B$ is the lowest value of $L(x)$,
it implies that $x$ is a balanced solution to $I_1$ if and only if $x$ is optimal to $I_1$. 
The proof is finished.  
\end{proof}

\begin{figure*}[t]
  \centering     \footnotesize 
    \begin{tikzpicture}[every node/.style={circle,black,draw,fill=black,inner sep=1pt}, scale=.55]

        \node[label=right:{\rotatebox{90}{initial solution}},scale =0] (1) at (-2.6, 2.5) {};
        \node[label=right:{\rotatebox{90}{equal solution}},scale =0] (1) at (8.5, 2.5) {};
        \node[label=right:{\rotatebox{90}{balanced solution}},scale =0] (1) at (19.2, 2.5) {};
        \node[label=right: \tiny{flip a 0-bit},scale =0] (1) at (0.2, 4.6) {};
        \node[label=right: \tiny{flip a 1-bit},scale =0] (1) at (0.2, 3.6) {};
        \node[label=right: \tiny{flip two 0-bits},scale =0] (1) at (0.2, 2.6) {};
        \node[label=right: \tiny{flip a 0-bit and a 1-bit},scale =0] (1) at (0.2, 1.6) {};
        \node[label=right: \tiny{flip two 1-bits},scale =0] (1) at (0.2, 0.6) {};
        
        \draw[decorate,decoration={brace,mirror},ultra thick] (-0.5,-0.2) -- (10.4,-0.2);
        \node[label=right:\scriptsize{Phase-1: expected runtime $O(n)$},scale =0] (1) at (1.3, -0.9) {};
        \draw[decorate,decoration={brace,mirror},ultra thick] (10.6,-0.2) -- (21.5,-0.2);
        \node[label=right:\scriptsize{Phase-2: expected runtime {$O(n^2/m)=O(nk)$}},scale =0] (1) at (11, -0.9) {};

    \begin{scope}[every path/.style={ultra thin}]
    
        \draw[rounded corners] (-1,0) rectangle (0,5);

        \draw[->,thick] (0.2, 4.3) to (4.8, 4.3);
        \draw[->,thick] (0.2, 3.3) to (4.8, 3.3);
        \draw[->,thick] (0.2, 2.3) to (4.8, 2.3);
        \draw[->,thick] (0.2, 1.3) to (4.8, 1.3);
        \draw[->,thick] (0.2, 0.3) to (4.8, 0.3);

        \draw[->,dashed,thick] (5.2, 2.3) to (9.7, 2.3);
        \node[label=right: \tiny{accepted solution has},scale =0] (1) at (5.2, 3.7) {};
        \node[label=right: \tiny{$||M_0(x)| - |M_1(x)||$},scale =0] (1) at (5.2, 3.1) {};
        \node[label=right: \tiny{non-increase},scale =0] (1) at (5.2, 2.6) {};

        \draw[rounded corners] (10,0) rectangle (11,5);

        \draw[->,thick] (11.2, 2.3) to (16, 2.3);
        \node[label=right: \tiny{flip a 0-bit and a 1-bit},scale =0] (1) at (11.2, 2.6) {};


        \draw[->,dashed,thick] (16.3, 2.3) to (21, 2.3);
        \node[label=right: \tiny{accepted solution has},scale =0] (1) at (16.3, 3.8) {};
        \node[label=right: \tiny{$\sum_{i=1}^k |\alpha_i(x) - \beta_i(x)|$},scale =0] (1) at (16.3, 3.2) {};
        \node[label=right: \tiny{non-increase},scale =0] (1) at (16.3, 2.6) {};

        \draw[rounded corners] (21.2,0) rectangle (22.2,5);
    \end{scope}
    \end{tikzpicture}
    \vspace*{-30mm}
\caption{An illustration for the proof of Theorem~\ref{thm:RLS CCMSP-1}.}
\label{fig:illustration of RLS CCMSP-1}
\end{figure*}
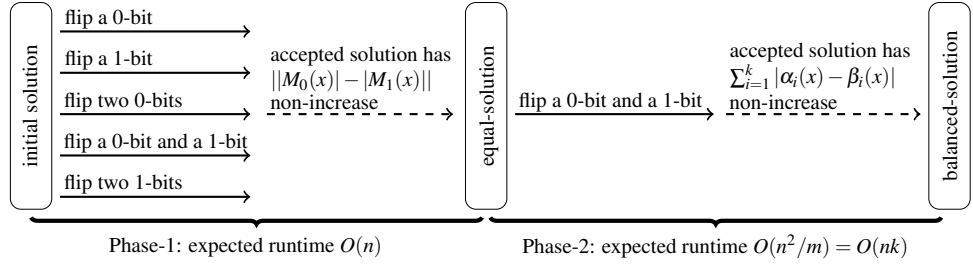

Now we are ready to analyze the expected runtime of RLS and the (1+1) EA to solve CCMSP-1.

\begin{theorem}
\label{thm:RLS CCMSP-1}
The expected runtime of RLS to obtain an optimal solution to the instance $I_1 = (a,c,d,\gamma,k,m)$ of CCMSP-1 is $O(n^2/m) = O(kn)$.
\end{theorem}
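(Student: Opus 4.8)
The plan is to run the two-phase analysis indicated by Figure~\ref{fig:illustration of RLS CCMSP-1}. Since Lemma~\ref{lem:equal number} identifies the optimal solutions of $I_1$ with the balanced-solutions, I only need to bound the expected number of RLS steps until a balanced-solution first appears. I would split the run at the first time an equal-solution is produced: Phase~1 is everything before that, Phase~2 everything after (up to a balanced-solution).

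Phase~1 uses the potential $\Phi(x)=\big||M_0(x)|-|M_1(x)|\big|$, which is even (because $n=km$ with $m$ even) and vanishes exactly on equal-solutions. Two facts make this phase short. First, every mutation that strictly increases $\Phi$ is rejected: such a mutation only adds jobs to the currently heavier machine, and adding a job strictly increases that machine's expected load while not decreasing its variance term (each covariance summand is non-decreasing under adding a job to a group), so its surrogate load $l'_t$ strictly increases; by Lemma~\ref{lem:inequal number} that machine still realizes $f$, so $f$ strictly increases. Second, whenever $\Phi(x)\ge 2$, moving one job off the heavier machine lowers $\Phi$ by $2$ and strictly decreases $f$, hence is accepted; the heavier machine carries at least $n/2+1$ jobs, so RLS performs such a one-bit flip with probability at least $\tfrac12\cdot\tfrac{n/2+1}{n}>\tfrac14$ in each step. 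As $\Phi$ never increases and starts below $n$, Phase~1 has expected length $O(n)$.

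For Phase~2 I first note that from an equal-solution the process stays among equal-solutions: a one-bit flip, or a two-bit flip of two equal bits, makes one machine strictly heavier and hence (as above) strictly increases $f$, so only flips of one $0$-bit and one $1$-bit survive. Writing $\alpha_i=\tfrac m2+\delta_i$ and $\beta_i=\tfrac m2-\delta_i$ with $\sum_i\delta_i=0$, Lemma~\ref{lem:equal number} (more precisely its computation of $cov[l_0(x)]$) shows that on equal-solutions $f$ is governed by $\sum_i\delta_i^2$, decreasing as this sum decreases and minimized exactly when every $\delta_i=0$. A move that takes a job out of group $G_i$ into a different group $G_{i'}$ shifts $\sum_i\delta_i^2$ by $2(\delta_{i'}-\delta_i+1)$, so it is accepted precisely when $\delta_i\ge\delta_{i'}+1$. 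I then take the potential $\Psi(x)=\sum_{i=1}^k|\delta_i(x)|=\tfrac12\sum_i|\alpha_i(x)-\beta_i(x)|$, which is even, at most $n/2$, and zero exactly on balanced-solutions; a short case distinction on the signs of $\delta_i,\delta_{i'}$ shows every accepted move changes $\Psi$ by $0$ or $-2$, so $\Psi$ never increases and a balanced-solution once reached is never left.

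It remains to bound the time for $\Psi$ to reach $0$, and this is the step I expect to be the crux. Any move taking a job out of a group with $\delta_i\ge 1$ and into a group with $\delta_{i'}\le -1$ is accepted and decreases $\Psi$ by exactly $2$; if $Z$ is the number of $0$-bits in groups with $\delta_i\ge 1$ and $O$ the number of $1$-bits in groups with $\delta_{i'}\le -1$, there are $Z\cdot O$ such moves, so each RLS step makes this progress with probability at least $\tfrac12\cdot ZO/\binom n2\ge ZO/n^2$. The quantitative heart is to show $ZO\ge\max\{\Psi^2,\,m^2/4\}$: the bound $Z\ge m/2$ holds because the $0$-bits outside those groups lie in at most $k-1$ groups of at most $m/2$ each, while $Z\ge\Psi$ holds because $Z=|P|\tfrac m2+\tfrac\Psi2$ and $|P|\ge\tfrac{\Psi/2}{m/2}$ (each $\delta_i\le m/2$), and symmetrically for $O$. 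Consequently the expected time to go from $\Psi=2j$ to $\Psi=2j-2$ is at most $n^2/\max\{4j^2,m^2/4\}$, and summing over $j=1,\dots,n/4$ split at $j=m/4$ yields $O(n^2/m)$: the contributions with $j\le m/4$ sum to $O(n^2/m)$ and those with $j>m/4$ sum to $\tfrac{n^2}{4}\sum_{j>m/4}j^{-2}=O(n^2/m)$. A plain multiplicative-drift estimate would only give $O(n^2\log n/m)$ here, which is why the $\Psi^2$-bound (not merely $m^2/4$) is essential. Adding the $O(n)$ cost of Phase~1 and using $k=n/m$ gives the claimed bound $O(n^2/m)=O(kn)$.
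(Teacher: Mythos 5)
Your proposal is correct and follows essentially the same two-phase argument as the paper: an $O(n)$ additive-drift phase on $\big||M_0|-|M_1|\big|$, followed by a phase on the potential $\sum_i|\alpha_i-\beta_i|$ (your $\Psi$ is exactly half the paper's $p$), with the same acceptance criterion $\alpha_i\ge\alpha_j+1$ and the same lower bound of order $m+\Psi$ on the number of useful $0$-bits and $1$-bits. Your $\max\{\Psi,m/2\}$ bound and split sum are just a minor rearrangement of the paper's $\frac{2m+p}{4}$ bound and integral estimate $\sum_t n^2(t+2m)^{-2}=O(n^2/m)$.
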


\begin{proof}
Lemma~\ref{lem:equal number} shows that any optimal solution to $I_1$ is a balanced one.
Let $x_0$ be the initial solution maintained by RLS that is assumed to have $|M_0(x_0)| > |M_1(x_0)|$.
Thus $L(x) = l'_0(x_0) > l'_1(x_0)$ by Lemma~\ref{lem:inequal number} and $|M_0(x_0)| - |M_1(x_0)| \ge 2$ as $n = mk$ is even.
The optimization process of RLS based on $x_0$ discussed below is divided into two phases, illustrated in Fig.~\ref{fig:illustration of RLS CCMSP-1}. 

{\bf Phase-1}. Obtaining the first equal solution $x_1$ based on $x_0$.
Five possible cases for the mutation of RLS on $x_0$ are listed below, obtaining an offspring $x'_0$ of $x_0$.

{\bf Case (1)}. Flipping a 0-bit in $x_0$ (i.e., $|M_0(x'_0)| = |M_0(x_0)| -1$). 
Observe that $E[l_0(x_0)] > E[l_0(x'_0)]$, $var[l_0(x_0)] > var[l_0(x'_0)]$ and $l'_0(x_0) > l'_0(x'_0)$, since a job on $M_0$ with respect to $x_0$ is moved to $M_1$.
As  $|M_0(x_0)| - |M_1(x_0)| \ge 2$ we have $|M_0(x'_0)| \ge |M_1(x'_0)|$.
Thus $L(x'_0) = l'_0(x'_0)$ by Lemma~\ref{lem:inequal number}, $L(x'_0) = l'_0(x'_0) < l'_0(x_0) = L(x_0)$, and $x'_0$ can be accepted by RLS.

{\bf Case (2)}. Flipping a 1-bit in $x_0$ (i.e., $|M_0(x'_0)| = |M_0(x_0)| +1$).
Since a job on $M_1$ with respect to $x_0$ is moved to $M_0$, $E[l_0(x'_0)] > E[l_0(x_0)]$, $var[l_0(x'_0)] > var[l_0(x_0)]$, and $l'_0(x'_0) > l'_0(x_0)$.
Similarly, $l'_1(x_0) > l'_1(x'_0)$.
Since $l'_0(x_0) > l'_1(x_0)$, $L(x'_0) = l'_0(x'_0) > l'_0(x_0) = L(x_0)$ and $x'_0$ cannot be accepted by RLS.

{\bf Case (3)}. Flipping two 0-bits in $x_0$ (i.e., $|M_0(x'_0)| = |M_0(x_0)| -2$). 
If $|M_0(x'_0)| \ge |M_1(x'_0)|$, then using the reasoning for Case (1) gets that $L(x'_0) \le L(x_0)$ and $x'_0$ can be accepted by RLS.
If $|M_0(x'_0)| < |M_1(x'_0)|$ then by the facts that $|M_0(x_0)| - |M_1(x_0)| \ge 2$ and $n$ is even, $|M_0(x_0)| - |M_1(x_0)| = 2$, $|M_1(x'_0)| - |M_0(x'_0)| = 2$ and $|M_0(x_0)| = |M_1(x'_0)|$. 
By Lemma~\ref{lem:inequal number}, we have
$$L(x_0) - L(x'_0) = l'_0(x_0) - l'_1(x'_0) = \sqrt{\frac{1-\gamma}{\gamma}} \left(\sqrt{var[l_0(x_0)]} - \sqrt{var[l_1(x'_0)]} \right).$$
Recall that $|M_0(x_0)| = |M_1(x'_0)|$, thus
$$\sqrt{var[l_0(x_0)]} \ge \sqrt{var[l_1(x'_0)]} \ \Longleftrightarrow \ cov[l_0(x_0)] \ge cov[l_1(x'_0)].$$
Consequently, $x'_0$ can be accepted by RLS if and only if $cov[l_0(x_0)] \ge cov[l_1(x'_0)]$.

{\bf Case (4)}. Flipping a 0-bit and a 1-bit in $x_0$ (i.e., $|M_0(x'_0)| = |M_0(x_0)|$). 
Observe $|M_0(x_0)| = |M_0(x'_0)| \ge |M_1(x'_0)|$.
By Lemma~\ref{lem:inequal number}, 
$$L(x_0) - L(x'_0) = l'_0(x_0) - l'_0(x'_0) = \sqrt{\frac{1-\gamma}{\gamma}} \left(\sqrt{var[l_0(x_0)]} - \sqrt{var[l_0(x'_0)]} \right),$$
$$ \textrm{and} \ \sqrt{var[l_0(x_0)]} \ge \sqrt{var[l_0(x'_0)]} \ \Longleftrightarrow \ cov[l_0(x_0)] \ge cov[l_0(x'_0)].$$ 
Consequently, $x'_0$ can be accepted by RLS if and only if $cov[l_0(x_0)] \ge cov[l_0(x'_0)]$.

{\bf Case (5)}. Flipping two 1-bits in $x_0$ (i.e., $|M_0(x'_0)| = |M_0(x_0)| +2$).
Using the reasoning similar to that for Case (2), we have that $L(x'_0) > L(x_0)$ and $x'_0$ cannot be accepted by RLS.

Summarizing the above analysis for the five cases gives that if $x'_0$ is accepted by RLS, then it satisfies at least one of two conditions: 
(1). $|M_{t(x'_0)}(x'_0)| \\ < |M_{0}(x_0)|$; 
(2). $|M_{t(x'_0)}(x'_0)| = |M_{0}(x_0)|$ and $cov[l_{t(x'_0)}(x'_0)] \le cov[l_{0}(x_0)]$ (recall that $t(x) = \arg_{t \in [0,1]} \max \{l_0'(x),l_1'(x)\}$).
That is, the gap between the numbers of jobs on the two machines cannot be enlarged during the optimization process.

The mutation considered in Case (1) can be generated by RLS with probability $1/4$ that decreases the gap between the numbers of jobs on the two machines by 2. As $||M_0(x_0)| - |M_1(x_0)|| \le n$, using the Additive Drift analysis~\citep{he2004study} gets that RLS takes expected runtime $O(n)$ to obtain the first equal solution $x_1$ with $|M_0(x_1)| = |M_1(x_1)|$ based on $x_0$.

{\bf Phase-2}. Obtaining the first balanced solution $x^*$ based on $x_1$. 

W.l.o.g., assume that $x_1$ is not balanced. 
To simplify the analysis, we define the \emph{potential} of the solution $x$ maintained during the process as 
$$p(x) = \sum_{i=1}^k |\alpha_i(x) - \beta_i(x)| = \sum_{i=1}^k |2 \alpha_i(x) - m|.$$
In the following, we show that the potential value cannot increase during the process. 
Note that once the first equal solution $x_1$ is obtained, then all solutions subsequently accepted by RLS are equal ones, thus only the mutations flipping a 0-bit and a 1-bit of $x_1$ are considered below. 
Assume that the considered mutation flips a 0-bit of $G_i$ and a 1-bit of $G_j$ in $x_1$, and denoted by $x'_1$ the solution obtained.
The potential change is 
\begin{eqnarray*}
\label{equ:RLS CCMSP-1 2}
\begin{aligned}
\Delta_p &=  p(x_1) - p(x'_1) \\
&= |2\alpha_i(x_1) - m| + |2\alpha_j(x_1) - m| - \left(|2\alpha_i(x'_1) - m| + |2\alpha_j(x'_1) - m| \right),
\end{aligned} 
\end{eqnarray*}
where $\alpha_i(x'_1) = \alpha_i(x_1) -1$ and $\alpha_j(x'_1) = \alpha_j(x_1) +1$.
The above discussion for Case (4) shows that $x'_1$ can be accepted by RLS if and only if $\Delta_{cov} = cov[l_0(x_1)] - cov[l_0(x'_1)] \ge 0$, where
\begin{eqnarray*}
\label{equ:RLS CCMSP-1 1}
\Delta_{cov}/2c &= \binom{\alpha_i(x_1)}{2} + \binom{\alpha_j(x_1)}{2} - \left(\binom{\alpha_i(x_1)-1}{2} + \binom{\alpha_j(x_1)+1}{2} \right)\\
&= \alpha_i(x_1)-1 - \alpha_j(x_1).
\end{eqnarray*}
We divide the analysis for the values of $\Delta_{p}$ and $\Delta_{cov}$ into four cases. 

{\bf Case (I)}. $\alpha_i(x_1) > \frac{m}{2} ~and~ \alpha_j(x_1) \ge \frac{m}{2}$.
Observe that $\Delta_{p} = 0$, but the value of $\Delta_{cov}$ depends on the relationship between $\alpha_i(x_1)$ and $\alpha_j(x_1)$.

{\bf Case (II)}. $\alpha_i(x_1) \le \frac{m}{2} \ ~and~ \ \alpha_j(x_1) \ge \frac{m}{2}$.
Observe that $\Delta_{p} = -4$, but $\Delta_{cov} < 0$, implying that $x'_1$ cannot be accepted by RLS.

{\bf Case (III)}. $\alpha_i(x_1) > \frac{m}{2} \ ~and~ \ \alpha_j(x_1) < \frac{m}{2}$.
Observe that $\Delta_{p} = 4$ and $\Delta_{cov} > 0$, implying that $x'_1$ can be accepted by RLS.

{\bf Case (IV)}. $\alpha_i(x_1) \le \frac{m}{2} \ ~and~ \ \alpha_j(x_1) < \frac{m}{2}$.
Observe that $\Delta_{p} = 0$, but the value of $\Delta_{cov}$ depends on the relationship between $\alpha_i(x_1)$ and $\alpha_j(x_1)$.

Summarizing the analysis for the above four cases gives that during the optimization process, the potential value cannot increase. 
As $x_1$ is not balanced, there exist $i,j \in [1,k]$ such that $\alpha_i(x_1) = |M_0(x_1) \cap G_i| > \frac{m}{2}$ and $\alpha_j(x_1) =|M_0(x_1) \cap G_j| < \frac{m}{2}$ (i.e., Case (III) holds), and the offspring obtained by the mutation flipping a 0-bit of $G_i$ and a 1-bit of $G_j$ in $x_1$ can be accepted by RLS.
Now we consider the probability to generate such a mutation. 
Let $S_{0}$ (resp., $S_1$) be a subset of $[1,k]$ such that for any $i \in S_0$ (resp., $i \in S_1$), $\alpha_i(x_1) > \beta_i(x_1)$ (resp., $\alpha_i(x_1) < \beta_i(x_1)$).
Note that $\alpha_i(x_1) > \beta_i(x_1)$ (resp., $\alpha_i(x_1) < \beta_i(x_1)$) indicates $\alpha_i(x_1) > \frac{m}{2}$ (resp., $\alpha_i(x_1) < \frac{m}{2}$).
Since $x_1$ is an equal solution, we have that 
\begin{eqnarray}
\sum_{i \in S_0} \alpha_i(x_1) - \beta_i(x_1) = \sum_{i \in S_1} \beta_i(x_1) - \alpha_i(x_1) = p(x_1)/2. 
\end{eqnarray}
Then combining Equality~(\ref{equ:RLS CCMSP-1 2}) with the facts that $\sum_{i \in S_0} \alpha_i(x_1) + \beta_i(x_1) = |S_0|m$ and $\sum_{i \in S_1} \alpha_i(x_1) + \beta_i(x_1) = |S_1|m$, we can get that 
$$\sum_{i \in S_0} \alpha_i(x_1) = \frac{p(x_1)}{4} + \frac{|S_0| m}{2}  \ge \frac{m}{2} + \frac{p(x_1)}{4} $$ and $$ \sum_{i \in S_1} \beta_i(x_1) = \frac{p(x_1)}{4} + \frac{|S_1| m}{2}  \ge \frac{m}{2}  + \frac{p(x_1)}{4}.$$
Thus there are at least $\frac{m}{2} + \frac{p(x_1)}{4}$ 0-bits in the groups $G_i$ with $i \in S_0$, and at least $\frac{m}{2}  + \frac{p(x_1)}{4}$ 1-bits in the groups $G_i$ with $i \in S_1$.
That is, RLS generates such a mutation with probability $\Omega((\frac{2m+p(x_1)}{4n})^2)$, and RLS takes expected runtime $O((\frac{n}{2m+p(x_1)})^2)$ to obtain an offspring $x'_1$ with $p(x'_1) = p(x_1)-4$.
Considering all possible values for the potential of the maintained solution (note that $1 \le p(x_1) \le n$ and $p(x_1)$ is a decreasing function) and defining $t:= p(x_1)$, the total expected runtime of RLS to obtain the optimal solution $x^*$ based on $x_1$ can be upper bounded by  
\begin{align*}
    \sum_{t = 1}^n O(\frac{n^2}{(t+2m)^2}) = &O(n^2) \sum_{t = 1}^n (t+2m)^{-2} \\
    =& O(n^2) \int_{1}^{n} (t+2m)^{-2} dt = O(n^2/m).
\end{align*}

Combining the expected runtime of Phase (1) (i.e., $O(n)$), and that of Phase (2) (i.e., $O(n^2/m) = O(kn)$), the total expected runtime of RLS to obtain an optimal solution to $I_1$ is $O(n^2/m) = O(kn)$.
\end{proof}

The theoretical performance of the (1+1) EA on the instance $I_1$ of CCMSP-1 is investigated in Theorem~\ref{thm:(1+1) EA CCMSP-1}, which runs in a similar way to that given in Theorem~\ref{thm:RLS CCMSP-1} for RLS, obtaining the equal solution first and then approaching the optimal solution to the instance. 

\begin{theorem}
\label{thm:(1+1) EA CCMSP-1}
The expected runtime of the (1+1) EA to obtain an optimal solution to the instance $I_1 = (a,c,d,\gamma,k,m)$ of CCMSP-1 is $O((k+m)n^2)$.
\end{theorem}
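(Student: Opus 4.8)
The plan is to follow the two-phase analysis of Theorem~\ref{thm:RLS CCMSP-1}, replacing every mutation estimate by its (1+1)~EA counterpart: a prescribed flip of a \emph{fixed} set of $\ell$ bits and of no other bit happens with probability $\frac{1}{n^\ell}\left(1-\frac1n\right)^{n-\ell}=\Theta(n^{-\ell})$, rather than $\Omega(1)$ for a single bit resp.\ $\Omega(n^{-2})$ per pair for the RLS. By Lemma~\ref{lem:inequal number} and Lemma~\ref{lem:equal number} it suffices to bound the expected time until a balanced solution appears, and since $f=L$ is non-increasing along the run I would argue throughout via a potential that cannot grow under accepted steps.

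\emph{Phase 1: reaching an equal-solution.} If $|M_0(x)|>|M_1(x)|$ then moving a single job off the heavier machine strictly decreases $L$ (Case~(1) in the proof of Theorem~\ref{thm:RLS CCMSP-1}), so such an offspring is accepted; the (1+1)~EA produces a clean such flip with probability $\frac{|M_{t(x)}(x)|}{n}\left(1-\frac1n\right)^{n-1}=\Omega(1)$ because at least $\lceil n/2\rceil$ bits lie on the heavier machine. Using the case analysis of Theorem~\ref{thm:RLS CCMSP-1} I would check that no accepted offspring can enlarge the machine-discrepancy $||M_0(x)|-|M_1(x)||$ except through many-bit mutations whose total contribution to the expected change is $o(1)$; an Additive Drift argument on that discrepancy (which is at most $n$) then yields an $O(n)$ bound for Phase 1.

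\emph{Phase 2: reaching a balanced solution.} The RLS potential $p(x)=\sum_{i=1}^k|\alpha_i(x)-\beta_i(x)|$ is \emph{not} selection-monotone for the (1+1)~EA (a mutation of only a few bits can leave $L$ unchanged while raising $p$), so I would instead track $\Phi(x)=\sum_{i=1}^k(\alpha_i(x)-\beta_i(x))^2$. The computation behind Lemma~\ref{lem:equal number} shows that on equal-solutions $var[l_0(x)]$ is a strictly increasing affine function of $\Phi(x)$, hence so is $L(x)$ a strictly increasing function of $\Phi(x)$; therefore $\Phi$ is non-increasing over the equal-solutions visited during the run, and $\Phi(x)=0$ iff $x$ is balanced. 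A ``good'' move --- flip one $0$-bit of a group with $\alpha_i>m/2$ and one $1$-bit of a group with $\alpha_j<m/2$, the analogue of Case~(III) of Theorem~\ref{thm:RLS CCMSP-1} --- keeps the solution equal, is accepted, and by the same $\binom{\cdot}{2}$-identity used there lowers $\Phi$ by at least $8$; since there are at least $\frac m2+\frac{p(x)}{4}\ge\frac m2+\frac{\sqrt{\Phi(x)}}{4}$ such $0$-bits and equally many such $1$-bits, the (1+1)~EA performs a clean good move with probability $\Omega\!\left((m+\sqrt{\Phi(x)})^2/n^2\right)$, i.e.\ with expected wait $O\!\left(n^2/(m+\sqrt{\Phi(x)})^2\right)$. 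Excursions to non-equal solutions are short (moving one job back to the lighter machine is again accepted, with probability $\Omega(1)$) and by the monotonicity just noted do not reverse $\Phi$-progress, so they cost only a constant factor. Summing the waits over the decreasing $\Phi$-values (each at most $km^2$, dropping by at least a constant per good move) together with the $O(n)$ of Phase 1 yields $O((k+m)n^2)$; here Observation~\ref{obs:1} serves to control the $\binom{\cdot}{2}$-terms.

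\emph{Expected obstacle.} The difficulty is precisely that the (1+1)~EA's unbounded mutation width destroys the monotonicity of the natural imbalance potential and permits fitness-neutral wandering (neutral $2$-bit moves, brief excursions out of the equal-solutions, and rare large shuffles that preserve $L$). The heart of the argument is therefore to isolate a potential --- here $\Phi$ --- that is provably monotone under selection \emph{and} still guarantees an $\Omega\!\left((m+\sqrt{\Phi})^2/n^2\right)$ chance of strict decrease, and then to verify that the neutral and ``bad'' accepted mutations neither undo progress nor accumulate beyond the claimed runtime; turning this into the exact factor $k+m$ is the part I expect to be most delicate.
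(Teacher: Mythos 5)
Your two-phase skeleton and your Phase-2 potential are essentially the paper's (on equal-solutions your $\Phi(x)=4\sum_i\delta_i^2(x)$ is an affine function of the paper's $b(x)=\sum_i\binom{\alpha_i(x)}{2}$, so the monotonicity-under-selection observation is the same), but there is a genuine gap in Phase~1, and it is exactly the place where the $m n^2$ term of the claimed bound $O((k+m)n^2)$ has to come from. You bound Phase~1 by $O(n)$ via additive drift on the discrepancy $||M_0(x)|-|M_1(x)||$, conceding that many-bit mutations can be accepted while enlarging it and asserting their ``total contribution to the expected change is $o(1)$.'' That assertion is unsubstantiated and not obviously true: a specific $\ell$-bit mutation has probability $\Theta(n^{-\ell})$ but there are $\Theta(n^{\ell})$ of them, so the expected positive drift contributed by accepted discrepancy-increasing mutations is not automatically negligible --- you would have to show that almost none of them are accepted. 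They can be accepted: since CCMSP-1 imposes no relation between $a$ and $c$ (unlike the extra constraint~(\ref{equ:CCMSP-2^+}) of CCMSP-$2^+$), a mutation that adds a job to the fuller machine while redistributing jobs across groups can decrease the covariance term by more than the $a$-increase in expected load, hence decrease $L$ and be accepted while increasing the discrepancy. The paper avoids this entirely: it notes that $L(x)$ is determined by the pair $v(x)=(|M_{t(x)}(x)|,\,b(x))$, counts at most $O(mn^2)$ possible values of this pair, and charges each of the at most $O(mn^2)$ strict $L$-improvements an expected $O(1)$ wait (a clean single-bit flip off the fuller machine). This yields Phase~1 in $O(mn^2)$ --- not $O(n)$ --- and that is the source of the $m$ in the theorem. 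The fact that your argument would give the strictly stronger total $O(kn^2)$ is itself a warning sign.

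The same issue resurfaces in your treatment of excursions in Phase~2: once the walk leaves the equal-solutions, ``moving one job back is accepted with probability $\Omega(1)$'' only shows that a strict $L$-improvement occurs with constant probability per excursion step, not that an excursion lasts $O(1)$ steps; to bound the total time spent outside the equal-solutions you again need to count how many strict $L$-improvements can occur over the whole run, i.e.\ the paper's $O(mn^2)$ fitness-level count. Your Phase-2 accounting on equal-solutions is otherwise sound (the $\Omega((m+\sqrt{\Phi})^2/n^2)$ success probability and the decrease of $\Phi$ by at least a constant per accepted good move match the paper's $\Omega((m/2n)^2)$ over $O(km^2)$ potential values, giving $O(kn^2)$), so the proposal is repairable, but the repair is precisely the counting argument you omitted.
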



\begin{proof}
As the mutation of the (1+1) EA may flip more than two bits simultaneously, the reasoning given in Theorem~\ref{thm:RLS CCMSP-1} for the performance of RLS cannot be directly applied for the performance of the (1+1) EA, but the reasoning runs in a similar way.

We first consider the expected runtime of {\bf Phase (1)}: Obtaining the first equal solution $x_1$ based on the initial solution $x_0$ that is assumed to have $|M_0(x_{0})|>|M_1(x_{0})|$.
A vector function $v(x) = (|M_{t(x)}(x)|, b(x))$ is designed for the solutions $x$ maintained during Phase (1), where $b(x) = \sum_{i=1}^k \binom{|M_{t(x)}(x) \cap G_i|}{2}$.
For ease of notation, let $|M_{t(x)}(x)| = \ell$, where $\ell \in [\frac{n}{2}+1,n]$ (as $M_{t(x)}(x)$ is the fuller machine by Lemma~\ref{lem:inequal number}). Then $0 < b(x) \le \lfloor \frac{\ell}{m} \rfloor \binom{m}{2} + \binom{\ell \% m}{2} \le (\frac{\ell}{m} +1) \binom{m}{2}$, where the first $\le$ holds due to Lemma~\ref{lem:1}.
Hence the number of possible values of $v(x)$ can be upper bounded by 
$\sum_{\ell = \frac{n}{2}+1}^{n} (\frac{\ell}{m} +1) \binom{m}{2} = O(mn^2)$.
Observe that for any two solutions $x$ and $x'$, if $v(x) = v(x')$ then $L(x) = L(x')$. 
Thus the number of possible values of $L(x)$ during the process can be upper bounded by $O(mn^2)$ as well.

Consider a mutation flipping a $t(x)$-bit of $x$ (i.e., if $t(x) = 0$ then flipping a 0-bit; otherwise, a 1-bit).
By the discussion for Case (1) given in the proof of Theorem~\ref{thm:RLS CCMSP-1}, the solution $x'$ obtained by the mutation has $L(x') < L(x)$ and can be accepted by the (1+1) EA. The probability of the (1+1) EA to generate such a mutation is $1/2$. 
Thus combining the probability and the number of possible values of $L(x)$ gives that Phase (1) takes expected runtime $O(mn^2)$ to get the first equal solution $x_1$ based on $x_0$.

Now we consider the expected runtime of {\bf Phase (2)}: Obtaining an optimal solution based on $x_1$. 
As all solutions accepted subsequently are equal ones, we take $b(x)$ as the potential function, where the number of possible values of $b(x)$ can be bounded by $O(km^2)$.
By the reasoning given in Theorem~\ref{thm:RLS CCMSP-1}, a mutation flipping a 0-bit and a 1-bit that can obtain an improved solution can be generated by the (1+1) EA with probability $\Omega((\frac{m}{2n})^2)$.
Consequently, Phase (2) takes expected runtime $O(km^2 \cdot (\frac{2n}{m})^2) = O(kn^2)$ to obtain an optimal solution based on $x_1$.

In summary, the (1+1) EA  totally takes expected runtime $O((k+m)n^2)$ to obtain an optimal solution to $I_1$. 
\end{proof}

\section{Theoretical Performance for CCMSP-2}

The following observation shows that the discussion for CCMSP-2 is more complicated than that for CCMSP-1.
\begin{observation}
\label{obs:CCMSP-2 not equal solution}
Given a solution $x$ to an instance $I_2 = (a,c,d,\gamma,k, \{m_i \mid i \in [1,k]\})$ of CCMSP-2, it remains uncertain  whether $l'_0(x) > l'_1(x)$ holds true even if $|M_0(x)| > |M_1(x)|$, when compared to CCMSP-1.
\end{observation}


\begin{proof}
We first show that the relationship between $cov[l_0(x)]$ and $cov[l_1(x)]$ remains ambiguous if only having $|M_0(x)| > |M_1(x)|$.  
There exists a variable $\delta_i(x)$ such that $\alpha_i(x) = m_i/2 + \delta_i(x)$ and $\beta_i(x) = m_i/2 - \delta_i(x)$ for any $i \in [1,k]$ (recall that the group $G_i$ in $I_2$ has size $m_i$).
Then we have $\sum_{i=1}^k \delta_i(x) > 0$ due to the fact $|M_0(x)| > |M_1(x)|$.
The gap between the covariances of the loads on the two machines is 
\begin{eqnarray}
\label{equ:CCMSP-2 not equal solution 2}
\begin{aligned}
cov[l_0(x)] - cov[l_1(x)] = 2c \sum_{i=1}^k \left(\binom{\alpha_i}{2} - \binom{\beta_i}{2}\right) &= 2c \sum_{i=1}^k (m_i -1)\delta_i .
\end{aligned}
\end{eqnarray}
Observe that Expression~(\ref{equ:CCMSP-2 not equal solution 2}) can be treated as a weighted version of $\sum_{i=1}^k \delta_i(x)$, and it is impossible to decide whether Expression~(\ref{equ:CCMSP-2 not equal solution 2}) is greater than 0.
Additionally, the relationship among the values of expected processing time $a$, variance $d$ and covariance $c$ are unrestricted, and we cannot derive that $l'_0(x) < l'_1(x)$ even if Expression~(\ref{equ:CCMSP-2 not equal solution 2}) is less than 0.
As a result, it is also impossible to determine whether $l'_0(x) > l'_1(x)$ holds.
\end{proof}

For ease of analysis, we consider two simplified variants of CCMSP-2, called CCMSP-$2^+$ and CCMSP-$2^-$, where CCMSP-$2^+$ and CCMSP-$2^-$ consider the extra constraints on the values of expected processing time $a$, variance $d$ and covariance $c$ given in Inequality~(\ref{equ:CCMSP-2^+}) and~(\ref{equ:CCMSP-2^-}), respectively.
\begin{eqnarray}
\label{equ:CCMSP-2^+}
\sqrt{\frac{(1-\gamma)}{\gamma} \left(nd + 2c\sum_{i=1}^{k} \binom{m_i}{2} \right)} < a
\end{eqnarray}
\begin{eqnarray}
\label{equ:CCMSP-2^-}
\begin{aligned}
    \sqrt{\frac{(1-\gamma)}{\gamma} 2c \sum_{i=1}^k \binom{m_i}{2}} - \sqrt{\frac{(1-\gamma)}{\gamma} 2c \left(\sum_{i=1}^k \binom{m_i}{2} - 1 \right)} > \\ na + \sqrt{\frac{(1-\gamma)}{\gamma} nd}
\end{aligned}
\end{eqnarray}

For any solution $x$ to any instance of CCMSP-$2^+$ and any $t \in [0,1]$, we note that $E[l_t(x)]$ contributes much more than $\sqrt{\frac{(1-\gamma)}{\gamma} var[l_t(x)]}$ to $l'_t(x)$ if $|M_t(x)| \ge 1$ (recall that $l'_t(x) = E[l_t(x)] + \sqrt{\frac{(1-\gamma)}{\gamma} var[l_t(x)]}$) under the extra constraint given in Inequality~(\ref{equ:CCMSP-2^+}), because 
\begin{align*}
    \sqrt{\frac{(1-\gamma)}{\gamma} var[l_t(x)]} \le \sqrt{\frac{(1-\gamma)}{\gamma}\left(nd + 2c\sum_{i=1}^{k} \binom{m_i}{2} \right)} &\\
     < a \le E[l_t(x)] = |M_t(x)|a.&
\end{align*}

Similarly, for any solution $x$ to any instance of CCMSP-$2^-$ and any $t \in [0,1]$, if $cov[l_t(x)] \ge 2c$ then the covariances of the jobs contribute much more than their expected processing time and variances to $l'_t(x)$ under the extra constraint give in Inequality~(\ref{equ:CCMSP-2^-}). 
More specifically, the covariances of the jobs contribute 

\begin{eqnarray}
\label{equ:contribution 1}
\begin{aligned}
\left(|M_{t}(x)| a  + \sqrt{\frac{(1-\gamma)}{\gamma} (|M_{t}(x)| d + cov[l_t(x)])} \right)& - \\
 \left(|M_{t}(x)| a + \sqrt{\frac{(1-\gamma)}{\gamma} |M_{t}(x)| d} \right)&
\end{aligned}
\end{eqnarray} 

to $l'_t(x)$, and their expected processing time and variances contribute 

\begin{eqnarray}
\label{equ:contribution 2}
\begin{aligned}
    \left(|M_{t}(x)| a + \sqrt{\frac{(1-\gamma)}{\gamma} (|M_{t}(x)| d + cov[l_t(x)])} \right) -& \\
    \left(\sqrt{\frac{(1-\gamma)}{\gamma} cov[l_t(x)]} \right).&
\end{aligned}
\end{eqnarray} 
Then  
\begin{eqnarray*}
\begin{aligned}
\label{equ:}
& \textrm{Expression}~(\ref{equ:contribution 1}) - \textrm{Expression}~(\ref{equ:contribution 2}) \\ 
=& \sqrt{\frac{(1-\gamma)}{\gamma} cov[l_t(x)]} - \left(|M_{t}(x)| a + \sqrt{\frac{(1-\gamma)}{\gamma} |M_{t}(x)| d}\right) \\
\ge& \sqrt{\frac{(1-\gamma)}{\gamma} cov[l_t(x)]} - \left(n a + \sqrt{\frac{(1-\gamma)}{\gamma} n d}\right) \\
>& \sqrt{\frac{(1-\gamma)}{\gamma} 2c} - \left( \sqrt{\frac{(1-\gamma)}{\gamma} 2c \sum_{i=1}^k \binom{m_i}{2}} - \sqrt{\frac{(1-\gamma)}{\gamma} 2c \left(\sum_{i=1}^k \binom{m_i}{2} - 1 \right)} \right) \\
>& 0.
\end{aligned}
\end{eqnarray*}

\subsection{NP-hardness of CCMSP-$2^+$ and CCMSP-$2^-$}


In the following, we first consider the NP-hardness of CCMSP-$2^+$.
\begin{lemma}
\label{lem:optimal solution to CCMSP-2^+}
Considering an instance $I_2^+ = (a,c,d,\gamma,k, \{m_i \mid i \in [1,k]\})$ of CCMSP-$2^+$, $L(x_1) < L(x_2)$ for any two solutions $x_1$ and $x_2$ to $I_2^+$ with $|M_{t(x_1)}(x_1)| < |M_{t(x_2)}(x_2)|$, and any optimal solution to $I_2^+$ is an equal one.
\end{lemma}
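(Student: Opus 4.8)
The plan is to exploit the extra constraint~(\ref{equ:CCMSP-2^+}), which makes the ``mean part'' of the new load measure dominate its ``deviation part''. Since all jobs share expected processing time $a$, we have $E[l_t(x)] = a\,|M_t(x)|$, so for every solution $x$ and $t\in[0,1]$,
$$l'_t(x) = a\,|M_t(x)| + \sqrt{\tfrac{1-\gamma}{\gamma}\,var[l_t(x)]}.$$
First I would record a uniform bound on the variance: because $|M_t(x)\cap G_i|\le m_i$ and $\binom{\cdot}{2}$ is nondecreasing (cf.\ Observation~\ref{obs:1}), $var[l_t(x)] = d\,|M_t(x)| + 2c\sum_{i=1}^k\binom{|M_t(x)\cap G_i|}{2} \le nd + 2c\sum_{i=1}^k\binom{m_i}{2}$, and hence by~(\ref{equ:CCMSP-2^+}) the deviation part $\theta_t(x):=\sqrt{\tfrac{1-\gamma}{\gamma}\,var[l_t(x)]}$ lies in $[0,a)$ for every $x$ and $t$. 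Consequently, if $|M_0(x)|>|M_1(x)|$ then $l'_0(x)-l'_1(x) \ge a - \theta_1(x) > 0$, so the machine holding more jobs is always the one attaining $L(x)$; therefore $|M_{t(x)}(x)| = \max\{|M_0(x)|,|M_1(x)|\}\ge\lceil n/2\rceil$ and $L(x) = a\,|M_{t(x)}(x)| + \theta(x)$ with $\theta(x)\in[0,a)$.

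For the first claim, put $n_1=|M_{t(x_1)}(x_1)|$ and $n_2=|M_{t(x_2)}(x_2)|$, so $n_2\ge n_1+1$ by hypothesis. Then
$$L(x_2)-L(x_1) = a(n_2-n_1) + \theta(x_2) - \theta(x_1) \ \ge\ a - \theta(x_1) \ >\ 0,$$
where the first inequality uses $n_2-n_1\ge 1$ and $\theta(x_2)\ge 0$, and the second uses $\theta(x_1)<a$ from~(\ref{equ:CCMSP-2^+}). Hence $L(x_1)<L(x_2)$.

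For the second claim I would show that any non-equal solution has a strictly larger fuller-machine size than some equal-solution and then invoke the first claim. Equal-solutions exist (distribute the jobs so the numbers on the two machines differ by at most one), and each equal-solution $y$ has $|M_{t(y)}(y)| = \lceil n/2\rceil$. If $x$ is not an equal-solution then $\left| |M_0(x)| - |M_1(x)| \right|\ge 2$; since $|M_0(x)|-|M_1(x)|$ has the same parity as $n$, this gap is in fact $\ge 3$ when $n$ is odd, so in either case $|M_{t(x)}(x)| \ge \lceil n/2\rceil + 1 > |M_{t(y)}(y)|$. By the first claim $L(y)<L(x)$, so $x$ is not optimal, and therefore every optimal solution to $I_2^+$ is an equal-one.

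I do not expect a serious obstacle: once constraint~(\ref{equ:CCMSP-2^+}) is in hand the argument reduces to the fact that $L$ is strictly increasing in the fuller-machine size. The only points needing a little care are the uniform variance bound (which is exactly what allows~(\ref{equ:CCMSP-2^+}) to be applied) and the parity bookkeeping distinguishing odd from even $n$ in the second claim — the same subtlety flagged by Lemma~\ref{lem:CCMSP-2 not equal-solution}, which shows the statement would fail without~(\ref{equ:CCMSP-2^+}).
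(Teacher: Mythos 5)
Your proposal is correct and follows essentially the same route as the paper's proof: the extra constraint~(\ref{equ:CCMSP-2^+}) bounds the deviation term uniformly by $a$, so $L(x)$ is determined by the fuller machine and is strictly increasing in its size, from which both claims follow. Your write-up is somewhat more explicit than the paper's (in particular the uniform variance bound and the parity bookkeeping behind the ``by reductio'' step), but the underlying argument is identical.
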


\begin{proof}
Because of the extra constraint of CCMSP-$2^+$ (i.e., Inequality~(\ref{equ:CCMSP-2^+})), for any solution $x$ to $I_2^+$, if $|M_0(x)| > |M_1(x)|$ (resp., $|M_0(x)| < |M_1(x)|$) then $l'_0(x) > l'_1(x)$ (resp., $l'_0(x) < l'_1(x)$).

Consider two solutions $x_1$ and $x_2$ to $I_2^+$ with $|M_0(x_1)| \ge |M_1(x_1)|$ and $|M_0(x_2)| \ge |M_1(x_2)|$ (implying that $L(x_1) = l'_0(x_1)$ and $L(x_2) = l'_0(x_2)$).
If $|M_0(x_1)| > |M_0(x_2)|$ then $l'_0(x_1) > l'_0(x_2)$ because of Inequality~(\ref{equ:CCMSP-2^+}). 
Combining the conclusion $l'_0(x_1) > l'_0(x_2)$ and that $L(x_1) = l'_0(x_1)$ and $L(x_2) = l'_0(x_2)$ gets $L(x_1) > L(x_2)$. Thus the first claim holds. 
By the first claim, it can get the second one.
\end{proof}


Before showing the NP-hardness of CCMSP-$2^+$, it is necessary to introduce two related problems: \emph{Partition problem} and \emph{Two-way Balanced Partition problem}, where their formulations are given as follows.

\medskip
\noindent{\bf Partition problem}: Given a multiset $S$ that contains non-negative integers such that $\sum_{e \in S} e$ is even, can $S$ be partitioned into two subsets $S_1$ and $S_2$ such that 
$\sum_{e \in S_1} e = \sum_{e \in S_2} e$?
\medskip

\noindent{\bf Two-way Balanced Partition problem}: Given a multiset $S$ that contains non-negative integers such that both $|S|$ and $\sum_{e \in S} e$ are even, can $S$ be partitioned into two subsets $S_1$ and $S_2$ such that 
$$|S_1| = |S_2| \ \ \textrm{and} \ \ \sum_{e \in S_1} e = \sum_{e \in S_2} e?$$
\medskip

Note that the Partition problem is a well-known NP-hard problem~\citep{hayes2002computing}, for which there is a dynamic programming algorithm with runtime $O(|S| \cdot Sum)$, where $Sum = \sum_{e \in S} e$. 
The NP-hardness of the Partition problem is due to that $Sum$ cannot be upper bounded by a polynomial of the input size $|S| \cdot \log Max$, where $Max$ is the largest element in $S$.
The Two-way Balanced Partition problem can be proved to be NP-hard by reducing the Partition problem to it.
By extending the dynamic programming algorithm for the Partition problem (more specifically, designing a dynamic programming table with $|S| \cdot \frac{Sum}{2} \cdot \lceil \frac{|S|}{2} \rceil$), it is not hard to give an algorithm for the Two-way Balanced Partition problem with runtime $O(|S|^2 \cdot Sum)$.

\begin{lemma}
\label{obs:CCMSP-2^+ np-hard}
Given an instance $I_2^+ = \{n,a,c,d,\gamma,k,\{m_i| 1 \le i \le k\}\}$ of CCMSP-$2^+$, if the number $n$ of jobs considered in $I_2^+$ is odd, then $I_2^+$ is polynomial-time solvable; otherwise, NP-hard.
\end{lemma}

\begin{proof}
For the computational hardness of CCMSP-$2^+$, the following discussion is to show that any instance of the Two-way Balanced Partition problem can be reduced to an instance of CCMSP-$2^+$ that has an even number of jobs, and any instance of CCMSP-$2^+$ that has an odd number of jobs can be solved in polynomial-time.

{\bf Case 1.} The instances of CCMSP-$2^+$ that consider odd number of  jobs.

Let $I_2^+ = (a,c,d,\gamma,k, \{m_i | i \in [1,k]\})$ be an arbitrary instance of CCMSP-$2^+$, where the number $n = \sum_{i=1}^k m_i$ of jobs considered is odd. 
Now we give a way to construct an optimal solution $x^*$ to $I_2^+$.
By Lemma~\ref{lem:optimal solution to CCMSP-2^+}, $x^*$ is an equal solution to $I_2^+$ that is assumed to have $|M_0(x^*)| =  |M_1(x^*)| +1 = \frac{n+1}{2}$. 
Because of the extra constraint of CCMSP-$2^+$ (i.e., Inequality~(\ref{equ:CCMSP-2^+})), $l'_0(x^*) > l'_1(x^*)$. 
Thus we only need to analyze the optimal allocation approach of the $\frac{n+1}{2}$ jobs on $M_0$ with respect to $x^*$ such that $cov[l_0(x^*)]$ is minimized.
By Lemma~\ref{lem:1}, $cov[l_0(x^*)] \ge 2ck \binom{\frac{n+1}{2k}}{2}$, where the low bound $2ck \binom{\frac{n+1}{2k}}{2}$ can be achieved by allocating $\frac{n+1}{2k}$ jobs to $M_0$ from each group. However, $\frac{n+1}{2k}$ may be not an integer, and $m_i$ may be less than $\frac{n+1}{2k}$ for some $i \in [1,k]$. 
Fortunately, by Lemma~\ref{lem:1}, it is not hard to derive that the optimal allocation approach of the $\frac{n+1}{2}$ jobs on $M_0$ with respect to $x^*$ such that $cov[l_0(x^*)]$ is minimized can be obtained as follows: For each $i \in [1,k]$ (the values of $\alpha_{i}(x^*)$ for all $i \in [1,k]$ are specified in order, and assume that the values of $\alpha_{j}(x^*)$ for all $j \in [1,i-1]$ have been specified), if 
$$m_i < (\frac{n+1}{2} - \sum_{j = 1}^{i-1} \alpha_{j}(x^*)) / (k+1-i),$$
then let $\alpha_{i}(x^*) = m_i$; otherwise, let $\alpha_{i}(x^*) = \lceil (\frac{n+1}{2} - \sum_{j = 1}^{i-1} \alpha_{j}(x^*)) / (k+1-i) \rceil $.
Observe that as $m_1 \le m_2 \le \ldots \le m_k$, once $\alpha_{i}(x^*)$ is set as $\lceil (\frac{n+1}{2} - \sum_{j = 1}^{i-1} \alpha_{j}(x^*)) / (k+1-i) \rceil$ for some $i \in [1,k]$, then for all $p \in [i+1,k]$, $0 \le \alpha_{i}(x^*) - \alpha_{p}(x^*) \le 1$.
In a word, an equal solution $x$ to $I^+_2$ is optimal if and only if $x$ satisfies the following property (assume that $|M_0(x)| > |M_1(x)|$).

\smallskip
\noindent {\bf Property-Odd}: For each $i \in [1,k]$, either $\alpha_{i}(x) = m_i$ or $0 \le \alpha_{max}(x) - \alpha_{i}(x) \le 1$, where $\alpha_{max}(x) = \max \{\alpha_{1}(x), \ldots, \alpha_{k}(x)\}$.
\smallskip

{\bf Case 2.} The instances of CCMSP-$2^+$ that consider even number of  jobs.

Now we give an instance $I_2^+ = (a,c,d,\gamma,k, \{m_i | i \in [1,k]\})$ of CCMSP-$2^+$, where $k$ is even and $m_i$ is odd for all $i \in [1,k]$, and show that to obtain an optimal solution to $I_2^+$, it is necessary to decide whether $I_2^+$ has a balanced solution $x'$ with $cov[l_0(x')] = cov[l_1(x')]$, as for any solution $x$ to $I_2^+$, by Lemma~\ref{lem:1} we have
\begin{align*}
    (cov[l_0(x)] + cov[l_1(x)])/2c &\ge (cov[l_0(x')] + cov[l_1(x')])/2c \\
    & = \sum_{i=1}^k \left(\binom{\lceil \frac{m_i}{2} \rceil}{2} + \binom{\lfloor \frac{m_i}{2} \rfloor}{2} \right).
\end{align*}

If there exists such a balanced solution $x'$, then it is optimal to $I_2^+$ due to Lemma~\ref{lem:optimal solution to CCMSP-2^+}.

Let $\alpha_i(x') = \frac{m_i}{2} + \delta_i$ and $\beta_i(x') = \frac{m_i}{2} - \delta_i$ for any $i \in [1,k]$, where $\delta_i \in \{\frac{1}{2}, -\frac{1}{2}\}$ (as $x'$ is balanced and each $m_i$ is odd). Then 
\begin{eqnarray*}
\begin{aligned}
(cov[l_0(x')] - cov[l_1(x')])/2c &= \sum_{i =1}^k \left(\binom{\frac{m_i}{2} + \delta_i}{2} - \binom{\frac{m_i}{2} - \delta_i}{2}\right) \\
&= \sum_{i=1}^k (m_i-1)\delta_i.
\end{aligned}
\end{eqnarray*}
The above discussion shows that to obtain an optimal solution to $I_2^+$, it is necessary to decide whether there exists a balanced solution $x'$ such that 
\begin{eqnarray}
\label{equ:2 CCMSP-2^+ np-hard}
\sum_{i =1}^k (m_i-1)\delta_i = 0.
\end{eqnarray}
That is, it is to decide whether there is a partition $\{S_0,S_1\}$ of $[1,k]$ such that $\sum_{i \in S_0} (m_i-1) = \sum_{j \in S_1} (m_j-1)$. Moreover, the balance of $x'$ implies that $|S_0| = |S_1|$.

Therefore, the problem can be transformed to decide whether the instance $S = \{m_i -1 \ | \ i \in [1,k]\}$ of the Two-way Balanced Partition problem is {\sc yes}, where $|S| = k$ and $\max \{m_i -1 \ | \ i \in [1,k]\} = m_k -1 \le n-1$. 
Note that the input size of $I_2^+$ is $O(k \log {m_k} + \log (a + c + d + \gamma))$, where 
$$O(k \log {m_k} + \log (a + c + d + \gamma)) = O(k \log {m_k} + \log n) = O(k \log {m_k})$$ 
if $c$, $d$ and $\gamma$ are small enough and $a = \theta(\sqrt{nd+n^2c})$ (due to the extra constraint of CCMSP-$2^+$).

Now we show that any instance of the Two-way Balanced Partition problem can be polynomial-time reduced to an instance of CCMSP-$2^+$.
Consider an arbitrary instance $S = \{e_i | i \in [1,k]\}$ of the Two-way Balanced Partition problem. 
If some $e_i$ is odd, then we construct an instance $S' = \{2e_i | i \in [1,k]\}$, and note that $S$ is a yes-instance of the Two-way Balanced Partition problem if and only if $S'$ is a yes-instance of the Two-way Balanced Partition problem. 
Based on $S'$, we construct an instance 
$$I_{2'}^+ = (a,c,d, \gamma, k, \{m_i = 2e_i +1|i \in [1,k]\})$$ 
of CCMSP-$2^+$, where $c$, $d$ and $\gamma$ are small enough and $a = \theta(\sqrt{nd+n^2c})$ such that the extra constraint of CCMSP-$2^+$ is met.
By the discussion given above, it can decide whether or not $S$ is a yes-instance of the Two-way Balanced Partition problem, according to the makespan of the optimal solution obtained for $I_{2'}^+$.


In summary, CCMSP-$2^+$ is NP-hard. 
\end{proof}

By Lemma~\ref{obs:CCMSP-2^+ np-hard}, we have the following corollary.

\begin{corollary}
\label{cor:CCMSP-2 np-hard}
CCMSP-2 is NP-hard. 
\end{corollary}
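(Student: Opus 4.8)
The plan is to derive the corollary directly from Lemma~\ref{lem:CCMSP-2^+ np-hard} by observing that CCMSP-$2^+$ is nothing but CCMSP-2 restricted to those instances whose parameters $(a,c,d,\gamma)$ additionally satisfy Inequality~(\ref{equ:CCMSP-2^+}). On such instances the load measure $l'_t(x)$, the surrogate objective $L(x)$, the feasibility test $\Pr(l_t(x)>M)\le\gamma$, and hence the notion of an optimal solution are exactly the ones used by CCMSP-2: imposing Inequality~(\ref{equ:CCMSP-2^+}) shrinks the admissible set of inputs but does not alter the problem's semantics. Consequently every valid instance of CCMSP-$2^+$ is simultaneously a valid instance of CCMSP-2 with the same set of optimal solutions and the same optimal makespan.

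Concretely, I would revisit the reduction constructed in the proof of Lemma~\ref{lem:CCMSP-2^+ np-hard}: from an arbitrary instance $S=\{e_i\mid i\in[1,k]\}$ of the Two-way Balanced Partition problem one forms $S'=\{2e_i\mid i\in[1,k]\}$ and then the CCMSP-$2^+$ instance $I_{2'}^+=(a,c,d,\gamma,k,\{m_i=2e_i+1\mid i\in[1,k]\})$ with $c,d,\gamma$ small and $a=\theta(\sqrt{nd+n^2c})$ chosen so that Inequality~(\ref{equ:CCMSP-2^+}) holds. I would then note that this very instance is already a legal CCMSP-2 instance — all jobs share the same expected processing time $a$ and variance $d>0$, all groups share the same covariance $c\ge 0$, and the group sizes $m_i$ are allowed to differ — and that the decision ``does the optimal makespan of $I_{2'}^+$ equal the balanced value $\sqrt{\tfrac{1-\gamma}{\gamma}\big(nd+2c\sum_{i=1}^k(\binom{\lceil m_i/2\rceil}{2}+\binom{\lfloor m_i/2\rfloor}{2})\big)}+\tfrac{n}{2}a$'', which by the proof of Lemma~\ref{lem:CCMSP-2^+ np-hard} is equivalent to ``$S$ is a yes-instance'', is independent of whether we regard $I_{2'}^+$ as a CCMSP-$2^+$ or a CCMSP-2 instance. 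Hence $S\mapsto I_{2'}^+$ is a polynomial-time reduction from the (NP-hard) Two-way Balanced Partition problem to CCMSP-2, which yields the corollary.

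There is essentially no hard step here; the only point deserving one line of care is verifying that the CCMSP-$2^+$ instances produced in the proof of Lemma~\ref{lem:CCMSP-2^+ np-hard} meet the strictly weaker definitional requirements of CCMSP-2, which is immediate from their construction. Equivalently, the argument can be phrased purely as problem containment: any polynomial-time algorithm solving (the optimisation or decision version of) CCMSP-2 would in particular solve every CCMSP-$2^+$ input, so Lemma~\ref{lem:CCMSP-2^+ np-hard} gives the claim unless $\mathrm{P}=\mathrm{NP}$.
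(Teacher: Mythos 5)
Your proposal is correct and matches the paper's (implicit) argument: the paper derives the corollary directly from Lemma~\ref{lem:CCMSP-2^+ np-hard} on the grounds that CCMSP-$2^+$ is simply CCMSP-2 restricted to instances satisfying Inequality~(\ref{equ:CCMSP-2^+}), so NP-hardness of the restriction transfers to the general problem. Your additional check that the instances $I_{2'}^+$ produced in the reduction are legal CCMSP-2 instances is exactly the one-line verification needed.
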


Now we consider the computational complexity of CCMSP-$2^-$.
Given an instance $I_2^-$ of CCMSP-$2^-$, to obtain its optimal solution, it is necessary to decide whether $I_2^-$ has a solution $x'$ such that 
$$ cov[l_0(x')] + cov[l_1(x')] = \sum_{i=1}^k \left( \binom{\lceil \frac{m_i}{2} \rceil}{2} + \binom{\lfloor \frac{m_i}{2} \rfloor}{2} \right) $$  and $cov[l_0(x')] = cov[l_1(x')].$
Then using the reasoning given in the proof of Lemma~\ref{obs:CCMSP-2^+ np-hard}, we get the NP-hardness of CCMSP-$2^-$.

\begin{lemma}
\label{lem:CCMSP-2^- np-hard}
CCMSP-$2^-$ is NP-hard.   
\end{lemma}

\subsection{Theoretical Performance for RLS on CCMSP-$2^+$}

For ease of notation, denoted by CCMSP-$2^+$-Even the variant of CCMSP-$2^+$ that considers an even number of jobs, and by CCMSP-$2^+$-Odd the variant of CCMSP-$2^+$ that considers an odd number of jobs.

\begin{theorem}
\label{thm:RLS for CCMSP-2^+ odd}
Given an instance $I_2^+ = (a,c,d,\gamma,k, \{m_i \mid i \in [1,k]\})$ of CCMSP-$2^+$-Odd, RLS takes expected runtime $O(\sqrt{k} n^3)$ to obtain an optimal solution to $I_2^+$, where $n = \sum_{i=1}^k m_i$.
\end{theorem}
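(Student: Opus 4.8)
The plan is to mirror the two-phase strategy used for CCMSP-1 (Theorem~\ref{thm:RLS CCMSP-1}), but with the extra complication that, for CCMSP-$2^+$-Odd, the notion of an optimal solution is governed by Property-Odd rather than simply being a balanced-solution. First I would argue that, thanks to the extra constraint (Inequality~(\ref{equ:CCMSP-2^+})) and Lemma~\ref{lem:optimal solution to CCMSP-2^+}, any solution accepted by the RLS after it first reaches an equal-solution must remain an equal-solution, and moreover that among equal-solutions the fitness $L(x)$ is a strictly increasing function of $cov[l_{t(x)}(x)] = c\sum_{i=1}^k\bigl(\binom{\alpha_i(x)}{2}+\binom{\beta_i(x)}{2}\bigr)$ on the fuller machine (here $n$ odd forces $|M_0(x)|=|M_1(x)|+1=\tfrac{n+1}{2}$, say). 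So after Phase~1 the RLS is performing, in effect, local search to minimize this covariance potential, and by the discussion in Case~1 of Lemma~\ref{lem:CCMSP-2^+ np-hard} its minimum is attained exactly at the Property-Odd configurations.

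\textbf{Phase 1: reaching the first equal-solution.} I would reuse essentially verbatim the five-case mutation analysis from the proof of Theorem~\ref{thm:RLS CCMSP-1}: starting from $x_0$ with $|M_0(x_0)|>|M_1(x_0)|$, flipping a single $t(x_0)$-bit (probability $\Omega(1/4)$) strictly decreases $L$ and is accepted, and no accepted move ever increases $|M_{t(x)}(x)|$. Since $||M_0(x_0)|-|M_1(x_0)||\le n$, additive drift gives expected time $O(n)$ to reach the first equal-solution $x_1$.

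\textbf{Phase 2: from an equal-solution to a Property-Odd (optimal) solution.} Here I would take as potential the covariance value $b(x)=\sum_{i=1}^k\bigl(\binom{\alpha_i(x)}{2}+\binom{\beta_i(x)}{2}\bigr)$ on the equal-solution $x$ (both machines matter now, unlike CCMSP-1, because the group sizes differ and $t(x)$ can be either machine), and argue that only 2-bit moves flipping one 0-bit and one 1-bit are relevant, and such a move is accepted iff it does not increase $b(x)$. The crucial combinatorial claim is: if $x$ is an equal-solution that violates Property-Odd, then there is an accepted move that strictly decreases $b(x)$ — concretely, pick a group $G_i$ with $\alpha_i$ ``too large'' and a group $G_j$ with $\alpha_j$ ``too small'' (in the sense dictated by Property-Odd and Observation~\ref{obs:1}, i.e.\ $\alpha_i \ge \alpha_j + 2$ among groups not already saturated) and move a job of $G_i$ from $M_0$ to $M_1$ while moving a job of $G_j$ from $M_1$ to $M_0$; by the identity $\binom{\alpha_i}{2}+\binom{\alpha_j}{2}-\binom{\alpha_i-1}{2}-\binom{\alpha_j+1}{2}=\alpha_i-1-\alpha_j\ge 1$ this strictly decreases $b(x)$. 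One must check such a pair of groups (and the needed 0-bit in $G_i$, 1-bit in $G_j$) exists whenever Property-Odd fails; this follows by a counting argument on $\sum_i\alpha_i=\tfrac{n+1}{2}$ analogous to Equality~(\ref{equ:RLS CCMSP-1 2}). The probability the RLS picks precisely one suitable 0-bit and one suitable 1-bit is $\Omega(1/n^2)$; since $0\le b(x)\le \sum_i\binom{m_i}{2}=O(n^2)$ and each accepted move drops $b$ by at least one integer unit, and $b$ never increases, a straightforward drift/counting bound gives expected time $O(n^2)\cdot O(n^2) = O(n^4)$ for Phase~2 — which is too weak. To get $O(\sqrt{k}\,n^3)$ I would instead bound the number of distinct values $b(x)$ can take more carefully: on an equal-solution the $\alpha_i$ differ pairwise by at most a bounded amount once we are close to Property-Odd, so the number of ``plateau levels'' the RLS traverses is $O(\sqrt{k}\,n)$ rather than $O(n^2)$ (this is where the $\sqrt{k}$ enters, via the spread of the $\alpha_i$'s around their mean $\tfrac{n+1}{2k}$), giving $O(\sqrt{k}\,n)\cdot O(n^2)=O(\sqrt{k}\,n^3)$.

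\textbf{Main obstacle.} The delicate point — and the step I expect to require the most care — is the Phase~2 bound on the number of potential levels: showing that the RLS cannot wander through $\Omega(n^2)$ distinct values of $b(x)$ but only $O(\sqrt{k}\,n)$ of them, because every accepted equal-solution already has its $\alpha_i$'s confined to a narrow band (width $O(\sqrt{k})$ in the relevant regime) around the mean. This needs (i) a structural invariant maintained along accepted moves — e.g.\ once the RLS reaches an equal-solution, $\max_i\alpha_i-\min_i'\alpha_{i'}$ over non-saturated groups never increases — and (ii) a quantitative version of the counting argument tying the surplus $\sum_{i:\alpha_i>\text{mean}}(\alpha_i-\text{mean})$ to the remaining $b$-value, so that the total number of improving steps is $O(\sqrt{k}\,n)$. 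Everything else (Phase~1, the acceptance characterization, the $\Omega(1/n^2)$ per-step success probability) is a direct adaptation of the CCMSP-1 arguments.
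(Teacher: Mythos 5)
There is a genuine gap in your Phase~2, and it sits exactly where you flagged the difficulty. Your route to the $\sqrt{k}$ factor --- arguing that the RLS only traverses $O(\sqrt{k}\,n)$ distinct potential levels because the $\alpha_i$'s stay in a narrow band --- is not how the bound is obtained, and I do not see how to make it work: the first equal-solution reached after Phase~1 carries no such structural guarantee (e.g.\ it may place an entire large group on the fuller machine), and the covariance potential genuinely ranges over $\Theta(n^2)$ integer values between its maximum $\binom{(n+1)/2}{2}$ and its minimum $k\binom{(n+1)/(2k)}{2}=\Theta(n^2/k)$, each improving step possibly decreasing it by exactly $1$. The paper's mechanism is different: it keeps the full $\Theta(n^2)$ range of potential values but observes that when the potential $p(x)=cov[l_0(x)]/2c$ is large, the largest group on the fuller machine satisfies $\binom{\alpha_{i_{max}}(x)}{2}\ge p(x)/k$, hence $\alpha_{i_{max}}(x)=\Omega(\sqrt{p(x)/k})$, so the improving swap (a $0$-bit of $G_{i_{max}}$ with a $1$-bit of a deficient, unsaturated group) is generated with probability $\Omega\bigl(\tfrac{1}{n^2}\sqrt{p(x)/k}\bigr)$ rather than the uniform $\Omega(1/n^2)$ you use; summing $O(\sqrt{k}\,n^2/\sqrt{t})$ over $t$ up to $O(n^2)$ gives $O(\sqrt{k}\,n^3)$. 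Without either that variable success probability or a proof of your band-invariant, your argument only establishes $O(n^4)$, which you yourself note is too weak.

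A secondary but real error: your acceptance characterization via $b(x)=\sum_i\bigl(\binom{\alpha_i}{2}+\binom{\beta_i}{2}\bigr)$ is wrong. Since $n$ is odd, the fuller machine always holds $\tfrac{n+1}{2}$ jobs and, by the extra constraint, $L(x)=l'_{t(x)}(x)$ is determined by the covariance of the \emph{fuller machine alone}; a swap is accepted iff it does not increase $cov[l_{t(x)}]$, while its effect on the emptier machine's covariance (namely $+2c(\beta_i-\beta_j+1)$, which can be positive) is irrelevant to acceptance. Consequently $b(x)$ can increase along accepted moves, so it is not a valid non-increasing potential; the paper instead uses $cov[l_{t(x)}(x)]/2c$. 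Your Phase~1, the identification of optima with Property-Odd, and the existence of a strictly improving swap whenever Property-Odd fails all match the paper and are fine.
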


\begin{proof}
Let $x_0$ be the initial solution maintained by RLS that is assumed to be not an equal solution. 
The optimization process of RLS based on $x_0$ discussed below is divided into two phases. 

{\bf Phase-1}. Obtaining the first equal solution $x_1$ based on $x_0$.

Let the potential of the solution $x$ maintained during Phase-1 be $p_1(x) = ||M_0(x)| - |M_1(x)||$.
Observe that $0 \le p_1(x) \le n$.
Moreover, by Lemma~\ref{lem:optimal solution to CCMSP-2^+}, we have that for any two solutions $x'$ and $x''$ to $I_2^+$, if $p_1(x') < p_1(x'')$ then $L(x') < L(x'')$.

The mutation of RLS flipping exactly a $t(x)$-bit of $x$ (i.e., if $t(x) = 0$ then flipping a 0-bit; otherwise, a 1-bit), can be generated by RLS with probability $1/4$, and the obtained solution $x'$ has a smaller potential value compared to $x$, more specifically, $p_1(x') = p_1(x) -2$. Combining $p_1(x') = p_1(x) -2$ with the conclusion given above gets $L(x') < L(x)$, and $x'$ can be accepted by RLS.  
Then using the Additive Drift analysis~\citep{he2004study}, based on the upper bound $n$ of $p_1(x_0)$, we can derive that Phase-1 takes expected runtime $O(n)$ to obtain the first equal solution $x_1$, based on $x_0$.
Moreover, note that after the acceptance of $x_1$, any non-equal solution cannot be accepted anymore by Lemma~\ref{lem:optimal solution to CCMSP-2^+}. W.l.o.g., assume that $|M_0(x_1)| = |M_1(x_1)| + 1 = \frac{n+1}{2}$.

{\bf Phase-2}. Obtaining the first optimal solution based on $x_1$. 

{\bf Case (1)}. $cov[l_0(x_1)] < cov[l_1(x_1)]$. 

Consider a mutation flipping exactly one bit of $x_1$, and denote by $x'_1$ the obtained solution. 
If the 0-bit is fipped, then $cov[l_1(x'_1)] > cov[l_1(x_1)] > cov[l_0(x_1)]$ and $E[l_1(x'_1)] = E[l_0(x_1)] = E[l_0(x'_1)] +a = E[l_1(x_1)] + a$. 
Thus $L(x'_1) = l'_1(x'_1) > L(x_1) = l'_0(x_1)$, and $x'_1$ cannot be accepted by RLS.
Besides, once the  1-bit is flipped, $x'_1$ is not an equal solution and cannot be accepted. 
Therefore, for Case (1), any mutation flipping exactly one bit of $x_1$ cannot get an improved solution.

Consider a mutation flipping exactly two bits of $x_1$, and denote by $x'_1$ the obtained solution.
If the two flipped bits are both 1 or both 0, then the obtained solution $x'_1$ is not an equal solution and cannot be accepted by RLS.
Thus the following discussion only considers the mutation flipping a 0-bit of $G_i$ and a 1-bit of $G_j$ in $x_1$.
That is,  $|M_0(x'_1)| = |M_0(x_1)| = |M_1(x'_1)| + 1 = |M_1(x_1)| + 1$. 
To decide whether the obtained solution $x'_1$ is improved, we only need to consider the covariances of $l_0(x_1)$ and $l_0(x'_1)$, where
\begin{eqnarray*}
\begin{aligned}
& cov[l_0(x_1)] - cov[l_0(x'_1)] \\
=& 2c \left( \binom{\alpha_i(x_1)}{2} + \binom{\alpha_j(x_1)}{2} - \binom{\alpha_i(x_1) - 1}{2} - \binom{\alpha_j(x_1) +1}{2}\right) \\ 
=& 2c(\alpha_i(x_1) - 1 - \alpha_j(x_1)).
\end{aligned}
\end{eqnarray*}
If $\alpha_i(x_1) - \alpha_j(x_1) \ge 1$ then $cov[l_0(x'_1)] \le cov[l_0(x_1)]$, and $x'_1$ can be accepted by RLS.

Assume that RLS obtains a solution $x^*_1$ from $x_1$, on which all possible mutations flipping exactly one 0-bit and one 1-bit of $x^*_1$ cannot get an improved solution, where the 0-bit and 1-bit are of $G_i$ and $G_j$, respectively.
Note that $|M_0(x^*_1)| = |M_1(x^*_1)| + 1 = \frac{n+1}{2}$ based on the above discussion. 
Then $x^*_1$ satisfies the property: For any $i,j \in [1,k]$ with $i \neq j$, if $\alpha_i(x^*_1) - \alpha_j(x^*_1) \ge 2$, then all jobs of $G_j$ are allocated to $M_0$ with respect to $x^*_1$, i.e., $|M_0(x^*_1) \cap G_j| = \alpha_j(x^*_1) = m_j$ and $\beta_j(x^*_1) = 0$.
In other words, for any $j \in [1,k]$, either $0 \le \alpha_{max}(x^*_1) - \alpha_j(x^*_1) \le 1$ or $\alpha_j(x^*_1) = m_j$, where $\alpha_{max}(x^*_1) = \max \{\alpha_1(x^*_1), \ldots, \alpha_k(x^*_1)\}$. 
Thus $x^*_1$ satisfies Property-Odd given in the proof of Lemma~\ref{obs:CCMSP-2^+ np-hard}, and $x^*_1$ is an optimal solution to $I_2^*$.

Now we consider the expected runtime of RLS to obtain $x^*_1$ based on $x_1$.
Let the potential of the solution $x$ maintained during the phase be $p_{21}(x) = cov[l_0(x)]/2c$.
Note again that the above discussion shows $|M_0(x)| = |M_1(x)| + 1$.
For $i_{max} = \arg \max \{\alpha_1(x), \ldots, \alpha_k(x)\}$, we have $\binom{\alpha_{i_{max}}(x)}{2} \ge \frac{p_{21}(x)}{k}$, implying that 
$$\alpha_{i_{max}}(x) \ge \frac{\sqrt{1+ \frac{8 p_{21}(x)}{k}}+1}{2}.$$
Since $x$ does not satisfy Property-Odd, there exists a $j' \in [1, k]$ with $j' \neq i_{max}$ such that $\alpha_{i_{max}}(x) - \alpha_{j'}(x) \ge 2$ but $\alpha_{j'}(x) < m_{j'}$.
Thus $\beta_{j'}(x) \ge 1$.
The mutation flipping a 0-bit of $G_{i_{max}}$ and a 1-bit of $G_{j'}$ can be generated by RLS with probability 
$$\Omega(\frac{\alpha_{i_{max}}(x) \cdot \beta_{j'}(x)}{n^2}) = \Omega(\frac{\alpha_{i_{max}}(x)}{n^2}) = \Omega(\frac{1}{n^2} \sqrt{\frac{p_{21}(x)}{k}}),$$ 
and the potential value of the resulting solution is decreased by at least 1 compared to $x$.
Observe that the upper bound of $p_{21}(x_1)$ and lower bound of $p_{21}(x^*_1)$ are $\binom{\frac{n+1}{2}}{2}$ and $k \binom{\frac{n+1}{2k}}{2}$, respectively.
Considering all possible potential values of $x$, we have that the expected runtime of the phase to obtain $x^*_1$ based on $x_1$ can be bounded by 
\begin{align*}
    \sum_{t = k \binom{\frac{n+1}{2k}}{2}}^{\binom{\frac{n+1}{2}}{2}} O(\frac{\sqrt{k} n^2}{\sqrt{t}}) 
= O(\sqrt{k} n^2) \sum_{t = k \binom{\frac{n+1}{2k}}{2}}^{\binom{\frac{n+1}{2}}{2}} t^{-\frac{1}{2}} 
=& O(\sqrt{k} n^2) \int_{k \binom{\frac{n+1}{2k}}{2}}^{\binom{\frac{n+1}{2}}{2}} t^{-\frac{1}{2}} dt \\
=& O(\sqrt{k}n^3).
\end{align*}

{\bf Case (2)}. $cov[l_0(x_1)] \ge cov[l_1(x_1)]$. 

The discussion for Case (2) is similar to that for Case (1), and the difference is that the mutation flipping one bit may generate an improved solution. Thus the fuller machine may be $M_0$ or $M_1$. 
However, no matter which one is the fuller machine, the value $cov[l_{t(x)}(x)]$ cannot increase during Phase-2. 
Thus we let the potential of the solution $x$ maintained during the phase be $p_{22}(x) = cov[l_{t(x)}(x)]/2c$.

By the reasoning given for Case (1), for a mutation flipping exactly a 0-bit of $G_i$ and a 1-bit of $G_j$ in $x$, if $t(x) = 0$ and $\alpha_i(x) - \alpha_j(x) > 1$, or $t(x) = 1$ and $\beta_j(x) - \beta_i(x) > 1$, then $cov[l_{t(x')}(x')] < cov[l_{t(x)}(x)]$ for the obtained solution $x'$ (note that $t(x') = t(x)$ as the number of jobs on the two machines do not change with respect to the mutation). Thus $x'$ is improved and can be accepted by RLS. Afterwards, using the reasoning similar to that given for Case (1), we can get that Phase (2) takes expected runtime $O(\sqrt{k} n^3)$ to obtain an optimal solution to $I_2^+$ under Case (2).

Combining the expected runtime of Phase (1) (i.e., $O(n)$), and that of Phase (2) (i.e., $O(\sqrt{k} n^3)$), the total expected runtime of RLS to obtain an optimal solution to $I_2^+$ is $O(\sqrt{k} n^3)$.
\end{proof}

Now we analyze the properties of the local optimal solution obtained by RLS for the instance of CCMSP-$2^+$-Even, and the expected runtime of RLS to get it.  

\begin{theorem}
\label{thm:RLS for CCMSP-2^+ even}
Considering an instance $I_2^+ = (a,c,d,\gamma,k, \{m_i | i \in [1,k]\})$ of CCMSP-$2^+$-Even, RLS takes expected runtime $O(n^4)$ to obtain a local optimal solution $x^*$ that is an equal solution such that either $|cov[l_0(x^*)] - cov[l_1(x^*)]| \le 2c(\beta_i(x^*) - \beta_j(x^*) + 1)$ for any $i,j \in [1,k]$ with $\alpha_i(x^*) > \alpha_j(x^*) +1$ or $cov[l_{t(x^*)}(x^*)] \le \frac{c}{4} (\frac{n^2}{k} - 2n + k)$, where $n = \sum_{i=1}^k m_i$. Additionally, the approximation ratio of the solution $x^*$ is $1+\frac{2}{n}$.
\end{theorem}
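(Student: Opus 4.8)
The plan is to reuse the two-phase structure of the proof of Theorem~\ref{thm:RLS for CCMSP-2^+ odd}, with the twist that when $n$ is even an equal-solution has $|M_0(x)|=|M_1(x)|=n/2$, so neither machine is a priori the fuller one and $t(x)$ is governed solely by the covariances. \textbf{Phase 1} (reaching the first equal-solution): from an arbitrary initial solution $x_0$ I would take the potential $p_1(x)=||M_0(x)|-|M_1(x)||\in[0,n]$. By Lemma~\ref{lem:optimal solution to CCMSP-2^+}, the single-bit flip that removes a job from the fuller machine is always accepted and decreases $p_1$ by $2$, and the RLS generates it with probability $\Omega(1/4)$; the Additive Drift Theorem~\citep{he2004study} then gives expected time $O(n)$ until the first equal-solution $x_1$, after which (again by Lemma~\ref{lem:optimal solution to CCMSP-2^+}) no non-equal-solution is ever accepted.

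\textbf{Phase 2 setup.} For an equal-solution $x$ I define the potential $p(x)=cov[l_{t(x)}(x)]/(2c)=\sum_{i=1}^{k}\binom{|M_{t(x)}(x)\cap G_i|}{2}$, an integer in $[0,\binom{n/2}{2}]$, so it attains only $O(n^2)$ values. The normalising observation is that for equal-solutions $E[l_0(x)]=E[l_1(x)]=na/2$ and $var[l_{t(x)}(x)]=\tfrac n2 d+2c\,p(x)$, whence $L(x)=\tfrac{na}{2}+\sqrt{\tfrac{1-\gamma}{\gamma}\bigl(\tfrac n2 d+2c\,p(x)\bigr)}$ is strictly increasing in $p(x)$; therefore $L(x')\le L(x)\iff p(x')\le p(x)$ for equal-solutions $x,x'$, so along the run $p$ is non-increasing and it suffices to bound how often and how fast it drops. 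Flipping a $0$-bit of $G_i$ and a $1$-bit of $G_j$ with $i\ne j$ and $\beta_j\ge 1$ gives, exactly as computed in Theorem~\ref{thm:RLS for CCMSP-2^+ odd}, $cov[l_0(x')]=cov[l_0(x)]-2c(\alpha_i-1-\alpha_j)$ and $cov[l_1(x')]=cov[l_1(x)]+2c(\beta_i-\beta_j+1)$ (taking $t(x)=0$ w.l.o.g.), so such a move is accepted with $p(x')<p(x)$ precisely when $\alpha_i>\alpha_j+1$ and $cov[l_0(x)]-cov[l_1(x)]>2c(\beta_i-\beta_j+1)$.

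\textbf{Key structural step.} It then remains to show that an equal-solution $x$ satisfying neither listed condition admits such an improving swap. If no improving pair exists among the $j$ with $\beta_j\ge 1$, then the first condition already holds except possibly for ``degenerate'' pairs $(i,j)$ with $\alpha_i>\alpha_j+1$ but $\beta_j=0$ (a whole group sitting on the fuller machine), and this is exactly where I intend to use the failure of the second condition: $cov[l_{t(x)}(x)]>\tfrac{c}{4}\bigl(\tfrac{n^2}{k}-2n+k\bigr)$ forces $\sum_i\binom{\alpha_i(x)}{2}$ to exceed its balanced minimum $k\binom{\frac{n}{2k}}{2}$ by enough that, using Observation~\ref{obs:1} together with $\sum_i\alpha_i(x)=n/2$, the fuller machine must still carry a non-degenerate admissible swap, a contradiction. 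Counting bits crudely (at least one admissible $0$-bit and one admissible $1$-bit) shows the RLS produces the improving move with probability $\Omega(1/n^2)$, and the case $t(x)=1$ is symmetric. Since $p$ can therefore decrease at most $O(n^2)$ times and each decrease costs expected time $O(n^2)$, Phase 2 costs $O(n^4)$ in expectation; together with the $O(n)$ of Phase 1 this yields the claimed $O(n^4)$.

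I expect the hard part to be precisely the degenerate-pair analysis in the structural step --- turning the quantitative covariance gap guaranteed by the failure of the second condition into the existence of a still-admissible non-degenerate swap, which needs a careful combination of Observation~\ref{obs:1} with $\sum_i\alpha_i(x)=n/2$ --- together with the bookkeeping of those $p$-decreasing steps that flip $t(x)$ and hence switch which machine the potential is measured on; the equivalence $L(x')\le L(x)\iff p(x')\le p(x)$ should, however, make the latter essentially automatic.
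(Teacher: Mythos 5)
Your skeleton matches the paper's proof almost exactly: the same $O(n)$ Phase-1 via $p_1(x)=||M_0(x)|-|M_1(x)||$, the same Phase-2 potential $p(x)=cov[l_{t(x)}(x)]/2c$ with $O(n^2)$ attainable values, the same covariance-change formulas $cov[l_0(x')]=cov[l_0(x)]+2c(\alpha_j-\alpha_i+1)$ and $cov[l_1(x')]=cov[l_1(x)]+2c(\beta_i-\beta_j+1)$, the same $\Omega(1/n^2)$ probability per improving swap, and the same $O(n^2)\cdot O(n^2)=O(n^4)$ conclusion. Your observation that for equal-solutions $L(x')\le L(x)\iff p(x')\le p(x)$ is a clean way of making explicit what the paper uses implicitly.

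The genuine gap is that your ``key structural step'' is asserted rather than proven, and it is exactly the part that carries the content of the theorem. Two things are missing. First, the specific bound $\frac{c}{4}(\frac{n^2}{k}-2n+k)$ in the second condition has to be derived: in the paper it comes from the case in which no pair satisfies $\alpha_i>\alpha_j+1$, so that every $\alpha_j$ lies in $\{\alpha_{max},\alpha_{max}-1\}$; writing $\tau=|\{j:\alpha_j=\alpha_{max}-1\}|$ and using $\sum_j\alpha_j=n/2$ gives $cov[l_0]/2c=\frac{n^2}{8k}-\frac{n}{4}-(\frac{\tau^2}{2k}-\frac{\tau}{2})\le\frac{n^2}{8k}-\frac{n}{4}+\frac{k}{8}$, the minimum of $\frac{\tau^2}{2k}-\frac{\tau}{2}$ being $-\frac{k}{8}$ at $\tau=k/2$. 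You never perform this computation, yet without it the constant in condition B is unexplained. Second, your proposed resolution of the degenerate pairs ($\alpha_i>\alpha_j+1$ but $\beta_j=0$, so the swap is unavailable) via the failure of condition B is only a hope: you give no argument that $cov[l_{t(x)}(x)]>\frac{c}{4}(\frac{n^2}{k}-2n+k)$ together with $\sum_i\alpha_i=n/2$ and Observation~\ref{obs:1} actually produces a non-degenerate admissible improving swap, and it is not obvious that it does. Note that the paper does not take your route at all: it does not prove ``neither condition implies an improving swap exists''; it argues directly at a local optimum that the non-existence of improving swaps forces condition A for the pairs it considers, and falls back to the $\tau$-computation otherwise (the paper itself is silent on the $\beta_j=0$ pairs, so your worry is legitimate, but flagging the difficulty is not the same as closing it). As it stands, the disjunctive conclusion of the theorem is not established by your proposal.
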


\begin{proof}
Let $x_0$ be the initial solution maintained by RLS that is assumed to be not an equal solution. 
The proof runs in a similar way to that of Theorem~\ref{thm:RLS for CCMSP-2^+ odd}, dividing the optimization process into two phases: {\bf Phase-1}, obtaining the first equal solution $x_1$ from $x_0$; {\bf Phase-2}, optimizing the solution $x_1$.
Moreover, the analysis for Phase-1 is the same as that given in the proof of Theorem~\ref{thm:RLS for CCMSP-2^+ odd}, i.e., Phase-1 takes expected runtime $O(n)$. 

Now we consider Phase-2.
W.l.o.g., assume that $cov[l_0(x_1)] > cov[l_1(x_1)]$.
Let $\Delta(x_1) = cov[l_0(x_1)] - cov[l_1(x_1)]$.
Observe that any mutation flipping exactly one bit or two 0-bits or two 1-bits of $x_1$ cannot get an improved solution. Thus the following discussion only considers the mutation flipping exactly a 0-bit of $G_i$ and a 1-bit of $G_j$ in $x_1$. Denote by $x'_1$ the obtained solution. We have 
\begin{eqnarray*}
\begin{aligned}
&cov[l_0(x'_1)] \\
=& cov[l_0(x_1)] - \\
& \qquad 2c \left[ \left(\binom{\alpha_i(x_1)}{2} + \binom{\alpha_j(x_1)}{2} \right) - \left(\binom{\alpha_i(x_1)-1}{2} + \binom{\alpha_j(x_1)+1}{2} \right) \right] \\
=& cov[l_0(x_1)] + 2c (\alpha_j(x_1) - \alpha_i(x_1)+1)
\end{aligned}
\end{eqnarray*}
and
\begin{eqnarray*}
\begin{aligned}
&cov[l_1(x'_1)] \\ 
=& cov[l_1(x_1)] - \\
& \qquad 2c \left[ \left(\binom{\beta_i(x_1)}{2} + \binom{\beta_j(x_1)}{2} \right) - \left(\binom{\beta_i(x_1)+1}{2} + \binom{\beta_j(x_1)-1}{2} \right) \right] \\
=& cov[l_1(x_1)] + 2c (\beta_i(x_1) - \beta_j(x_1) +1).
\end{aligned}
\end{eqnarray*}
If $\alpha_j(x_1) \le \alpha_i(x_1) -1$ (i.e., $cov[l_0(x'_1)] \le cov[l_0(x_1)]$) and $\beta_i(x_1) - \beta_j(x_1) +1 \le \Delta(x_1)/2c$ (i.e., $cov[l_1(x'_1)] \le cov[l_0(x_1)]$), then $L(x'_1) \le L(x_1)$ and $x'_1$ can be accepted by RLS, and
\begin{eqnarray*}
\begin{aligned}
cov[l_0(x'_1)] - cov[l_1(x'_1)] 
&= \Delta(x_1) + 2c \left[ \left(\alpha_j(x_1) + \beta_j(x_1)) - (\alpha_i(x_1) + \beta_i(x_1) \right) \right] \\
&= \Delta(x_1) + 2c(m_j - m_i).
\end{aligned}
\end{eqnarray*}

Now we assume that RLS obtains a solution $x^*_1$ such that any mutation flipping a 0-bit and a 1-bit of $x^*_1$ cannot get an improved solution, and $cov[l_0(x^*_1)] \ge cov[l_1(x^*_1)]$. 
Let $i$ be an arbitrary integer in $[1,k]$.  
The above discussion shows that for any $j \in [1,k]$, if $\alpha_j(x^*_1) < \alpha_{i}(x^*_1) - 1$ then 
$$\beta_{i}(x^*_1) - \beta_j(x^*_1) +1 \ge \Delta(x^*_1)/2c.$$

If there is no $j$ with $\alpha_j(x^*_1) < \alpha_{i}(x^*_1) - 1$, then for each $j \in [1,k]$, $0 \le \alpha_{{max}}(x^*_1) - \alpha_j(x^*_1) \le 1$, where $\alpha_{max}(x^*_1) = \max\{\alpha_1(x^*_1), \alpha_2(x^*_1), \ldots, \alpha_k(x^*_1)\}$.
Now we bound the value of $cov[l_0(x^*_1)]$. Let $\tau = |\{j \in [1,k]| \alpha_j(x^*_1) = \alpha_{{max}}(x^*_1) - 1\}|$ with $0 \le \tau \le k-1$.
Then $$(k- \tau)\alpha_{{max}}(x^*_1) + \tau(\alpha_{{max}}(x^*_1) -1) = n/2$$
indicates that $\alpha_{{max}}(x^*_1) = \frac{n}{2k} + \frac{\tau}{k}$, and 
\begin{eqnarray*}
\begin{aligned}
cov[l_0(x^*_1)]/2c &= (k- \tau)\binom{\alpha_{{max}}(x^*_1)}{2} + \tau \binom{\alpha_{{max}}(x^*_1)-1}{2} \\
&= k \binom{\alpha_{{max}}(x^*_1)}{2} - \tau \left(\binom{\alpha_{{max}}(x^*_1)}{2} - \binom{\alpha_{{max}}(x^*_1)-1}{2} \right) \\
&= (\frac{n^2}{8k} + \frac{\tau^2}{2k} + \frac{n\tau}{2k} - \frac{n}{4} - \frac{\tau}{2}) - (\frac{n\tau}{2k} + \frac{\tau^2}{k} - \tau)\\
&= \frac{n^2}{8k} - \frac{n}{4} - (\frac{\tau^2}{2k} - \frac{\tau}{2}) \\
& \le \frac{n^2}{8k} - \frac{n}{4} + \frac{k}{8},
\end{aligned}
\end{eqnarray*} 
where the term $\frac{\tau^2}{2k} - \frac{\tau}{2}$ gets its minimum value $-\frac{k}{8}$ when $\tau = \frac{k}{2}$.

Now we analyze the expected runtime of RLS to get $x^*_1$ from $x_1$.
Let the potential function be $p(x) = cov[l_{t(x)}(x)]/2c$, where $x$ is a solution maintained by RLS during the process.
Observe that $p(x)$ cannot increase during the process. 
The probability of RLS to generate such a mutation mentioned above is $\Omega(1/n^2)$, and the potential value decreases by at least 1.
As $p(x_1) - p(x^*_1)$ can be upper bounded by $O(n^2)$, using the Additive Drift analysis~\citep{he2004study} implies RLS takes expected runtime $O(n^4)$ to obtain $x^*_1$ from $x_1$.

In summary, RLS in totoal takes expected runtime $O(n^4)$ to obtain an equal solution $x^*_1$ such that either $|cov[l_0(x^*_1)] - cov[l_1(x^*_1)]| \le 2c(\beta_i(x^*_1) - \beta_j(x^*_1) + 1)$ for any $i,j \in [1,k]$ with $\alpha_i(x^*_1) > \alpha_j(x^*_1) +1$ or $cov[l_{t(x^*_1)}(x^*_1)] \le \frac{c}{4} (\frac{n^2}{k} - 2n + k)$.
For the obtained solution $x^*_1$, we have that $L(x^*_1) \le (\frac{n}{2}+1)a$ and $L(x_{opt}) \ge \frac{n}{2}a$, where $x_{opt}$ is an optimal solution to $I^+_2$. Thus the approximation ratio of $x^*_1$ is $\frac{L(x^*_1)}{L(x_{opt})} \le 1 + \frac{2}{n}$. 
\end{proof}

\subsection{Theoretical Performance for RLS on CCMSP-$2^-$}

As the discussion for the approximation ratio of the local optimal solutions obtained by RLS for the instances of CCMSP-$2^-$ would be very complicated, we first analyze the case that the instances of CCMSP-$2^-$ satisfying some specific condition in the following lemma.

\begin{lemma}
\label{lem:RLS for CCMSP-2^- small instance}
Given an instance $I_2^- = (a,c,d,\gamma,k, \{m_i \mid i \in [1,k]\})$ of CCMSP-$2^-$ such that $[2\sum_{i \in S} (m_i-2)(m_i+1)] - 4 m_k -8 \le 0$, where $S = \{i \in [1,k]| m_i \ge 2\}$, the approximation ratio of the local optimal solution obtained by RLS is at most $\sqrt{3}$.
\end{lemma}
\begin{proof}
To meet the condition of $[2\sum_{i \in S} (m_i-2)(m_i+1)] - 4 m_k -8 \le 0$, $I_2^-$ cannot have a group with size at least 5, and cannot have two groups with size 4 simultaneously.  
Moreover, if $I_2^-$ has a group with size 4 , then it has no group with 3; if $I_2^-$ has no group with size at least 4, then it has at most two groups with size 3. 
Now we analyze a local optimal solution $x$ maintained by RLS for $I_2^-$. 
Observe that there are no two groups $G_i$ and $G_j$ with $i,j \in S$ such that $\alpha_i(x) = m_i$ and $\beta_j(x) = m_j$; otherwise, flipping a 0-bit of $G_i$ and 1-bit of $G_j$ in $x$ obtains an improved solution. 
Thus if there is a group $G_i$ with $i \in S$ such that $\alpha_i(x) = m_i$ (resp., $\beta_i(x) = m_i$), then there is no group $G_j$ with $j \in S$ such that $\beta_j(x) = m_j$ (resp., $\alpha_j(x) = m_j$).

Case (1). $I_2^-$ contains a group with size 4, i.e., $m_k = 4$ and $m_{i} = 2$ for all $i \in S \setminus \{k\}$.


Case (1.1). $|cov[l_0(x)] - cov[l_1(x)]|/2c \ge 2$. W.l.o.g., assume that $cov[l_0(x)] \\ > cov[l_1(x)]$. 

Observe that there is no $i \in S$ such that $\alpha_i(x) = m_i$; otherwise, contradicting the fact that $x$ is local optimal. 
Similarly, $\alpha_k(x) \neq 3$; otherwise, flipping a 0-bit of $G_k$ gets an improved solution.
Thus $\alpha_i(x) \le \frac{m_i}{2}$ for each $i \in S$, implying that Case (1.1) does not exist.

Case (1.2). $|cov[l_0(x)] - cov[l_1(x)]|/2c = 1$. W.l.o.g., assume that $cov[l_0(x)] \\ > cov[l_1(x)]$. 

Observe that there is no $i \in S$ such that $\alpha_i(x) = m_i$.
If $\alpha_k(x) = 3$, then there is at least one group $G_i$ such that $\beta_i(x) = 2$; otherwise, the condition of Case (1.2) cannot hold. 
Afterwards, flipping a 0-bit of $G_k$ and a 1-bit of $G_i$ in $x$ obtains an improved solution, contradicting the fact that $x$ is local optimal. 
If $\alpha_k(x) \le 2$, then there is at least one group $G_i$ with $i \in S \setminus \{k\}$ such that $\alpha_i(x) = 2$ to meet the condition of Case (1.2), contradicting the observation given above. 
Summarizing the above discussion gives that Case (1.2) does not exist.

Case (1.3). $cov[l_0(x)] = cov[l_1(x)]$.
To meet the condition of Case (1.3), if there is a group $G_i$ such that $\alpha_i(x) = m_i$, then there is a group $G_j$ such that $\beta_j(x) = m_j$, contradicting the observation given above. 
Then using the reasoning similar to that given for Case (1.2), we can derive that $cov[l_0(x)] = cov[l_1(x)] = 2c$ (specifically, $\alpha_i(x) = \frac{m_i}{2}$ for each $i \in S$).

Case (2). $I_2^-$ contains exactly two groups with size 3 (i.e., $m_k = m_{k-1} = 3$).

Case (2.1). $|cov[l_0(x)] - cov[l_1(x)]|/2c \ge 2$. W.l.o.g., assume that $cov[l_0(x)] \\ > cov[l_1(x)]$. 

Observe that there is no $i \in S$ such that $\alpha_i(x) = m_i$, and $\alpha_k(x) \neq 2$ and $\alpha_{k-1}(x) \neq 2$; otherwise, contradicting the fact that $x$ is local optimal. 
That is, $\alpha_i(x) \le \frac{m_i}{2}$ for each $i \in S$, implying that the condition of Case (2.1) does not hold, i.e., Case (2.1) does not exist.

Case (2.2). $|cov[l_0(x)] - cov[l_1(x)]|/2c = 1$. W.l.o.g., assume that $cov[l_0(x)] \\ > cov[l_1(x)]$. 

Observe that there is no $i \in S$ such that $\alpha_i(x) = m_i$.
If $\alpha_k(x) = \alpha_{k-1}(x) = 2$, then there is at least one group $G_i$ with $i \in S \setminus \{k-1,k\}$ such that $\beta_i(x) = 2$; otherwise, the condition of Case (2.2) cannot hold. 
Flipping a 0-bit of $G_k$ and a 1-bit of $G_i$ in $x$ obtains an improved solution, contradicting the fact that $x$ is local optimal. 
If $\alpha_k(x) = 2$ and $\alpha_{k-1}(x) \le 1$, then there is a group $G_i$ with $i \in S \setminus \{k-1,k\}$ such that $\alpha_i(x) = 2$.
Flipping a 0-bit of $G_i$ gets an improved solution, contradicting the fact that $x$ is local optimal. 
The same reasoning applies to the cases that $\alpha_k(x) = 1$ and $\alpha_{k-1}(x) \le 2$, and that $\alpha_k(x) = 0$ and $\alpha_{k-1}(x) \le 2$.
Summarizing the above discussion gives that Case (2.2) does not exist.

Case (2.3). $cov[l_0(x)] = cov[l_1(x)]$. 
Observe that if $\alpha_k(x) = 3$ (resp., $\beta_k(x) = 3$), then $\beta_{k-1}(x) = 3$ or there is some $i \in S \setminus \{k\}$ such that $\beta_i(x) = m_i = 2$ (resp., $\alpha_{k-1}(x) = 3$ or there is some $i \in S \setminus \{k\}$ such that $\alpha_i(x) = m_i = 2$), contradicting the fact that $x$ is local optimal. The same reasoning applies to $\alpha_{k-1}(x)$ and $\beta_{k-1}(x)$.
If $\alpha_k(x) = \alpha_{k-1}(x) = 2$, then there are two groups $G_i$ and $G_j$ with $i,j \in S \setminus \{k-1,k\}$ such that $\beta_i(x) = \beta_j(x) = 2$; otherwise, the condition of Case (2.3) cannot hold. 
Moreover, for any other group $G_p$ with $p \in S \setminus \{i,j,k-1,k\}$, $\alpha_p(x) = \beta_p(x) = 1$.
That is, if $\alpha_k(x) = \alpha_{k-1}(x) = 2$, then $cov[l_0(x)] = cov[l_1(x)] = 4c$.
If $\alpha_k(x) = 2$ and $\alpha_{k-1}(x) = 1$, combining the observation given above we can derive that $cov[l_0(x)] = cov[l_1(x)] =2c$ (specifically, for any group $G_i$ with $i \in S \setminus \{k-1,k\}$, $\alpha_i(x) = \beta_i(x) = 1$).
The same reasoning applies to the cases that $\alpha_k(x) = 1$ and $\alpha_{k-1}(x) = 2$, and that $\alpha_k(x) = 1 = \alpha_{k-1}(x)$.

Case (3). $I_2^-$ contains exactly one group with size 3 (i.e., $m_k = 3$ and $m_{i} = 2$ for each $i \in S \setminus \{k\}$). 
Using the reasoning similar to that given for Case (1.1), we get that Case (3.1). $|cov[l_0(x)] - cov[l_1(x)]|/2c \ge 2$ does not exist.

Case (3.2). $|cov[l_0(x)] - cov[l_1(x)]|/2c = 1$. W.l.o.g., assume that $cov[l_0(x)] \\ > cov[l_1(x)]$. 
Observe that $\alpha_i(x)$ cannot be $m_i$ for each $i \in S$ and $\alpha_k(x) = 2$; otherwise, $x$ is not local optimal.
Consequently, we can derive that $\alpha_i(x) = \beta_i(x) = 1$ for each $i \in S \setminus \{k\}$.

Case (3.3). $cov[l_0(x)] = cov[l_1(x)]$. 
If $\alpha_k(x) = 3$ (resp., $\beta_k(x) = 3$), then there is a group $G_j$ with $j \in S \setminus \{k\}$ such that $\beta_j(x) = 2$ (resp., $\alpha_j(x) = 2$), contradicting the observation given above.
That is, $\alpha_k(x) \le 2$ and $\beta_k(x) \le 2$.
If $cov[l_0(x)] = cov[l_1(x)] \ge 4c$, then there are groups $G_i$ and $G_j$ with size 2 such that $\alpha_i(x) = 2 = \beta_j(x)$, contradicting the observation given above.
If $cov[l_0(x)] = cov[l_1(x)] = 2c$, then w.l.o.g., assume that $\alpha_k(x) = 2$ and $\beta_k(x) = 1$. 
Thus there is a group $G_i$ with $i \in S \setminus \{k\}$ such that $\beta_i(x) = 2$, and for any other group $G_j$ with $j \in S \setminus \{i,k\}$, $\alpha_j(x) = \beta_j(x) = 1$. Note that $|M_1(x)|- |M_0(x)| = 1$ and $t(x) = 1$. 

Case (4). $I_2^-$ contains no group with size greater than 2. 
Using the reasoning similar to that given above, we have that $cov[l_0(x)] = cov[l_1(x)] =0$ (i.e., $\alpha_i(x) = \beta_i(x) = 1$ for each $i \in S$).

Summarizing the above discussion gives that for Case (1), $cov[l_0(x)] = cov[l_1(x)] =2c$; for Case (2), $cov[l_0(x)] = cov[l_1(x)] =4c$ or $cov[l_0(x)] = cov[l_1(x)] =2c$; for Case (3), $cov[l_0(x)] = cov[l_1(x)] =2c$ or $cov[l_0(x)] + cov[l_1(x)] =2c$;  and for Case (4), $cov[l_0(x)] = cov[l_1(x)] =0$. 
Now we analyze the allocation of the groups with size 1. 
Obviously, for Case (1) and (4), as $cov[l_0(x)] = cov[l_1(x)]$ and $\sum_{i \in S} m_i$ is even, the groups with size 1 are allocated evenly to the two machines; otherwise, an improvement can be achieved by flipping one bit or two bits.
Therefore, $x$ is a balanced solution that is globally optimal to $I_2^-$.
The discussion for Case (2) is divided into two subcases given below. 

Case (I). $cov[l_0(x)] = cov[l_1(x)] =4c$. 
By the above discussion for Case (2), w.l.o.g., we assume that $\alpha_k(x) = \alpha_{k-1}(x) = 2$.
Thus there are two groups $G_i$ and $G_j$ with size 2 such that $\beta_i(x) = \beta_j(x) = 2$. 
If there is no group with size 1, then for the approximation ratio compared with the optimal solution $x_{opt}$, 
\begin{eqnarray*}
\begin{aligned}
\frac{L(x)}{L(x_{opt})} \le& \frac{(|S|+2)a+ \sqrt{\frac{1-\gamma}{\gamma}(4c+(|S|+2)d)}}{(|S|+1)a+ \sqrt{\frac{1-\gamma}{\gamma}(2c+(|S|+1)d)}} \\
\le& \frac{(|S|+2)a+ \sqrt{\frac{1-\gamma}{\gamma}4c} + \sqrt{\frac{1-\gamma}{\gamma}(|S|+2)d}}{(|S|+1)a+ \sqrt{\frac{1-\gamma}{\gamma}(2c+(|S|+1)d)}} \le \frac{\sqrt{\frac{1-\gamma}{\gamma}6c}}{\sqrt{\frac{1-\gamma}{\gamma}2c}} \le \sqrt{3}.
\end{aligned}
\end{eqnarray*}
If there are some groups with size 1, then they are allocated to balance the number of jobs on the two machines. 
Similarly, we can also get that $\frac{L(x)}{L(x_{opt})} \le \sqrt{3}$.

Case (II). $cov[l_0(x)] = cov[l_1(x)] =2c$. By the above discussion for Case (2), we can get that it is globally optimal to $I_2^-$.

The discussion for Case (3) is divided into two subcases given below. 

Case (I). $cov[l_0(x)] = cov[l_1(x)] =2c$. 
By the above discussion for Case (3), w.l.o.g., we assume that $\alpha_k(x) = 2$.
That is, there is a group $G_i$ with size 2 such that $\beta_i(x) = 2$. 
If there is no group with size 1, then  
\begin{eqnarray*}
\begin{aligned}
\frac{L(x)}{L(x_{opt})} \le& \frac{(|S|+1)a+ \sqrt{\frac{1-\gamma}{\gamma}(2c+(|S|+1)d)}}{|S|a+ \sqrt{\frac{1-\gamma}{\gamma}(2c+|S|d)}} \\
\le& \frac{(|S|+1)a+ \sqrt{\frac{1-\gamma}{\gamma}2c} + \sqrt{\frac{1-\gamma}{\gamma}(|S|+1)d}}{|S|a+ \sqrt{\frac{1-\gamma}{\gamma}(2c+|S|d)}} 
\le \frac{\sqrt{\frac{1-\gamma}{\gamma}4c}}{\sqrt{\frac{1-\gamma}{\gamma}2c}} \le \sqrt{2}.
\end{aligned}
\end{eqnarray*}
If there are some groups with size 1, then they are allocated to balance the number of jobs on the two machines. 
Similarly, we can also get that $\frac{L(x)}{L(x_{opt})} \le \sqrt{2}$.

Case (II). $cov[l_0(x)] + cov[l_1(x)] =2c$. W.l.o.g., assume that $cov[l_0(x)] =2c$. 
Then we have that the jobs of the groups with size 1 are all allocated to $M_1$ with respect to $x$, and we can derive that it is globally optimal to $I_2^-$. 
\end{proof}

Based on Lemma~\ref{lem:RLS for CCMSP-2^- small instance}, we are ready to analyze the approximation ratio of the local optimal solution obtained by RLS and the expected runtime for any instance of CCMSP-$2^-$.

\begin{theorem}
\label{thm:RLS for CCMSP-2^-}
Considering an instance $I_2^- = (a,c,d,\gamma,k, \{m_i | i \in [1,k]\})$ of CCMSP-$2^-$, RLS takes expected runtime $O(n^5)$ to obtain a local optimal solution to $I_2^-$ with approximation ratio $2$, where $n = \sum_{i=1}^k m_i$.
\end{theorem}
\begin{proof}
Let $x$ be a solution maintained by RLS.
W.l.o.g., assume that $x$ is not a local optimal solution, and $l'_0(x) \ge l'_1(x)$, implying $cov[l_0(x)] \ge cov[l_1(x)]$.
Observe that any mutation flipping exactly one 1-bit or two 1-bits of $x$ cannot get an improved solution. 
Thus the following discussion only considers a mutation flipping exactly a 0-bit of $G_i$ in $x$, or two 0-bits of $G_i$ and $G_j$ separately in $x$, or exactly a 0-bit of $G_i$ and a 1-bit of $G_j$ in $x$. Denote by $x'$ the obtained solution. 

Case (1). The mutation flips exactly a 0-bit of $G_i$ in $x$. For the obtained solution $x'$, we have 
\begin{align*}
    cov[l_0(x')] =& cov[l_0(x)] - 2c \left[\binom{\alpha_i(x)}{2} - \binom{\alpha_i(x)-1}{2} \right] \\
    = & cov[l_0(x)] - 2c (\alpha_i(x)-1),
\end{align*}
and 
\begin{align*}
     \ cov[l_1(x')]  = & cov[l_1(x)] - 2c \left[ \binom{\beta_i(x)}{2} - \binom{\beta_i(x)+1}{2} \right] \\
    = & cov[l_1(x)] + 2c \beta_i(x).
\end{align*}

Observe that $l'_0(x')] < l'_0(x)$.
Thus if $cov[l_1(x)] + 2c \beta_i(x) < cov[l_0(x)]$, then $L(x') < L(x)$ and $x'$ can be accepted by RLS.
If $cov[l_1(x)] + 2c \beta_i(x) = cov[l_0(x)]$ and $|M_1(x)| < |M_0(x)|$, then $L(x') \le L(x)$ and $x'$ can be accepted by RLS.

Case (2). The mutation flips two 0-bits of $G_i$ and $G_j$ separately in $x$. Observe that if $x'$ can be accepted by RLS, then the one obtained by flipping exactly a 0-bit of $G_i$ or exactly a 0-bit of $G_j$ in $x$ can be accepted by RLS as well. Thus we omit the analysis for the case.

Case (3). The mutation flips exactly a 0-bit of $G_i$ and a 1-bit of $G_j$ in $x$. For the obtained solution $x'$, we have 
\begin{eqnarray*}
\begin{aligned}
cov[l_0(x')] =& cov[l_0(x)] - \\
& ~ 2c \left[ \left(\binom{\alpha_i(x)}{2} + \binom{\alpha_j(x)}{2} \right) - \left(\binom{\alpha_i(x)-1}{2} + \binom{\alpha_j(x)+1}{2} \right) \right] \\
=& cov[l_0(x)] + 2c (\alpha_j(x) - \alpha_i(x)+1)
\end{aligned}
\end{eqnarray*}
and
\begin{eqnarray*}
\begin{aligned}
cov[l_1(x')] =& cov[l_1(x)] - \\
& ~~ 2c \left[ \left(\binom{\beta_i(x)}{2} + \binom{\beta_j(x)}{2} \right) - \left(\binom{\beta_i(x)+1}{2} + \binom{\beta_j(x)-1}{2} \right) \right] \\
=& cov[l_1(x)] + 2c (\beta_i(x) - \beta_j(x) +1).
\end{aligned}
\end{eqnarray*}
If $\alpha_j(x) \le \alpha_i(x) -1$ (i.e., $cov[l_0(x')] \le cov[l_0(x)]$) and $2c(\beta_i(x) - \beta_j(x) +1) < cov[l_0(x)] - cov[l_1(x)]$ (i.e., $cov[l_1(x')] < cov[l_0(x)]$), then $L(x') \le L(x)$ and $x'$ can be accepted by RLS; 
if $\alpha_j(x) \le \alpha_i(x) -1$ (i.e., $cov[l_0(x')] \le cov[l_0(x)]$), $2c(\beta_i(x) - \beta_j(x) +1) = cov[l_0(x)] - cov[l_1(x)]$ (i.e., $cov[l_1(x')] = cov[l_0(x)]$) and $|M_1(x)| \le |M_0(x)|$, then $L(x') \le L(x)$ and $x'$ can be accepted.

By the discussion given above for the three cases, we can get that any local optimal solution $x^*$ obtained by RLS satisfying the following two properties (assume that $l'_0(x^*) \ge l'_1(x^*)$): 
(I). For any $i \in [1,k]$ with $\alpha_i(x^*) > 0$, either $cov[l_1(x^*)] + 2c \beta_i(x^*) > cov[l_0(x^*)]$, or $cov[l_1(x^*)] + 2c \beta_i(x^*) = cov[l_0(x^*)]$ and $|M_1(x^*)| \ge |M_0(x^*)| -1$;
(II). For any $i,j \in [1,k]$ with $\alpha_i(x^*) > 0$ and $\beta_j(x^*) > 0$, either if $\alpha_j(x^*) < \alpha_i(x^*) -1$ then either $2c(\beta_i(x^*) - \beta_j(x^*) +1) > cov[l_0(x^*)] - cov[l_1(x^*)]$ or $2c(\beta_i(x^*) - \beta_j(x^*) +1) = cov[l_0(x^*)] - cov[l_1(x^*)]$ and $|M_1(x^*)| \ge |M_0(x^*)| -1$, or if $2c(\beta_i(x^*) - \beta_j(x^*) +1) < cov[l_0(x^*)] - cov[l_1(x^*)]$ or $2c(\beta_i(x^*) - \beta_j(x^*) +1) = cov[l_0(x^*)] - cov[l_1(x^*)]$ and $|M_1(x^*)| < |M_0(x^*)| -1$, then $\alpha_j(x^*) \ge \alpha_i(x^*) -1$.

Now we analyze the expected runtime of RLS to get such a local optimal solution.
For ease of analysis, let the potential function of the solution $x$ maintained by RLS during the process be $p(x) = [cov[l_{t(x)}(x)]/2c, |M_{t(x)}(x)|]$. For the solution $x'$ obtained by RLS on $x$, we say that $x'$ \emph{dominates} $x$ if $cov[l_{t(x')}(x')] < cov[l_{t(x)}(x)]$, or $cov[l_{t(x')}(x')] = cov[l_{t(x)}(x)]$ and $|M_{t(x')}(x')| \le |M_{t(x)}(x)|$; and $x'$ \emph{strongly dominates} $x$ if $x'$ dominates $x$ but $p(x') \neq p(x)$. 
Observe that $x'$ can be accepted by RLS if and only if $x'$ dominates $x$, and $x'$ is improved compared to $x$ if and only if $x'$ strongly dominates $x$.
As the number of possible values of $p(x)$ can be upper bounded by $O(n^3)$, and the probability of RLS to generate such a mutation mentioned above is $\Omega(1/n^2)$, using the Additive Drift analysis~\citep{he2004study} gets that RLS takes expected runtime $O(n^5)$ to obtain a solution $x^*$ on which any mutation of RLS cannot get an improved solution. That is, $x^*$ is a local optimal solution with respect to RLS.
 
Now we consider the approximation ratio of the obtained local optimal solution $x^*$.
As Lemma~\ref{lem:RLS for CCMSP-2^- small instance} has considered the case that $[2\sum_{i \in S} (m_i-2)(m_i+1)] - 4 m_k -8 \le 0$, where $S = \{i \in [1,k]| m_i \ge 2\}$, the following discussion considers the case that $[2\sum_{i \in S} (m_i-2)(m_i+1)] - 4 m_k -8 > 0$. 
Specifically, to meet the condition that $[2\sum_{i \in S} (m_i-2)(m_i+1)] - 4 m_k -8 > 0$, if $m_k = 3$ then $I_2^-$ contains at least three groups with size 3; if $m_k = 4$ then $I_2^-$ contains two groups with size 4 or a group with size 4 and at least one group with size 3; if $m_k \ge 5$ then no extra constraint is required.
Thus $[2\sum_{i \in S} (m_i-2)(m_i+1)] - 4 m_k -8 > 0$ implies $\sum_{i \in S} m_i(m_i-2) > 8$.

Because of the extra constraint given in Inequality~(\ref{equ:CCMSP-2^-}), our main focus is on the covariances of the loads on the two machines specified by $x^*$.
Denote by $x_{opt}$ an optimal solution to $I^-_2$. We have
\begin{eqnarray*}
\begin{aligned}
cov[l_{t(x_{opt})}(x_{opt})] \ge& \frac{1}{2} (cov[l_{0}(x_{opt})] + cov[l_{1}(x_{opt}]) \\ 
\ge& c \sum_{i \in S} (\binom{\lfloor\frac{m_i}{2}\rfloor}{2} + \binom{\lceil\frac{m_i}{2}\rceil}{2})
\ge \frac{c}{4} \sum_{i \in S} m_i(m_i-2).
\end{aligned}
\end{eqnarray*} 
W.l.o.g., assume that $l'_0(x^*) \ge l'_1(x^*)$.

Case (1). $l'_0(x^*) = l'_1(x^*)$. 
The above discussion shows that $x^*$ is an equal solution with $cov[l_0(x^*)] = cov[l_1(x^*)]$.

Case (1.1). There exists some $t \in S$ such that $\alpha_t(x^*) = 0$.
W.l.o.g., assume that $t = \arg \max \{m_i| \alpha_i(x^*) = 0 \land i \in S\}$.
If there exists a $i \in S \setminus \{t\}$ such that $\alpha_i(x^*) > 1$ and $\beta_i(x^*) < m_t - 1$, then the solution $x^*$ can be improved by flipping a 1-bit of $G_t$ and a 0-bit of $G_i$. 
Thus for all $i \in S \setminus \{t\}$,  
if $m_i \le m_t$, then $\alpha_i(x^*) =0$ (i.e., $\beta_i(x^*) = m_i$) or $\alpha_i(x^*) =1$ (i.e., $\beta_i(x^*) = m_i -1$); 
if $m_i > m_t$ and $\alpha_i > 1$ then $\beta_i \ge m_t -1$.
Now we analyze the maximum possible value $U$ of $cov[l_0(x^*)] + cov[l_1(x^*)]$.
For ease of analysis, we replace the constraints $|M_0(x^*)| = |M_1(x^*)|$ and $cov[l_0(x^*)] = cov[l_1(x^*)]$ with $|M_0(x^*)| \ge |M_1(x^*)|$ and $cov[l_0(x^*)] \ge cov[l_1(x^*)]$. 
Let $x'$ be a solution satisfying the constraints given above that has the maximum value $U'$ of $cov[l_0(x')] + cov[l_1(x')]$. Obviously, $U' \ge U$. 
Observe that for each $i \in S$, either $\alpha_i(x') = 0$ and $\beta_i(x') = m_i$, or $\alpha_i(x') = 1$ and $\beta_i(x') = m_i -1$, or $\alpha_i(x') > 1$ and $\beta_i(x') \ge m_t -1$.
Let $S_1 = \{i \in S|\alpha_i(x') = 0 \land \beta_i(x') = m_i\}$, $S_0^1 = \{i \in S|\alpha_i(x') = 1 \land \beta_i(x') = m_i -1\}$ and $S_0^2 = S \setminus (S_1 \cup S_0^1)$.
Note that $\sum_{i \in S_0^1 \cup S_0^2} \alpha_i(x') = |S_0^1| + \sum_{i \in S_0^2} (m_i - \beta_i(x')) \ge \frac{n}{2}$.

\begin{eqnarray*}
\begin{aligned}
&\sum_{i \in S} \binom{m_i}{2} - \left(cov[l_0(x')] + cov[l_1(x')] \right)/2c \\
=& \sum_{i \in S} \binom{m_i}{2} - \\
& \quad \quad\left(\sum_{i \in S_1} \binom{m_i}{2} + \sum_{i \in S_0^1} \binom{m_i-1}{2} + \sum_{i \in S_0^2} \left(\binom{m_i - \beta_i(x')}{2} + \binom{\beta_i(x')}{2} \right) \right) \\
=& \sum_{i \in S_0^1} (m_i-1) + \sum_{i \in S_0^2} \binom{m_i}{2} - \left(\binom{m_i - \beta_i(x')}{2} + \binom{\beta_i(x')}{2} \right) \\
\ge& |S_0^1| + \sum_{i \in S_0^2} \beta_i(x') (m_i - \beta_i(x')) \ge |S_0^1| +  (m_t-1) \cdot \sum_{i \in S_0^2} (m_i - \beta_i(x')) \\ 
\ge& |S_0^1| + \sum_{i \in S_0^2} (m_i - \beta_i(x')) \ge \frac{n}{2}.
\end{aligned}
\end{eqnarray*} 

Based on the above inequality, we can get the approximation ratio of the local optimal solution $x^*$ under Case (1.1) by the following mathematical reasoning, 

\begin{eqnarray*}
\begin{aligned}
\frac{L(x^*)}{L(x_{opt})} \le& \frac{\frac{n}{2}a + \sqrt{\frac{1-\gamma}{\gamma} (cov[l_0(x^*)] + \frac{n}{2}d)}}
{\sqrt{\frac{1-\gamma}{\gamma} \frac{c}{4} \sum_{i \in S} m_i(m_i-2)}} \\
\le & \frac{\frac{n}{2}a + \sqrt{\frac{1-\gamma}{\gamma} \frac{n}{2}d} + \sqrt{\frac{1-\gamma}{\gamma} cov[l_0(x^*)]}}
{\sqrt{\frac{1-\gamma}{\gamma} \frac{c}{4} \sum_{i \in S} m_i(m_i-2)}} \\
\le& \frac{\sqrt{\frac{1-\gamma}{\gamma} (cov[l_0(x^*)]+2c)}}
{\sqrt{\frac{1-\gamma}{\gamma} \frac{c}{4} \sum_{i \in S} m_i(m_i-2)}}
\le \sqrt{\frac{cov[l_0(x^*)]+2c}{\frac{c}{4} \sum_{i \in S} m_i(m_i-2)}} \\
\le& \sqrt{\frac{c (\sum_{i \in S} \binom{m_i}{2} - \frac{n}{2})+ 2c}{\frac{c}{4} \sum_{i \in S} m_i(m_i-2)}} 
= \sqrt{\frac{4\sum_{i \in S} \binom{m_i}{2} -2n+8}{\sum_{i \in S} m_i(m_i-2)}} \\
=& \sqrt{\frac{2\sum_{i \in S} m_i(m_i-2)+8}{\sum_{i \in S} m_i(m_i-2)}} < \sqrt{3},
\end{aligned}
\end{eqnarray*} 
where the last inequality holds as $\sum_{i \in S} m_i(m_i-2) > 8$.

Case (1.2). There is no $i \in S$ such that $\alpha_i(x^*) = 0$.
\begin{equation}
\label{equ:no 0}
\begin{aligned}
    cov[l_0(x^*)]+cov[l_1(x^*)] &= 2c \sum_{i \in S} (\binom{\alpha_i}{2} + \binom{\beta_i}{2}) \\
    & \leq 2c \sum_{i \in S} \left(\binom{m_i-1}{2} + \binom{1}{2} \right) \\
    &\quad = c \sum_{i \in S} (m_i-1)(m_i-2).
\end{aligned}
\end{equation}
Then we can get the approximation ratio of the solution $x^*$ under Case (1.2) by the following mathematical reasoning, 
\begin{eqnarray*}
\begin{aligned}
\frac{L(x^*)}{L(x_{opt})} \le& \frac{\frac{n}{2}a + \sqrt{\frac{1-\gamma}{\gamma} (cov[l_0(x^*)] + \frac{n}{2}d)}}
{\sqrt{\frac{1-\gamma}{\gamma} \frac{c}{4} \sum_{i \in S} m_i(m_i-2)}}
\le \frac{\sqrt{cov[l_0(x^*)]+2c}}
{\sqrt{\frac{c}{4} \sum_{i \in S} m_i(m_i-2)}} \\
\le& \sqrt{\frac{\frac{c \sum_{i \in S} (m_i-1)(m_i-2)}{2}+2c}
{\frac{c}{4} \sum_{i \in S} m_i(m_i-2)}} 
= \sqrt{\frac{2 \sum_{i \in S} (m_i-1)(m_i-2)+8}
{\sum_{i \in S} m_i(m_i-2)}}  \\
\le& \sqrt{\frac{2\sum_{i \in S} m_i(m_i-2) - 2n+4k+8}
{\sum_{i \in S} m_i(m_i-2)}} \\
<& \sqrt{\frac{2\sum_{i \in S} m_i(m_i-2) +8}
{\sum_{i \in S} m_i(m_i-2)}} < \sqrt{3}, 
\end{aligned}
\end{eqnarray*} 
where the last inequality holds as $\sum_{i \in S} m_i(m_i-2) > 8$.

Case (2). $l'_0(x^*) > l'_1(x^*)$.

Case (2.1). $cov[l_0(x^*)] = cov[l_1(x^*)]$ and $|M_0(x^*)| > |M_1(x^*)|$. For the case, using the reasoning similar to that given for Case (1), we can derive the same conclusion.

Case (2.2). $cov[l_0(x^*)] > cov[l_1(x^*)]$.
Let $\delta = (cov[l_0(x^*)] - cov[l_1(x^*)])/2c$.

Case (2.2.1). There exists some $t \in [1,k]$ such that $\alpha_t = 0$.

W.l.o.g., assume that $t = \arg \max \{m_i| \alpha_i(x^*) = 0 \land i \in S\}$.
If there exists a $i \in S \setminus \{t\}$ such that $\alpha_i(x^*) > 1$ and $\beta_i(x^*) < m_t - 1 + \delta$, then the solution $x^*$ can be improved by flipping a 0-bit of $G_i$ and a 1-bit of $G_t$. 
Thus for all $i \in S$, if $m_i \le m_t$, then $\alpha_i(x^*) =0$ (i.e., $\beta_i(x^*) = m_i$) or $\alpha_i(x^*) =1$ (i.e., $\beta_i(x^*) = m_i -1$); if $m_i > m_t$ and $\alpha_i(x^*) > 1$ then $\beta_i(x^*) \ge m_t -1 + \delta \ge 2$.
Observe that $\delta$ cannot over $m_k$; otherwise, the inequality $\beta_k(x^*) \ge m_t -1 + \delta$ cannot hold as $m_t \ge 2$ and $\beta_k(x^*) \le m_k$.  

Now we analyze the maximum value $U$ of $cov[l_0(x^*)] + cov[l_1(x^*)]$.
Let $x'$ be a solution satisfying the constraints given above that has the maximum value of $cov[l_0(x')] + cov[l_1(x')]$.
Observe that for each $i \in S$, either $\alpha_i(x') = 0$ or $\alpha_i(x') > 0$.
Let $S_1 = \{i|\alpha_i(x') = 0 \land \beta_i(x') = m_i\}$, $S_0^1 = \{i|\alpha_i(x') = 1 \land \beta_i(x') = m_i -1\}$ and $S_0^2 = S \setminus (S_1 \cup S_0^1)$.

\begin{eqnarray*}
\begin{aligned}
&\left(cov[l_0(x')] + cov[l_1(x')] \right)/2c \\ 
=& \sum_{i \in S_1} \binom{m_i}{2} + \sum_{i \in S_0^1} \binom{m_i-1}{2} + \sum_{i \in S_0^2} \left(\binom{\alpha_i(x')}{2} + \binom{m_i -\alpha_i(x')}{2} \right) \\
=& \sum_{i \in S} \binom{m_i}{2} - \\
& ~ \sum_{i \in S_0^1} \left(\binom{m_i}{2} - \binom{m_i-1}{2} \right) - \sum_{i \in S_0^2} \left(\binom{m_i}{2} - \binom{\alpha_i(x')}{2} - \binom{m_i -\alpha_i(x')}{2} \right) \\
=& \sum_{i \in S} \binom{m_i}{2} - \sum_{i \in S_0^1} \left(m_i-1 \right) - \sum_{i \in S_0^2} \left(\binom{m_i}{2} - \binom{\alpha_i(x')}{2} - \binom{m_i -\alpha_i(x')}{2} \right) \\
=& \sum_{i \in S} \frac{m_i(m_i-1)}{2} - \sum_{i \in S_0^1} \left(m_i-1 \right) - \sum_{i \in S_0^2} (m_i - \alpha_i)\alpha_i \\ 
=& \sum_{i \in S} \frac{m_i(m_i-1)}{2} - \sum_{i \in S_0^1 \cup S_0^2} (m_i - \alpha_i)\alpha_i.
\end{aligned}
\end{eqnarray*}

Based on the above inequality, we get the approximation ratio by the following mathematical reasoning, in which $n' = \sum_{i \in S} m_i$. 
\begin{equation*}
\begin{aligned}
&\frac{L(x^*)}{L(x_{opt})} \le \frac{\sqrt{\frac{1-\gamma}{\gamma} (cov[l_0(x^*)] + 2c)}}
{\sqrt{\frac{1-\gamma}{\gamma} \frac{c}{4} \sum_{i=1}^{k} m_i(m_i-2)}} 
\le \frac{\sqrt{cov[l_0(x^*)]+2c}}
{\sqrt{\frac{c}{4} \sum_{i \in S} m_i(m_i-2)}} \\
&\le \sqrt{\frac{\frac{1}{2}\sum_{i \in S} m_i(m_i-1) - \sum_{i \in S_0^1 \cup S_0^2} (m_i - \alpha_i)\alpha_i + \delta +2}
{\frac{1}{4} \sum_{i \in S} m_i(m_i-2)}} \\
&\leq \sqrt{2} \cdot \sqrt{\frac{\sum_{i \in S} m_i(m_i-1) - 2\sum_{i\in S_0^1 \cup S_0^2} (m_i-\alpha_i)\alpha_i + 2\delta +4}{\sum_{i \in S} m_i(m_i-2)}} \\
&\leq \sqrt{2} \cdot \sqrt{1 + \frac{n' + 2\delta - \sum_{i\in S_0^1 \cup S_0^2} 2(m_i-\alpha_i)\alpha_i +4}{\sum_{i \in S} m_i(m_i-2)}} \\
&\leq \sqrt{2} \cdot \sqrt{1 + \frac{\sum_{i\in S_0^1 \cup S_0^2} (-m_i^2 + (2\alpha_i+1)m_i - 2\alpha_i) +A}{\sum_{i \in S} m_i(m_i-2)}} \\
&\leq \sqrt{2} \cdot \sqrt{1+ \frac{\sum_{i\in S_0^1 \cup S_0^2}(- m_i^2+m_i+2\alpha_i^2 - 2\alpha_i)  - \sum_{i\in S_1}(m_i^2-m_i) +4}{\sum_{i \in S} m_i(m_i-2)}} \\
&\leq \sqrt{2} \cdot \sqrt{1+ \frac{- \sum_{i \in S} (m_i^2-2m_i) + \sum_{i\in S_0^1 \cup S_0^2}(2\alpha_i^2 - 2\alpha_i) +4}{\sum_{i \in S} m_i(m_i-2)}} \\
&\leq \sqrt{2} \cdot \sqrt{\frac{\sum_{i\in S_0^1 \cup S_0^2}2\alpha_i(\alpha_i - 1) +4}{\sum_{i \in S} m_i(m_i-2)}} 
= \sqrt{2} \cdot \sqrt{\frac{\sum_{i\in S_0^2}2\alpha_i(\alpha_i - 1) +4}{\sum_{i \in S} m_i(m_i-2)}} \\
&\leq \sqrt{2} \cdot \sqrt{\frac{\sum_{i\in S_0^2}2\alpha_i(\alpha_i - 1) +4}{\sum_{i \in S_0^2} m_i(m_i-2)}} 
\leq \sqrt{2} \cdot \sqrt{\frac{\sum_{i\in S_0^2}2\alpha_i(\alpha_i - 1) +4}{\sum_{i \in S_0^2} (\alpha_i +2)\alpha_i}} < 2,
\end{aligned}
\end{equation*}
where $A= - \sum_{i\in S_1}  m_i(m_i-1) - \sum_{i\in S_0^1 \cup S_0^2} 2(m_i-\alpha_i)\alpha_i + n' +4$, and the second to the last inequality holds as $\beta_i(x^*) \ge 2$ for each $i \in S_0^2$.

Case (2.2.2). There is no $i \in [1,k]$ such that $\alpha_i(x^*) = 0$. 
By Inequality~(\ref{equ:no 0}), we get the approximation ratio as the following mathematical reasoning, 

\begin{eqnarray*}
\begin{aligned}
\frac{L(x^*)}{L(x_{opt})} \le& \frac{\sqrt{\frac{1-\gamma}{\gamma} (cov[l_0(x^*)] + 2c)}}
{\sqrt{\frac{1-\gamma}{\gamma} \frac{c}{4} \sum_{i \in S} m_i(m_i-2)}}
\le \frac{\sqrt{cov[l_0(x^*)]+2c}}
{\sqrt{\frac{c}{4} \sum_{i \in S} m_i(m_i-2)}} \\
\le& \sqrt{\frac{\frac{\sum_{i \in S} (m_i-1)(m_i-2)}{2} + \delta +2}
{\frac{1}{4} \sum_{i \in S} m_i(m_i-2)}} 
= \sqrt{\frac{2 \sum_{i \in S} (m_i-1)(m_i-2)+ 4\delta +8}
{\sum_{i \in S} m_i(m_i-2)}}  \\
\le& \sqrt{\frac{2\sum_{i \in S} m_i(m_i-2) - 2n'+4|S|+4\delta+8}
{\sum_{i \in S} m_i(m_i-2)}}\\ 
\le& \sqrt{2 + \frac{-2n'+4|S| +4 m_k+8}
{\sum_{i \in S} m_i(m_i-2)}} < 2, 
\end{aligned}
\end{eqnarray*}
where the last inequality holds because of the condition $[2\sum_{i \in S} (m_i-2)(m_i+1)] - 4 m_k -8 > 0$ that we have mentioned above.
\end{proof}

\section{Experiment}


The section studies the experimental performance of RLS and (1+1)~EA for the three variants CCMSP-1, CCMSP-$2^+$ and CCMSP-$2^-$.
The values of the common variables considered in the three variants are set as follows: The expected processing time of each job is $a = 100$ and its variance is $d = 10^{-2}$; the acceptable threshold is $\gamma = 0.05$; the number of groups is $k \in \{2^2,2^3,2^4,2^5,2^6,2^7\}$.
Recall that all jobs considered in the three variants have the same expected processing time $a$ and variance $d$, and the related expected runtime and properties of the local optimal solutions mentioned above do not depend on $a$ and $d$. Thus we just consider one setting approach for the values of $a$ and $d$ in our experimental work.

\emph{CCMSP-1}. The number of jobs in the group is set as $m \in \{10,50,100,200,\\ 300,400\}$. 
For the value of the covariance, we consider the three setting approaches given below.
Firstly, if the jobs in the same group are the same, then we have $cov(X,X) = var(X) = 10^{-2}$, where $X$ denotes a job. Thus the first setting approach is (1). $c = 10^{-2}$.
For the other case that the jobs in the same group are not the same, we consider the two setting approaches: (2). $c = 10^{-7}$ and (3). $c = 10^{20}$, to analyze whether the extra constraint considered in CCMSP-$2^+$ and the one considered in CCMSP-$2^-$ have an impact on the experimental performance of the two algorithms for the instances of CCMSP-1, where $c = 10^{-7}$ and $c = 10^{20}$ meet the extra constraints of CCMSP-$2^+$ and CCMSP-$2^-$, respectively.

\begin{figure}[h]
\centering
\begin{minipage}{0.47\linewidth}
\centering
\includegraphics[width=1\linewidth]{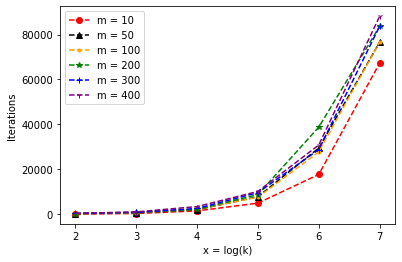}
\caption{RLS for CCMSP-1 with $c = 10^{-2}$.}
\label{fig:experiment 1}
\end{minipage}
\begin{minipage}{0.47\linewidth}
\centering
\includegraphics[width=1\linewidth]{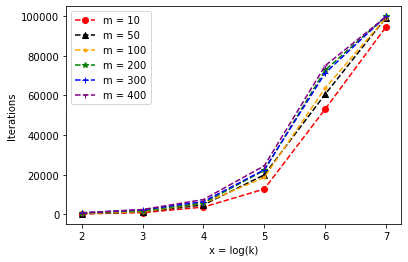}
\caption{(1+1) EA for CCMSP-1 with $c = 10^{-2}$.}
\label{fig:experiment 2}
\end{minipage}
\qquad
\begin{minipage}{0.47\linewidth}
\centering
\includegraphics[width=1\linewidth]{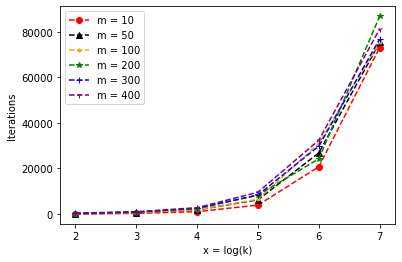}
\caption{RLS for CCMSP-1 with $c = 10^{-7}$.}
\label{fig:experiment 3}
\end{minipage}
\begin{minipage}{0.47\linewidth}
\centering
\includegraphics[width=1\linewidth]{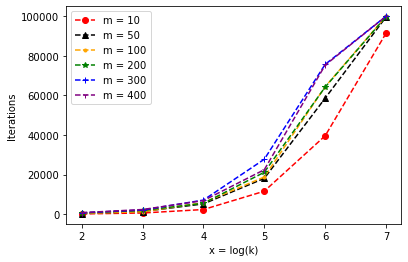}
\caption{(1+1) EA for CCMSP-1 with $c = 10^{-7}$.}
\label{fig:experiment 4}
\end{minipage}
\qquad
\begin{minipage}{0.47\linewidth}
\centering
\includegraphics[width=1\linewidth]{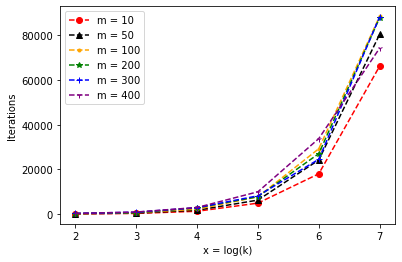}
\caption{RLS for CCMSP-1 with $c = 10^{20}$.}
\label{fig:experiment 5}
\end{minipage}
\begin{minipage}{0.47\linewidth}
\centering
\includegraphics[width=1\linewidth]{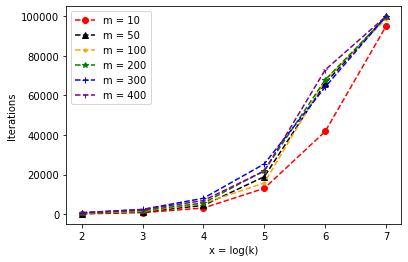}
\caption{(1+1) EA for CCMSP-1 with $c = 10^{20}$.}
\label{fig:experiment 6}
\end{minipage}
\vspace*{-5mm}
\end{figure}

For each constructed instance of CCMSP-1, we execute each of the two algorithms on it for 30 times and take the average of the iteration numbers required for the 30 executions to solve it. 
However, for some instances, some executions of the algorithm cannot solve them within $10^5$ iterations. Under the case, we set $10^5$ as the iteration number that the algorithm requires to solve it. 
Taking into account the randomness of the experimental performance of the two algorithms, by Fig.~\ref{fig:experiment 1}-\ref{fig:experiment 6} we find that: 
(1). The value of the co-variance has little influence on the performance of the two algorithms for the instances of CCMSP-1, so do the extra constraints given in CCMSP-$2^+$ and CCMSP-$2^-$; 
(2). For each constructed instance of CCMSP-1, the averaged iteration number of the (1+1) EA for solving it is always larger than that of RLS, coinciding with the obtained theoretical performance of the two algorithms for CCMSP-1; 
and (3). The six curves increase exponentially with $\log k$, coinciding with the obtained theoretical performance of the two algorithms for CCMSP-1. 
Remark that for Fig.~\ref{fig:experiment 2},~\ref{fig:experiment 4} and~\ref{fig:experiment 6}, the slopes of some curves in region $[6,7]$ decrease compared to that in region $[5,6]$, as the (1+1) EA cannot solve these instances within $10^5$ iterations so that $10^5$ is set as the iteration number that the algorithm requires to solve them. The real slopes should be larger than that depicted in region $[6,7]$ of Fig.~\ref{fig:experiment 2},~\ref{fig:experiment 4} and~\ref{fig:experiment 6}.

\emph{CCMSP-$2^+$}. We separately investigate the experimental performance of RLS for CCMSP-$2^+$-Even and CCMSP-$2^+$-Odd.
Because of the extra constraint considered in CCMSP-$2^+$, we consider the three setting approaches of the covariance $c \in \{10^{-7}, 10^{-8}, 10^{-9}\}$.
The number of jobs considered in each instance is $n \in \{k \times 10,k \times 50,k \times 100,k \times 200,k \times 300,k \times 400\}$ (recall that $k \in \{2^2,2^3,2^4,2^5,2^6,2^7\}$). 
That is, for each instance constructed for CCMSP-$2^+$-Even, it has a parameter-triple $(k,n,c)$.
For each specific parameter-triple $(k,n,c)$, each $m_i$ ($1 \le i \le k$) is set in order as follows: If $1 \le i \le k-1$ then $m_i = f + g$, where $f = \lceil(n - \sum_{j=1}^{i-1} m_i)/(k+1-i) \rceil$ and $g$ is an integer chosen from the range $[-\lceil f/2\rceil, \lceil f/2\rceil]$ uniformly at random; otherwise, $m_k = n - \sum_{j=1}^{k-1} m_i$.
Therefore, there are totally 108 instances totally constructed for CCMSP-$2^+$-Even.
To compare the experimental performance of RLS for CCMSP-$2^+$-Even and CCMSP-$2^+$-Odd, for each one $I$ of the 108 instances constructed above for CCMSP-$2^+$-Even, an extra job is added into a group uniformly at random such that an instance $I'$ of CCMSP-$2^+$-Odd is constructed. That is, there are 108 instances totally constructed for CCMSP-$2^+$-Odd, and each instance $I'$ of CCMSP-$2^+$-Odd has an extra job compared to the corresponding instance $I$ of CCMSP-$2^+$-Even.

Remark that for each constructed instance of CCMSP-$2^+$-Even, RLS would stop once a solution satisfying the conditions given in Theorem~\ref{thm:RLS for CCMSP-2^+ even} is obtained (not optimal). However, by Fig.~\ref{fig:experiment 7},~\ref{fig:experiment 9} and ~\ref{fig:experiment 11}, almost all instances of CCMSP-$2^+$-Even cannot be solved by RLS within $10^5$ iterations if $k \ge 2^3$. 
In comparison, the experimental performance of RLS for the instances of CCMSP-$2^+$-Odd to obtain the optimal solutions, given in Fig.~\ref{fig:experiment 8},~\ref{fig:experiment 10} and~\ref{fig:experiment 12}, is much better, coinciding with the obtained theoretical performance of RLS for CCMSP-$2^+$. 

Specifically, for the setting $c = 10^{-7}$, the makespan of the obtained solutions for the corresponding 36 instances of CCMSP-$2^+$-Even and the 36 ones of CCMSP-$2^+$-Odd are listed in Table~\ref{tab:my-table}.
It is necessary to point out that each instance $I'$ of CCMSP-$2^+$-Odd has an extra job compared to the corresponding instance $I$ of CCMSP-$2^+$-Even, thus the makespan of the optimal solution for $I'$ is at least 100 more than that of $I$, taking into account the expected runtime $a = 100$ of the extra job, and the resulting variance and covariance.
However, as mentioned before, the obtained solution for $I'$ is not optimal but the one for $I$ is, thus the gap between the two makespan is around 100 (i.e., the gap may be larger or less than 100).

\begin{table}[h]
\resizebox{\textwidth}{!}{%
\begin{tabular}{|l|rr|rr|rr|rr|rr|rr|}
\hline
\multirow{2}{*}{} & \multicolumn{2}{c|}{$k=2^2$}                                   & \multicolumn{2}{c|}{$k=2^3$}                                    & \multicolumn{2}{c|}{$k=2^4$}                                   & \multicolumn{2}{c|}{$k=2^5$}                                   & \multicolumn{2}{c|}{$k=2^6$}                                   & \multicolumn{2}{c|}{$k=2^7$}                                  \\ \cline{2-13} 
                  & \multicolumn{1}{c|}{Even}       & \multicolumn{1}{c|}{Odd} & \multicolumn{1}{c|}{Even}        & \multicolumn{1}{c|}{Odd} & \multicolumn{1}{c|}{Even}        & \multicolumn{1}{c|}{Odd} & \multicolumn{1}{c|}{Even}        & \multicolumn{1}{c|}{Odd} & \multicolumn{1}{c|}{Even}        & \multicolumn{1}{c|}{Odd} & \multicolumn{1}{c|}{Even}        & \multicolumn{1}{c|}{Odd} \\ \hline
$n=k*10 \ / \ k*10+1$   & \multicolumn{1}{r|}{2,001.949,4}  & 2,101.997,5                & \multicolumn{1}{r|}{4,002.756,9}   & 4,102.791,1                & \multicolumn{1}{r|}{8,003.898,9}   & 8,103.923,1                & \multicolumn{1}{r|}{16,005.514,3}  & 16,105.530,9               & \multicolumn{1}{r|}{32,007.799,2}  & 32,107.809,8               & \multicolumn{1}{r|}{64,011.032,6}  & 64,111.036,1               \\ \hline
$n=k*50 \ / \ k*50+1$   & \multicolumn{1}{r|}{10,004.359,7} & 10,104.381,2               & \multicolumn{1}{r|}{20,006.165,6}  & 20,106.180,6               & \multicolumn{1}{r|}{40,008.720,5}  & 40,108.729,8               & \multicolumn{1}{r|}{80,012.335,7}  & 80,112.338,1               & \multicolumn{1}{r|}{160,017.453,9} & 160,117.443,2              & \multicolumn{1}{r|}{320,024.711,8} & 320,124.664,7              \\ \hline
$n=k*100 \ / \ k*100+1$ & \multicolumn{1}{r|}{20,006.166,5} & 20,106.181,3               & \multicolumn{1}{r|}{40,008.720,9}  & 40,108.730,9               & \multicolumn{1}{r|}{80,012.335,6}  & 80,112.339,6               & \multicolumn{1}{r|}{160,017.453,8} & 160,117.445,5              & \multicolumn{1}{r|}{320,024.713,6} & 320,124.667,8              & \multicolumn{1}{r|}{640,035.015,9} & 640,134.886               \\ \hline
$n=k*200 \ / \ k*200+1$ & \multicolumn{1}{r|}{40,008.724,5} & 40,108.733,1               & \multicolumn{1}{r|}{80,012.337,9}  & 80,112.342,7               & \multicolumn{1}{r|}{160,017.453,4} & 160,117.449,9              & \multicolumn{1}{r|}{320,024.715,7} & 320,124.674,2              & \multicolumn{1}{r|}{640,035.019,3} & 640,134.894,6              & \multicolumn{1}{r|}{1,280,049.693} & 1,280,149.384              \\ \hline
$n=k*300 \ / \ k*300+1$ & \multicolumn{1}{r|}{60,010.688,2} & 60,110.694,1               & \multicolumn{1}{r|}{120,015.115,3} & 120,115.117,2              & \multicolumn{1}{r|}{240,021.389,9} & 240,121.374,8              & \multicolumn{1}{r|}{480,030.293,7} & 480,130.225,6              & \multicolumn{1}{r|}{960,042.980,6} & 960,142.758,5              & \multicolumn{1}{r|}{1,920,061.155} & 1,920,160.615              \\ \hline
$n=k*400 \ / \ k*400+1$ & \multicolumn{1}{r|}{80,012.346,4} & 80,112.348,8               & \multicolumn{1}{r|}{160,017.460,7} & 160,117.459,4              & \multicolumn{1}{r|}{320,024.712}  & 320,124.686,3              & \multicolumn{1}{r|}{640,035.024,7} & 640,134.910,6              & \multicolumn{1}{r|}{1,280,049.759} & 1,280,149.414              & \multicolumn{1}{r|}{2,560,070.952} & 2,560,170.201              \\ \hline
\end{tabular}%
}
\caption{The experimental results about the makespan for the 36 instances of CCMSP-$2^+$-Even and the 36 ones of CCMSP-$2^+$-Odd with $c = 10^{-7}$.}
\label{tab:my-table}
\end{table}

\emph{CCMSP-$2^-$}. 
To meet the extra constraint considered in CCMSP-$2^-$, we consider the three setting approaches of the covariance $c \in \{10^{20}, 10^{25}, 10^{30}\}$. 
The way to construct the instances of CCMSP-$2^-$ is the same as that of CCMSP-$2^+$-Even, i.e., there are totally 108 instances constructed as well.
Remark that for each constructed instance of CCMSP-$2^-$, RLS stops once a local optimal solution satisfying the conditions given in Theorem~\ref{thm:RLS for CCMSP-2^-} is obtained (not optimal).
By Fig.~\ref{fig:experiment 12}, ~\ref{fig:experiment 13} and~\ref{fig:experiment 14}, we have that if the value of $c$ is large enough, then its influence on the performance of the algorithm is little.
Specifically, the curves given in the three figures are almost the same if $k \ge 2^4$. 
Besides, we also study the optimization process specified by an execution of RLS on an instance of CCMSP-$2^-$ with $c = 10^{20}$, $k = 2^7$ and $n = k*300$ (recall that $n$ is even), which is given in Fig.~\ref{fig:experiment 16}. 
Note that the execution would not stop unless the iteration number reaches $10^5$, and the vertical coordinate of Fig.~\ref{fig:experiment 16} is $(\frac{L(x)}{LB} -1) \cdot 1000$, where $LB = \sqrt{\frac{1-\gamma}{\gamma} \cdot \frac{c}{4} \cdot \sum_{i \in S} m_i(m_i-2)}$ is the lower bound of $L(x^*)$ of the optimal solution $x^*$ (we use the lower bound $LB$ to approximately replace $L(x^*)$ as it is computationally expensive to get $L(x^*)$, and $\frac{L(x)}{LB}$ is larger than the real approximation ratio).  
We find that the optimization speed is really fast at the early stage and the $y$-value is improved to 1.5 within the first 150 iterations, then the optimization speed gradually declines, and finally the $y$-value approaches 0.2 when the iteration number reaches $10^5$.

\begin{figure}[h]
\centering
\begin{minipage}{0.47\linewidth}
\centering
\includegraphics[width=1.05\linewidth]{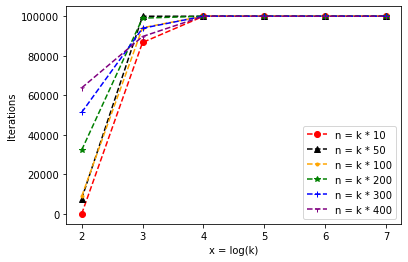}
\caption{RLS for CCMSP-$2^+$-Even with $c = 10^{-7}$.}
\label{fig:experiment 7}
\end{minipage}
\begin{minipage}{0.47\linewidth}
\centering
\includegraphics[width=1.05\linewidth]{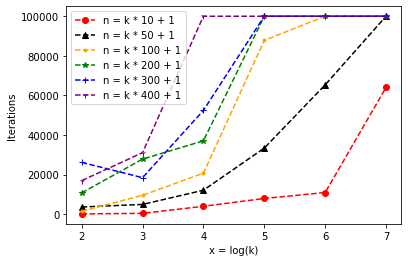}
\caption{RLS for CCMSP-$2^+$-Odd with $c = 10^{-7}$.}
\label{fig:experiment 8}
\end{minipage}
\qquad
\begin{minipage}{0.47\linewidth}
\centering
\includegraphics[width=1.05\linewidth]{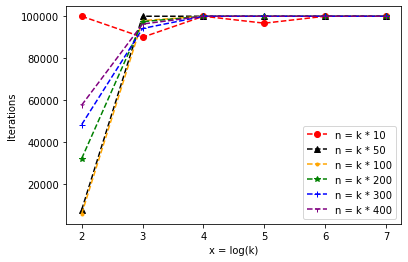}
\caption{RLS for CCMSP-$2^+$-Even with $c = 10^{-8}$.}
\label{fig:experiment 9}
\end{minipage}
\begin{minipage}{0.47\linewidth}
\centering
\includegraphics[width=1.05\linewidth]{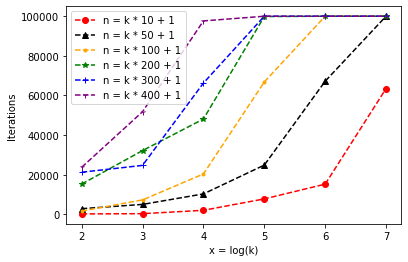}
\caption{RLS for CCMSP-$2^+$-Odd with $c = 10^{-8}$.}
\label{fig:experiment 10}
\end{minipage}
\qquad
\begin{minipage}{0.47\linewidth}
\centering
\includegraphics[width=1.05\linewidth]{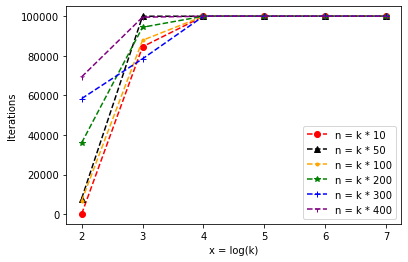}
\caption{RLS for CCMSP-$2^+$-Even with $c = 10^{-9}$.}
\label{fig:experiment 11}
\end{minipage}
\begin{minipage}{0.47\linewidth}
\centering
\includegraphics[width=1.05\linewidth]{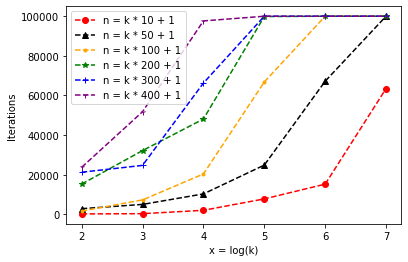}
\caption{RLS for CCMSP-$2^+$-Odd with $c = 10^{-9}$.}
\label{fig:experiment 12}
\end{minipage}
\end{figure}

\begin{figure}[h]
\centering
\begin{minipage}{0.47\linewidth}
\centering
\includegraphics[width=1.05\linewidth]{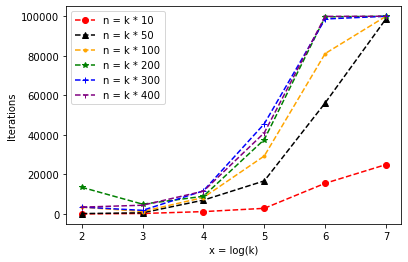}
\caption{RLS for CCMSP-$2^-$ with $c = 10^{20}$.}
\label{fig:experiment 13}
\end{minipage}
\begin{minipage}{0.47\linewidth}
\centering
\includegraphics[width=1.05\linewidth]{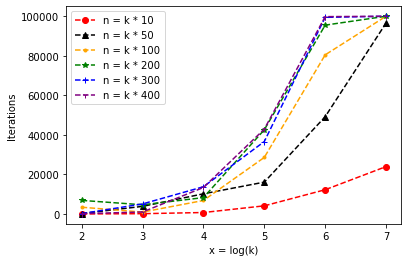}
\caption{RLS for CCMSP-$2^-$ with $c = 10^{25}$.}
\label{fig:experiment 14}
\end{minipage}
\qquad
\begin{minipage}{0.47\linewidth}
\centering
\includegraphics[width=1.05\linewidth]{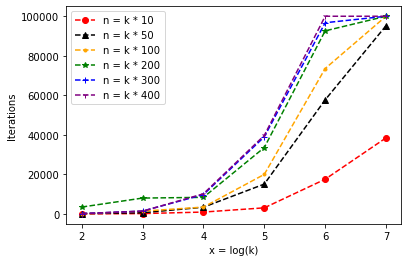}
\caption{RLS for CCMSP-$2^-$ with $c = 10^{30}$.}
\label{fig:experiment 15}
\end{minipage}
\begin{minipage}{0.47\linewidth}
\centering
\includegraphics[width=0.9 \linewidth]{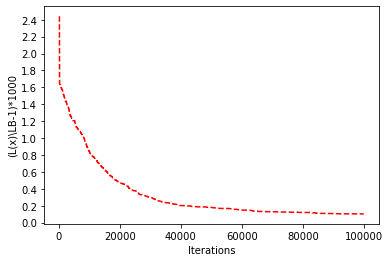}
\caption{The optimization process of RLS on an instance of CCMSP-$2^-$ with $c = 10^{20}$, $k = 2^7$ and $n = k * 300$.}
\label{fig:experiment 16}
\end{minipage}
\end{figure}

\section{Conclusion}

The paper studied a chance-constrained version of the makespan scheduling problem and investigated the performance of RLS and the (1+1) EA for it.
More specifically, the paper studied three simple variants of the problem (namely, CCMSP-1, CCMSP-2$^+$ and CCMSP-2$^-$) and obtained a series of results: CCMSP-1 was shown to be polynomial-time solvable by giving the expected runtime  of RLS and the (1+1) EA (i.e., $O(n^2 / m = O(kn))$ and $O((k+m)n^2)$, respectively) to obtain an optimal solution to the given instance of CCMP-1; CCMSP-2$^+$ was shown to be NP-hard by reducing the Two-way Balanced Partition problem to it,
while any instance of CCMSP-2$^+$ that considers odd number of  jobs was shown to be polynomial-time solvable by giving the expected runtime of RLS (i.e., $O(\sqrt{k}n^3)$) to obtain an optimal solution to it, and the properties and approximation ratio of the local optimal solution obtained by RLS for any instance of CCMSP-2$^+$ that considers even number of  jobs were given;
CCMSP-2$^-$ was shown to be NP-hard as well, and the discussion for the approximation ratio 2 of the local optimal solution obtained by RLS was given.
Related experimental results were found to coincide with these theoretical results.

It is necessary to point out that the study of RLS and the (1+1) EA for CCMSP given in the paper, compared to that for MSP given in the previous related literatures, is not just considering a new surrogate of the makespan induced by One-sided Chebyshev’s Inequality.
It can be found that the value of the surrogate is also affected by the variances among the jobs of the same group assigned to the same machine. 
Thus we have to not only consider the expected processing time and variance of each job, but also the covariances. 
In some way, the covariances can be treated as weights on the jobs, but they do not stay the same and dynamically change with the number of jobs of the same group assigned to the same machine. 

To our best knowledge, except the literature~\cite{xie2019evolutionary}, there is no one in the area of theoretical study of EA, in which the weights on the bits are non-linearly changed with the values of its neighborhood bits. Thus from this perspective, our study and the one given in~\citep{xie2019evolutionary} are the stepping stones to the new research area. 

Future work on the other variants of the chance-constrained makespan scheduling problem would be interesting, e.g., the variant of CCMSP-1 in which the covariance has two or more optional values, and the one of CCMSP in which everything is unknown except the loads on the two machines. 
Additionally, studying the chance-constrained version of other classical combinatorial optimization problems is also meaningful.
These related results would further advance and broaden the understanding of evolutionary algorithms.

\section{Acknowledgements}

This work has been supported by the National Natural Science Foundation of China under Grants 62472449, the Open Project of Xiangjiang Laboratory under Grant 22XJ03005, and the Australian Research Council (ARC) through grant FT200100536.

\small



 \bibliographystyle{elsarticle-num} 
 \bibliography{ref}

\begin{thebibliography}{10}
\expandafter\ifx\csname url\endcsname\relax
  \def\url#1{\texttt{#1}}\fi
\expandafter\ifx\csname urlprefix\endcsname\relax\def\urlprefix{URL }\fi
\expandafter\ifx\csname href\endcsname\relax
  \def\href#1#2{#2} \def\path#1{#1}\fi

\bibitem{dasgupta2013evolutionary}
D.~Dasgupta, Z.~Michalewicz, Evolutionary algorithms in engineering applications, Springer Science \& Business Media, 2013.

\bibitem{lin2009integrated}
L.~Lin, M.~Gen, X.~Wang, Integrated multistage logistics network design by using hybrid evolutionary algorithm, Computers \& industrial engineering 56~(3) (2009) 854--873.

\bibitem{tapia2007applications}
M.~G.~C. Tapia, C.~A.~C. Coello, Applications of multi-objective evolutionary algorithms in economics and finance: A survey, in: 2007 IEEE congress on evolutionary computation, IEEE, 2007, pp. 532--539.

\bibitem{NeumannW06}
F.~Neumann, I.~Wegener, Minimum spanning trees made easier via multi-objective optimization, Natural Computing 5 (2006) 305--319.

\bibitem{neumann2007randomized}
F.~Neumann, I.~Wegener, Randomized local search, evolutionary algorithms, and the minimum spanning tree problem, Theoretical Computer Science 378~(1) (2007) 32--40.

\bibitem{kratsch2010fixed}
S.~Kratsch, P.~K. Lehre, F.~Neumann, P.~S. Oliveto, Fixed parameter evolutionary algorithms and maximum leaf spanning trees: A matter of mutation, in: Proceedings of the Conference on Parallel Problem Solving from Nature (PPSN), Springer, 2010, pp. 204--213.

\bibitem{corus2013generalized}
D.~Corus, P.~K. Lehre, F.~Neumann, The generalized minimum spanning tree problem: {A} parameterized complexity analysis of bi-level optimisation, in: Proceedings of the Genetic and Evolutionary Computation Conference (GECCO), ACM, 2013, pp. 519--526.

\bibitem{witt2014revised}
C.~Witt, Revised analysis of the (1+1) {EA} for the minimum spanning tree problem, in: Proceedings of the Genetic and Evolutionary Computation Conference (GECCO), ACM, 2014, pp. 509--516.

\bibitem{roostapour2020runtime}
R.~Vahid, B.~Jakob, F.~Neumann, Runtime analysis of evolutionary algorithms with biased mutation for the multi-objective minimum spanning tree problem, in: Proceedings of the Genetic and Evolutionary Computation Conference (GECCO), ACM, 2020, pp. 551--559.

\bibitem{oliveto2008analysis}
P.~S. Oliveto, J.~He, X.~Yao, Analysis of population-based evolutionary algorithms for the vertex cover problem, in: Proceedings of the IEEE Congress on Evolutionary Computation (IEEE World Congress on Computational Intelligence), IEEE, 2008, pp. 1563--1570.

\bibitem{oliveto2009analysis}
P.~S. Oliveto, J.~He, X.~Yao, Analysis of the (1+1) {EA} for finding approximate solutions to vertex cover problems, IEEE Transactions on Evolutionary Computation 13~(5) (2009) 1006--1029.

\bibitem{friedrich2009analyses}
T.~Friedrich, J.~He, N.~Hebbinghaus, F.~Neumann, C.~Witt, Analyses of simple hybrid algorithms for the vertex cover problem, Evolutionary Computation 17~(1) (2009) 3--19.

\bibitem{friedrich2010approximating}
T.~Friedrich, J.~He, N.~Hebbinghaus, F.~Neumann, C.~Witt, Approximating covering problems by randomized search heuristics using multi-objective models, Evolutionary Computation 18~(4) (2010) 617--633.

\bibitem{yu2012approximation}
Y.~Yu, X.~Yao, Z.-H. Zhou, On the approximation ability of evolutionary optimization with application to minimum set cover, Artificial Intelligence 180 (2012) 20--33.

\bibitem{jansen2013approximating}
T.~Jansen, P.~S. Oliveto, C.~Zarges, Approximating vertex cover using edge-based representations, in: Proceedings of the Workshop on Foundations of Genetic Algorithms (FOGA), ACM, 2013, pp. 87--96.

\bibitem{kratsch2013fixed}
S.~Kratsch, F.~Neumann, Fixed-parameter evolutionary algorithms and the vertex cover problem, Algorithmica 65~(4) (2013) 754--771.

\bibitem{Pourhassan2015vc}
M.~Pourhassan, W.~Gao, F.~Neumann, Maintaining 2-approximations for the dynamic vertex cover problem using evolutionary algorithms, in: Proceedings of the Genetic and Evolutionary Computation Conference (GECCO), ACM, 2015, pp. 513--518.

\bibitem{pourhassan2016parameterized}
M.~Pourhassan, F.~Shi, F.~Neumann, Parameterized analysis of multi-objective evolutionary algorithms and the weighted vertex cover problem, in: Proceedings of the Conference on Parallel Problem Solving from Nature (PPSN), Springer, 2016, pp. 729--739.

\bibitem{pourhassan2017use}
M.~Pourhassan, T.~Friedrich, F.~Neumann, On the use of the dual formulation for minimum weighted vertex cover in evolutionary algorithms, in: Proceedings of the Workshop on Foundations of Genetic Algorithms (FOGA), ACM, 2017, pp. 37--44.

\bibitem{kumar2005running}
R.~Kumar, N.~Banerjee, Running time analysis of a multiobjective evolutionary algorithm on simple and hard problems, in: Proceedings of the Workshop on Foundations of Genetic Algorithms (FOGA), ACM, 2005, pp. 112--131.

\bibitem{he2014theoretical}
J.~He, B.~Mitavskiy, Y.~Zhou, A theoretical assessment of solution quality in evolutionary algorithms for the knapsack problem, in: Proceedings of the IEEE Congress on Evolutionary Computation (CEC), IEEE, 2014, pp. 141--148.

\bibitem{he2015analysis}
J.~He, Y.~Wang, Y.~Zhou, Analysis of solution quality of a multiobjective optimization-based evolutionary algorithm for knapsack problem, in: Proceedings of the European Conference on Evolutionary Computation in Combinatorial Optimization (EvoCOP), Springer, 2015, pp. 74--85.

\bibitem{wu2016impact}
J.~Wu, S.~Polyakovskiy, F.~Neumann, On the impact of the renting rate for the unconstrained nonlinear knapsack problem, in: Proceedings of the Genetic and Evolutionary Computation Conference (GECCO), ACM, 2016, pp. 413--419.

\bibitem{neumann2018runtime}
F.~Neumann, A.~M. Sutton, Runtime analysis of evolutionary algorithms for the knapsack problem with favorably correlated weights, in: Proceedings of the Conference on Parallel Problem Solving from Nature (PPSN), Springer, 2018, pp. 141--152.

\bibitem{roostapour2018performance}
V.~Roostapour, A.~Neumann, F.~Neumann, On the performance of baseline evolutionary algorithms on the dynamic knapsack problem, in: Proceedings of the Conference on Parallel Problem Solving from Nature (PPSN), Springer, 2018, pp. 158--169.

\bibitem{friedrich2020analysis}
T.~Friedrich, T.~K{\"o}tzing, J.~G. Lagodzinski, F.~Neumann, M.~Schirneck, Analysis of the (1+1) {EA} on subclasses of linear functions under uniform and linear constraints, Theoretical Computer Science 832 (2020) 3--19.

\bibitem{lissovoi2017runtime}
A.~Lissovoi, C.~Witt, A runtime analysis of parallel evolutionary algorithms in dynamic optimization, Algorithmica 78~(2) (2017) 641--659.

\bibitem{ShiSFKN19}
F.~Shi, M.~Schirneck, T.~Friedrich, T.~K{\"{o}}tzing, F.~Neumann, Reoptimization time analysis of evolutionary algorithms on linear functions under dynamic uniform constraints, Algorithmica 81~(2) (2019) 828--857.

\bibitem{neumann2020analysis}
F.~Neumann, M.~Pourhassan, V.~Roostapour, Analysis of evolutionary algorithms in dynamic and stochastic environments, Springer International Publishing, Cham, 2020, pp. 323--357.

\bibitem{ShiNW21}
F.~Shi, F.~Neumann, J.~Wang, Runtime performances of randomized search heuristics for the dynamic weighted vertex cover problem, Algorithmica 83~(4) (2021) 906--939.

\bibitem{roostapour2022pareto}
V.~Roostapour, A.~Neumann, F.~Neumann, T.~Friedrich, Pareto optimization for subset selection with dynamic cost constraints, Artificial Intelligence 302 (2022) 103597.

\bibitem{charnes1959chance}
A.~Charnes, W.~W. Cooper, Chance-constrained programming, Management science 6~(1) (1959) 73--79.

\bibitem{miller1965chance}
B.~L. Miller, H.~M. Wagner, Chance constrained programming with joint constraints, Operations Research 13~(6) (1965) 930--945.

\bibitem{iwamura1996genetic}
K.~Iwamura, B.~Liu, A genetic algorithm for chance constrained programming, Journal of Information and Optimization Sciences 17~(2) (1996) 409--422.

\bibitem{poojari2008genetic}
C.~A. Poojari, B.~Varghese, Genetic algorithm based technique for solving chance constrained problems, European journal of operational research 185~(3) (2008) 1128--1154.

\bibitem{li2008chance}
P.~Li, H.~Arellano-Garcia, G.~Wozny, Chance constrained programming approach to process optimization under uncertainty, Computers \& chemical engineering 32~(1-2) (2008) 25--45.

\bibitem{BLAZEWICZ198311}
J.~Blazewicz, J.~Lenstra, A.~Kan, Scheduling subject to resource constraints: Classification and complexity, Discrete Applied Mathematics 5~(1) (1983) 11--24.

\bibitem{lenstra1990approximation}
J.~K. Lenstra, D.~B. Shmoys, {\'E}.~Tardos, Approximation algorithms for scheduling unrelated parallel machines, Mathematical programming 46 (1990) 259--271.

\bibitem{shmoys1993approximation}
D.~B. Shmoys, {\'E}.~Tardos, An approximation algorithm for the generalized assignment problem, Mathematical programming 62~(1) (1993) 461--474.

\bibitem{witt2005worst}
C.~Witt, Worst-case and average-case approximations by simple randomized search heuristics, in: Annual Symposium on Theoretical Aspects of Computer Science, Springer, 2005, pp. 44--56.

\bibitem{graham1966bounds}
R.~L. Graham, Bounds for certain multiprocessing anomalies, Bell system technical journal 45~(9) (1966) 1563--1581.

\bibitem{sahni1976algorithms}
S.~K. Sahni, Algorithms for scheduling independent tasks, Journal of the ACM (JACM) 23~(1) (1976) 116--127.

\bibitem{hochbaum1987using}
D.~S. Hochbaum, D.~B. Shmoys, Using dual approximation algorithms for scheduling problems theoretical and practical results, Journal of the ACM (JACM) 34~(1) (1987) 144--162.

\bibitem{numata1988approximate}
K.~Numata, Approximate and exact algorithms for scheduling independent tasks on unrelated processors, Journal of the Operations Research Society of Japan 31~(1) (1988) 61--81.

\bibitem{gunia2005analysis}
C.~Gunia, On the analysis of the approximation capability of simple evolutionary algorithms for scheduling problems, in: Proceedings of the Genetic and Evolutionary Computation Conference (GECCO), ACM, 2005, pp. 571--578.

\bibitem{sutton2012parameterized}
A.~M. Sutton, F.~Neumann, A parameterized runtime analysis of simple evolutionary algorithms for makespan scheduling, in: Proceedings of the 12th International Conference on Parallel Problem Solving from Nature (PPSN), Springer, 2012, pp. 52--61.

\bibitem{neumann2015runtime}
F.~Neumann, C.~Witt, On the runtime of randomized local search and simple evolutionary algorithms for dynamic makespan scheduling, in: Proceedings of the 24th International Joint Conference on Artificial Intelligence (IJCAI), ACM, 2015, pp. 3742--3748.

\bibitem{xie2019evolutionary}
Y.~Xie, O.~Harper, H.~Assimi, A.~Neumann, F.~Neumann, Evolutionary algorithms for the chance-constrained knapsack problem, in: Proceedings of the Genetic and Evolutionary Computation Conference (GECCO), ACM, 2019, pp. 338--346.

\bibitem{neumann2019runtime}
F.~Neumann, A.~M. Sutton, Runtime analysis of the (1+1) evolutionary algorithm for the chance-constrained knapsack problem, in: Proceedings of the Workshop on on Foundations of Genetic Algorithms (FOGA), ACM, 2019, pp. 147--153.

\bibitem{xie2021runtime}
Y.~Xie, A.~Neumann, F.~Neumann, A.~M. Sutton, Runtime analysis of {RLS} and the (1+ 1) {EA} for the chance-constrained knapsack problem with correlated uniform weights, in: Proceedings of the Genetic and Evolutionary Computation Conference (GECCO), ACM, 2021, pp. 1187--1194.

\bibitem{neumann2020optimising}
A.~Neumann, F.~Neumann, Optimising monotone chance-constrained submodular functions using evolutionary multi-objective algorithms, in: Proceedings of the 16th International Conference on Parallel Problem Solving from Nature (PPSN), Springer, 2020, pp. 404--417.

\bibitem{doerr2020optimization}
B.~Doerr, C.~Doerr, A.~Neumann, F.~Neumann, A.~Sutton, Optimization of chance-constrained submodular functions, in: Proceedings of the AAAI Conference on Artificial Intelligence, Vol.~34, ACM, 2020, pp. 1460--1467.

\bibitem{xie2020specific}
Y.~Xie, A.~Neumann, F.~Neumann, Specific single- and multi-objective evolutionary algorithms for the chance-constrained knapsack problem, in: Proceedings of the Genetic and Evolutionary Computation Conference (GECCO), ACM, 2020, pp. 271--279.

\bibitem{doerr2017fast}
B.~Doerr, H.~P. Le, R.~Makhmara, T.~D. Nguyen, Fast genetic algorithms, in: Proceedings of the genetic and evolutionary computation conference, 2017, pp. 777--784.

\bibitem{assimi2020evolutionary}
H.~Assimi, O.~Harper, Y.~Xie, A.~Neumann, F.~Neumann, Evolutionary bi-objective optimization for the dynamic chance-constrained knapsack problem based on tail bound objectives, in: Proceedings of the 24th European Conference on Artificial Intelligence (ECAI), 2020, pp. 307--314.

\bibitem{NeumannXN22}
A.~Neumann, Y.~Xie, F.~Neumann, Evolutionary algorithms for limiting the effect of uncertainty for the knapsack problem with stochastic profits, in: Proceedings of the Conference on Parallel Problem Solving from Nature (PPSN), Springer, 2022, pp. 294--307.

\bibitem{DBLP:conf/ppsn/ShiYN22}
F.~Shi, X.~Yan, F.~Neumann, Runtime analysis of simple evolutionary algorithms for the chance-constrained makespan scheduling problem, in: Proceedings of the Conference on Parallel Problem Solving from Nature (PPSN), Springer, 2022, pp. 526--541.

\bibitem{neumann2006minimum}
F.~Neumann, I.~Wegener, Minimum spanning trees made easier via multi-objective optimization, Natural Computing 5~(3) (2006) 305--319.

\bibitem{he2004study}
J.~He, X.~Yao, A study of drift analysis for estimating computation time of evolutionary algorithms, Natural Computing 3 (2004) 21--35.

\bibitem{hayes2002computing}
B.~Hayes, Computing science: The easiest hard problem, American Scientist 90~(2) (2002) 113--117.

\end{thebibliography}





\end{document}